\documentclass{article}

\usepackage{microtype}
\usepackage{graphicx}
\usepackage{subcaption}
\usepackage{booktabs} 
\usepackage{makecell}
\usepackage{placeins}
\usepackage{hyperref}

\usepackage[accepted]{icml2026}

\usepackage{amsmath}
\usepackage{amssymb}
\usepackage{mathtools}
\usepackage{amsthm}
\usepackage{bm}
\usepackage{url}            
\usepackage{booktabs}       
\usepackage{amsfonts}       
\usepackage{nicefrac}       
\usepackage{microtype}      
\usepackage{graphicx}
\usepackage{bbm}

\usepackage[capitalize,noabbrev]{cleveref}

\theoremstyle{plain}
\newtheorem{theorem}{Theorem}[section]
\newtheorem{proposition}[theorem]{Proposition}

\theoremstyle{definition}

\theoremstyle{remark}

\usepackage[textsize=tiny]{todonotes}

\icmltitlerunning{Active Continual Learning with Metaplastic Binary Bayesian Neural
Networks}

\begin{document}

\twocolumn[
\icmltitle{Active Continual Learning with Metaplastic Binary Bayesian Neural
Networks}




\begin{icmlauthorlist}
\icmlauthor{Kellian Cottart}{yyy}
\icmlauthor{Théo Ballet}{yyy}
\icmlauthor{Djohan Bonnet}{yyy,xxx}
\icmlauthor{Damien Querlioz}{yyy}
\end{icmlauthorlist}

\icmlaffiliation{yyy}{Universit{\'e} Paris-Saclay, CNRS, Centre de Nanosciences et de Nanotechnologies, Palaiseau, France}
\icmlaffiliation{xxx}{Forschungszentrum J{\"u}lich, Germany}

\icmlcorrespondingauthor{Damien Querlioz}{damien.querlioz@universite-paris-saclay.fr}

\icmlkeywords{Machine Learning, ICML, Bayesian, Variational inference, Binary, Metaplasticity, Continual Learning, Online Learning, Active Learning}

\vskip 0.3in
]



\printAffiliationsAndNotice{}  

\begin{abstract}
Always-on edge systems must keep learning as conditions change under tight compute budgets and must detect unreliable predictions. Bayesian binary neural networks are attractive in this setting, but mean-field Bernoulli posteriors can saturate on long non-stationary streams, wiping out epistemic uncertainty and freezing plasticity. We propose BiMU, derived from a bounded-memory variational objective that balances stability, plasticity, and forgetting. BiMU combines a data term with controlled relaxation toward the prior and an uncertainty-dependent step size that prevents saturation and sustains informative uncertainty. This non-degenerate posterior enables fully online, buffer-free active querying via Monte Carlo disagreement, reducing label queries and backpropagation updates under imbalance. BiMU sustains learning and strong OOD detection on 1000-tasks Permuted-MNIST, and on OpenLORIS-Object achieves up to 32× label/update savings at matched accuracy under class imbalance and feature compression.
\end{abstract}


\section{Introduction}

Always-on edge systems face a core tension: they must run inference continuously under tight energy budgets, yet still adapt online as conditions drift (new users, sensor aging, changing environments) and detect when predictions are unreliable (out-of-distribution inputs, rare events). Binary neural networks (BiNNs)~\cite{courbariaux2016binarized} are a natural fit: constraining activations and weights to $\{-1,+1\}$ reduces memory traffic and replaces multiply-accumulate operations with cheap bitwise arithmetic, which is critical when inference dominates an always-on device’s lifetime compute. Bayesian BiNNs~\cite{meng2020training,khan2023bayesian} are even more attractive because they quantify epistemic uncertainty beyond point predictions, enabling reliability monitoring and OOD detection in safety- or mission-critical deployments.

However, Bayesian BiNNs trained with mean-field Bernoulli posteriors face a critical failure mode on long, non-stationary streams: the Bernoulli parameters saturate toward $0$ or $1$ as evidence accumulates. Equivalently, the natural parameters grow in magnitude, posterior samples become nearly deterministic, epistemic uncertainty collapses, and weight sign changes become exceedingly unlikely. This \emph{posterior saturation} 
(often observed as ``frozen synapses'')
has been reported in continual-learning practice for binary networks~\citep{laborieux2021synaptic} and prevents sustained online adaptation. At the edge, this issue is compounded by two practical constraints: weight updates are expensive, and data streams are strongly imbalanced, so most samples are redundant while rare events and distribution shifts carry most of the learning signal.

In this work, we introduce \textbf{Binary Metaplasticity from Uncertainty (BiMU)}, a Bayesian continual-learning rule for mean-field Bernoulli synapses that prevents posterior degeneracy while maintaining usable epistemic uncertainty over long streams.  BiMU is derived from the bounded-memory Bayesian learning-and-forgetting objective of~\cite{bonnet2025bayesian}, and we specialize it here to Bernoulli synapses to avoid saturation in binary posteriors. The update combines (i) a data-driven gradient term with (ii) an uncertainty-gated relaxation toward the prior that counteracts saturation, together with (iii) a bounded, state-dependent (metaplastic) step size that stabilizes consolidated synapses while keeping uncertain ones plastic. This yields a fully online, buffer-free Bayesian update for continual learning without task-boundary signals.

Preserving epistemic uncertainty is not only diagnostic: it enables fully online active continual learning. BiMU supports cheap Monte Carlo sampling of binary weights, and we use a one-pass threshold rule: an incoming example is labeled (and triggers backpropagation) only if a disagreement score exceeds a fixed threshold. This directly reduces label requests and parameter updates and is particularly effective under class imbalance, where most samples are uninformative, but rare events are critical. More broadly, BiMU treats epistemic uncertainty as a finite resource that must be preserved to support future adaptation.

We evaluate BiMU in three regimes: (i) 1000-tasks Permuted-MNIST to stress long-horizon non-stationarity and OOD detection; (ii) OpenLORIS-Object continual recognition under distribution shifts with a frozen ImageNet feature extractor to isolate online adaptation under constrained trainable capacity; and (iii) imbalanced streaming scenarios with online querying to quantify label/update efficiency, including per-class effects.

\textbf{Contributions.}
\begin{itemize}
    \item We derive BiMU: 
    a bounded-memory variational update for mean-field Bernoulli synapses (Eq.~\eqref{eq:bimu-update}) with uncertainty-gated relaxation and a bounded metaplastic step size (Eq.~\eqref{eq:eta_bimu}) that prevents posterior saturation.
    \item We couple BiMU with one-pass Monte Carlo disagreement to perform buffer-free online querying, so only informative samples are labeled and updated, reducing annotation and backprop cost under imbalance.
    \item We demonstrate sustained learning and useful uncertainty (OOD detection and querying) on 1000-tasks Permuted-MNIST,  and OpenLORIS-Object with frozen/compressed features, achieving 32$\times$ fewer label/update steps at matched accuracy.
\end{itemize}

\section{Related work}

\subsection{Bayesian inference, uncertainty, and continual learning}
Bayesian neural networks (BNNs) treat weights as random variables and quantify epistemic uncertainty through a posterior over parameters. In practice, BNNs are commonly trained with variational inference (VI), approximating the posterior with a tractable distribution $q(\bm{\omega})$ by minimizing a KL divergence or maximizing an ELBO~\citep{jordan1999introduction,graves2011practical,kingma2015variational,blundell2015weight}. This provides uncertainty estimates that are useful for reliability monitoring and OOD detection~\citep{kendall2017uncertainties,gal2017deep}.

Continual learning (CL) considers sequentially arriving data where past samples are typically unavailable, so na\"ive fine-tuning leads to catastrophic forgetting. A classical Bayesian approach reuses the previous approximate posterior as the next prior, yielding online/streaming VI updates~\citep{broderick2013streaming,nguyen2018variational,zeno2021task}. Regularization-based CL methods can also be interpreted through a Bayesian lens: Elastic Weight Consolidation (EWC) constrains important parameters via a Fisher-based quadratic penalty~\citep{pascanu2013revisiting, kirkpatrick2017overcoming}, and Synaptic Intelligence (SI) accumulates parameter-importance scores online~\citep{zenke2017continual}. Online variants (e.g., Online EWC) introduce discounting to limit unbounded accumulation of importance~\citep{schwarz2018progress}. However, over very long horizons these methods can still become increasingly rigid as evidence/importance accumulates and plasticity decreases. Moreover, many such approaches require storing additional per-parameter state (e.g., Fisher/importance and/or reference parameters), which is non-trivial under tight edge-memory budgets.

\subsection{Bounded-memory Bayesian learning and controlled forgetting}
A direct way to prevent long-horizon rigidity is to explicitly bound the amount of retained information. Bayesian learning and forgetting~\citep{bonnet2025bayesian} targets a bounded-memory posterior that adds evidence from the current batch while removing the contribution of the oldest batch in an effective window of size $N$:
\begin{align}\label{eq:bayesian-forgetting}
p(\bm{\omega}|\mathcal{D}_{t-N:t})
&=
\frac{p(\mathcal{D}_t|\bm{\omega})\,p(\bm{\omega}|\mathcal{D}_{t-N-1:t-1})}{p(\mathcal{D}_t)} \notag\\
&\quad\times
\frac{p(\mathcal{D}_{t-N-1})}{p(\mathcal{D}_{t-N-1}|\bm{\omega})}.
\end{align}
This yields controlled information decay with memory independent of stream length and provides an explicit stability-plasticity-forgetting trade-off. In the Gaussian case, this principle leads to MESU~\citep{bonnet2025bayesian}, which sustains adaptation while preserving meaningful uncertainty over long streams.

\subsection{Bayesian binary networks and posterior saturation}
Our work builds on the same bounded-memory principle as MESU but targets binary weights with mean-field Bernoulli posteriors. In this setting, a key long-horizon failure mode is \emph{Bernoulli posterior saturation}: probabilities collapse toward $0$ or $1$, destroying epistemic uncertainty and making weight sign changes exceedingly unlikely. This freezing effect is observed in continual-learning practice for binary networks~\citep{laborieux2021synaptic} and also arises when applying mean-field Bernoulli BNN training rules (e.g., BayesBiNN~\citep{meng2020training,khan2023bayesian}) na\"ively in long non-stationary streams without explicit forgetting.

\subsection{Active learning under streaming and continual shift}
Active learning (AL) aims to reduce annotation and update cost by querying informative samples, which is particularly valuable under class imbalance and rare-event streams~\citep{mackay1996bayesian,khan2019striking,ngartera2024application}. Bayesian models naturally support AL through uncertainty-based acquisition functions such as mutual information (BALD)~\citep{houlsby2011bayesian} or predictive entropy~\citep{shannon1948mathematical}. However, much of the AL literature assumes pool-based storage and ranking of unlabeled data, which is a poor match for always-on streaming systems with strict memory and latency constraints.

Fully online AL instead uses one-pass decision rules that query immediately based on an uncertainty score~\citep{liu2015efficient,rajendran2023stream}. Disagreement-based measures such as the variation ratio~\citep{freeman1965elementary} are attractive in this setting because they depend only on Monte Carlo predicted labels, avoiding information-theoretic computations given the same MC predictions. Existing continual AL methods often add replay buffers, task-boundary assumptions, or consolidation state~\citep{perkonigg2021continual, ash2021gone, das2023continual,vu2024active,park2025active}, increasing overhead.

Taken together, these works highlight two requirements for practical online continual AL with binary posteriors: (i) preventing Bernoulli posterior saturation so epistemic uncertainty remains informative over long horizons, and (ii) exposing a cheap uncertainty signal that supports one-pass thresholding without buffering. BiMU addresses (i) through bounded-memory Bernoulli updates and metaplastic step sizing, and leverages (ii) via Monte Carlo disagreement thresholding to reduce both label queries and backpropagation updates in streaming continual learning.

\section{Forgetting in binary Bayesian neural networks}

\subsection{Bernoulli variational parameterization}

We approximate the posterior over $s$ binary weights $\bm{\omega}\in\{-1,+1\}^s$ with a mean-field Bernoulli distribution parameterized by natural parameters $\bm{\lambda}\in\mathbb{R}^s$. For each synapse $i\in [1,\ldots, s]$,
\begin{equation}
\omega^{(i)} \sim 2\,\mathrm{Ber}\!\left(\sigma(2\lambda^{(i)})\right) - 1.
\end{equation}
Equivalently (Prop.~\ref{prop:exp-family-bernoulli}),
\begin{equation}
q(\bm{\omega}|\bm{\lambda})
=
\prod_{i=1}^{s}
\frac{\exp\!\left(\lambda^{(i)}\omega^{(i)}\right)}{2\cosh\!\left(\lambda^{(i)}\right)}.
\label{eq:bernoulli-mean-field}
\end{equation}
Here $\lambda^{(i)}=0$ corresponds to maximal uncertainty $p(\omega^{(i)}=+1)=\nicefrac12$, while large $|\lambda^{(i)}|$ yields near-deterministic weights.
We denote by $\bm{\lambda}_{\mathrm{prior}}$ the natural parameters of the initialization prior $p(\bm{\omega})$.

\subsection{Binary Metaplasticity from Uncertainty}

At time step $t$ (one online update per batch $\mathcal{D}_t$), we seek an approximate posterior that retains only a bounded amount of information from the past, corresponding to a sliding window of roughly the last $N$ updates. Following the controlled-forgetting posterior (Eq.~\eqref{eq:bayesian-forgetting}), we compute $\bm{\lambda}_t$ by variational projection:
\begin{equation}
\bm{\lambda}_{t}
=
\arg \min_{\bm{\lambda}}
\mathcal{D}_{KL}\!\left(q(\bm{\omega}|\bm{\lambda}) \,\Vert\, p(\bm{\omega}\mid\mathcal{D}_{t-N:t})\right).
\label{eq:minimization}
\end{equation}

Using the variational decomposition (Prop.~\ref{prop:vfe-decomposition}), the objective is the sum of a data term $\mathcal{L}(\bm{\lambda},\mathcal{D}_t)$ and KL regularizers. Under the mean-field Bernoulli approximation, the KL terms decouple across synapses (while the data term still couples parameters through the network). This yields a coordinate-wise update for each $\lambda^{(i)}$:
\begin{align}
\lambda^{(i)}_t
&=\arg\min_{\lambda^{(i)}} \Big[
\mathcal{L}(\bm{\lambda},\mathcal{D}_t)
+ \tfrac{1}{N}\mathcal{D}_{KL}\!\left(q(\omega^{(i)}|\lambda^{(i)})\Vert p(\omega^{(i)})\right) \notag\\
&+ \left(1-\tfrac{1}{N}\right)\mathcal{D}_{KL}\!\left(q(\omega^{(i)}|\lambda^{(i)})\Vert q(\omega^{(i)}|\lambda^{(i)}_{t-1})\right)
\Big].
\label{eq:minimization-synapse}
\end{align}

This form makes the roles explicit (Fig.~\ref{fig:figure-1-binary}): the data term drives \emph{plasticity}, the KL-to-previous term provides \emph{stability}, and the KL-to-prior term implements \emph{controlled forgetting}, preventing unbounded evidence accumulation in long streams.

The window size $N$ should be interpreted as a forgetting horizon:
smaller $N$ relaxes faster toward the prior, while larger $N$ approaches cumulative learning and can induce rigidity (Appendix~\ref{appendix:pmnist-n-ablation-study}).
 
\begin{figure}[ht]
\vskip 0.1in
\begin{center}
\centerline{\includegraphics[width=\columnwidth]{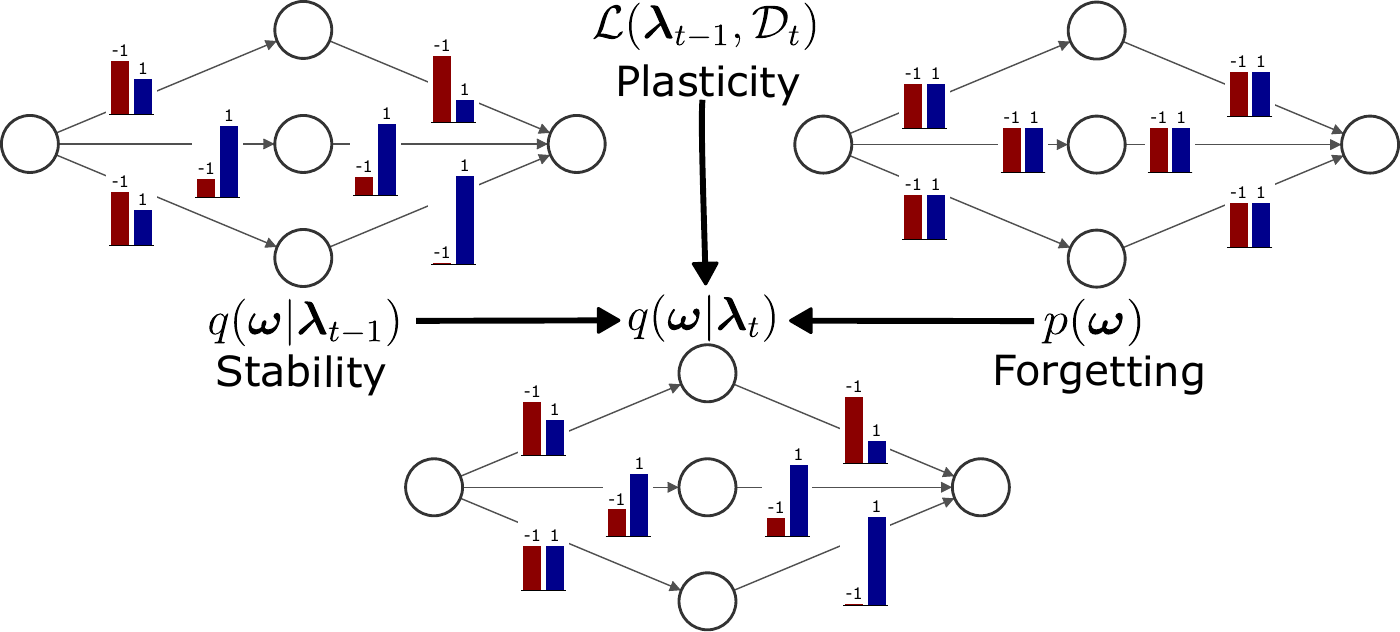}}
\caption{Schematic of the BiMU update. The next variational state is shaped jointly by the current loss (plasticity), the previous posterior (stability), and the prior (forgetting).  Bars show Bernoulli probabilities for $\omega \in \{-1, +1\}$.}
\label{fig:figure-1-binary}
\end{center}
\end{figure}

Eqs.~\eqref{eq:bayesian-forgetting}-\eqref{eq:bernoulli-mean-field} constitute the background that allows deriving BiMU.
Combining the closed-form Bernoulli KL (Prop.~\ref{prop:kl-bernoulli}) with a second-order expansion around $\bm{\lambda}_{t-1}$ yields a per-synapse update (Theorem~\ref{thm:asymmetric-second-order}):
\begin{equation}
\lambda^{(i)}_t
=
\lambda^{(i)}_{t-1}
-
\eta(\lambda^{(i)}_{t-1})
\left[
\frac{\partial \mathcal{L}}{\partial \lambda^{(i)}}\Big|_{\bm{\lambda}_{t-1}}
+
\frac{\lambda^{(i)}_{t-1}-\lambda^{(i)}_{\mathrm{prior}}}
{N\,\cosh^2(\lambda^{(i)}_{t-1})}
\right].
\label{eq:bimu-update}
\end{equation}
The gradient term $\partial \mathcal{L}/\partial \lambda^{(i)}$ implements \emph{plasticity}. The second term implements \emph{controlled forgetting} via a relaxation toward the prior. 
This relaxation term is uncertainty-gated:
since $\mathrm{Var}_q(\omega)=1-\tanh^2(\lambda)=\cosh^{-2}(\lambda)$, the pull toward the prior is strongest for uncertain synapses and vanishes as $|\lambda|\to\infty$.
Accordingly, the mechanism primarily prevents runaway growth of $|\lambda|$ that would otherwise drive Bernoulli probabilities toward $0$ or $1$ and collapse epistemic uncertainty.

The \emph{stability} mechanism appears in the metaplastic step size $\eta(\cdot)$.
In the exact second-order update (Theorem~\ref{thm:asymmetric-second-order}), $\eta(\cdot)$ depends on a curvature term involving $\partial^2\mathcal{L}/\partial(\lambda^{(i)})^2$. Estimating this curvature online is challenging for Bernoulli synapses: unlike mean-field Gaussians, Bernoulli posteriors do not expose a continuous dispersion parameter that enables MESU’s Stein-based diagonal-Hessian estimator, and naive Fisher/Hessian approximations can be unstable in single-pass regimes. We therefore avoid explicit curvature estimation and use a bounded surrogate that preserves the qualitative effect of curvature while guaranteeing positivity and a maximum step size (Prop.~\ref{prop:bounded-learning-rate}). Concretely, we use the element-wise learning rate $\eta(\bm{\lambda}_{t-1})$ defined by:
\begin{align}
\frac{1}{\eta(\lambda^{(i)}_{t-1})}
&=
\frac{1}{\cosh^2(\lambda^{(i)}_{t-1})}
+
2\tanh(\lambda^{(i)}_{t-1})
\frac{\partial \mathcal{L}}{\partial \lambda^{(i)}}\Big|_{\bm{\lambda}_{t-1}} \notag\\
&+
\frac{1}{\alpha_{\max}}
+
2\Big|
\frac{\partial \mathcal{L}}{\partial \lambda^{(i)}}\Big|_{\bm{\lambda}_{t-1}}
\Big|,
\label{eq:eta_bimu}
\end{align}
The gradient-dependent terms make the step size \emph{asymmetric}: consolidation steps that reinforce the current synaptic sign are full, whereas de-consolidation steps that push against the current sign are inhibited. This effect is seen in Fig.~\ref{fig:learning-rate-cm}. 
Three regimes emerge: 
 \textbf{(i) Uncertain synapses ($\lambda^{(i)}\approx 0$):} updates are conservatively bounded. 
 \textbf{(ii) Consolidation ($\lambda^{(i)} g^{(i)}<0$):} consistent gradients reinforce the 
current weight sign and $\eta$ approaches its upper bound, enabling fast consolidation. 
 \textbf{(iii) De-consolidation ($\lambda^{(i)} g^{(i)}>0$):} gradients opposing the current sign 
 reduce $|\lambda^{(i)}|$, but $\eta$ shrinks, making sign changes difficult unless such gradients persist.
This consolidation/de-consolidation asymmetry is consistent with the qualitative metaplastic behavior reported in \citet{laborieux2021synaptic}, but here it emerges directly from the Bayesian formulation and the bounded learning-rate construction.

Together, Eqs.~\eqref{eq:bimu-update}-\eqref{eq:eta_bimu} define \textbf{BiMU}: a fully online update that mitigates Bernoulli posterior saturation while preserving actionable epistemic uncertainty over long, non-stationary streams.

\begin{figure}[ht!]
\vskip 0.1in
\begin{center}
\centerline{\includegraphics[width=1\columnwidth]{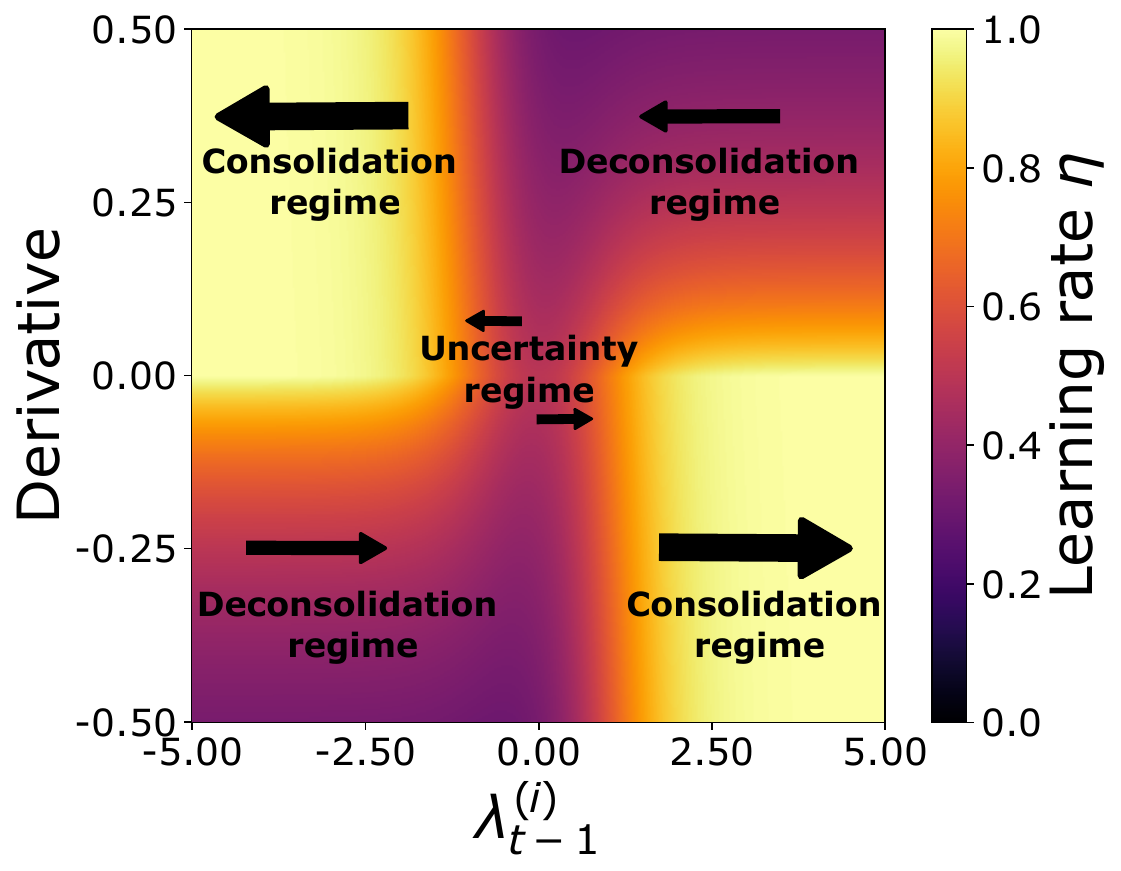}}
\caption{Color map of the learning rate $\eta$ ($\alpha_{\mathrm{max}} = 1$) as a function of the synaptic state $\lambda_{t-1}^{(i)}$ and the derivative $\partial \mathcal{L}/\partial \lambda^{(i)}$.  }
~\label{fig:learning-rate-cm}
\end{center}
\end{figure}

\subsection{Uncertainty Estimation and Online Querying}

BiMU can use posterior uncertainty as a control signal for fully online active continual learning: an example is labeled and triggers an update only when the current posterior has epistemic disagreement. Since $q(\bm{\omega}|\bm{\lambda})$ is Bernoulli, uncertainty can be estimated by sampling binary networks and running Monte Carlo (MC) forward passes, with no buffer or revisiting past data.

For an incoming (possibly unlabeled) example $x_j$, we draw $K$ independent weight samples
$\{\bm{\omega}_k\}_{k=1}^K \sim q(\bm{\omega}|\bm{\lambda})$
and compute predictive distributions
$\mathcal{V}_j
=
\left\{
p(y | x_j, \bm{\omega}_k)
\right\}_{k=1}^K .$
Given $\mathcal{V}_j$, we compute an uncertainty score $u(\mathcal{V}_j)\in\mathbb{R}$ on-the-fly and use it for one-pass querying decisions.

\textbf{Predictive uncertainty decomposition.} In Bayesian predictive models, epistemic uncertainty can be quantified as the mutual information between predictions and parameters~\citep{gal2017deep},
\begin{equation}
\label{eq:mi}
\mathcal{I}(\bm{\omega}, y | x_j)
=
H\!\left[p(y | x_j)\right]
-
\mathbb{E}_{q(\bm{\omega}|\bm{\lambda})}
\!\left[
H\!\left[p(y | x_j, \bm{\omega})\right]
\right],
\end{equation}
($H(\cdot)$ denotes Shannon entropy.), i.e., predictive uncertainty minus expected predictive entropy (aleatoric).
We use these scores primarily for analysis and  OOD detection. For online querying, we prefer a cheaper disagreement score.

\textbf{Variation ratio.} We adopt the \emph{variation ratio} (VR)~\citep{freeman1965elementary}, which measures disagreement among $K$ Monte Carlo predictors. Let $\hat{y}_k=\arg\max_{c\in\mathcal{C}} p(y_c| x_j,\bm{\omega}_k)$ be the predicted class under sample $k$, and let $f_{\text{mode}}=\max_{c\in\mathcal{C}}\sum_{k=1}^K \mathbbm{1}[\hat{y}_k=c]$ be the number of samples predicting the modal class. We define $\mathrm{VR}=1-f_{\text{mode}}/K$. 

When labels are available, we also report the oracle diagnostic $\mathrm{VR\text{-}True}=1-\frac{1}{K}\sum_{k=1}^K \mathbbm{1}[\hat{y}_k=y_j]$.

The unlabeled scores above (VR, predictive uncertainty, aleatoric uncertainty, and epistemic uncertainty) are computed before observing $y_j$. Therefore, if the input distribution remains familiar but the task function changes, i.e., $p(y|x)$ shifts while $p(x)$ does not, unlabeled scores may fail to trigger a query because the input does not appear out-of-distribution.
VR-True is particularly useful for diagnosing and adapting to such pure labeling-function shifts, but it requires labels for the corresponding training samples.

\textbf{One-pass threshold querying.} At time $t$, we query the label of $x_j$ iff
$
u(\mathcal{V}_j) \ge \tau,
$
with a fixed threshold $\tau\in\mathbb{R}$. 
Only queried samples trigger backpropagation and a BiMU update. Computing the query score requires $K$ Monte Carlo forward passes, but these are particularly cheap in binary networks (bit-level sampling and low-cost binary arithmetic), whereas backpropagation and weight writes dominate on-device training cost. 

\section{Experiments}

We evaluate BiMU along three axes: \emph{long-horizon plasticity} under sustained non-stationarity without task-boundary signals;  \emph{uncertainty quality} for  OOD detection; and  \emph{label/update efficiency} via  online, buffer-free active querying. Methodological details, including the hyperparameter tuning procedure, are provided in Appendix~\ref{appendix:methodology}.
For BiMU, the main tuned hyperparameters are the memory window $N$, the maximum metaplastic step size $\alpha_{\max}$, and the scaling coefficients of the likelihood and KL terms; for active learning, the query threshold $\tau$ controls the realized label/update budget. 

\subsection{Gradient estimation for Bernoulli synapses}
BiMU provides a closed-form update for the stability and forgetting terms in the natural parameters $\bm{\lambda}$ (Eqs.~\eqref{eq:bimu-update}-\eqref{eq:eta_bimu}). In all experiments, we estimate stochastically $\nabla_{\bm{\lambda}} \mathcal{L}(\bm{\lambda},\mathcal{D}_t)$ using a differentiable Bernoulli relaxation (Concrete / Gumbel-softmax reparameterization)~\cite{maddison2017concrete}, following Algorithm~\ref{alg:metaplasticity} (Appendix~\ref{appendix:alg-bimu}). Concretely, we draw relaxed weight samples, backpropagate through the relaxation, and average gradients over $K$ MC samples.

\subsection{1000-tasks Permuted-MNIST: long-horizon continual learning}~\label{sec:pmnist-100neurons}

We stress-test long-horizon online continual learning on 1000-tasks Permuted-MNIST, where each task applies a fixed random pixel permutation to MNIST. Training is strictly online (one epoch per task; batch size $1$; no replay). Methods that require explicit task-boundary signals (Table~\ref{tab:pmnist-results}, \emph{Task bounds} = yes) are provided the task boundaries; BiMU and all \emph{Task bounds} = no baselines are run without them.

We use a compact MLP with one hidden layer of 100 units to emphasize plasticity limits in a low-capacity binary regime. As a sanity check, the last column of Table~\ref{tab:pmnist-results} reports \emph{single-task} accuracy. In this setting, BiMU and BayesBiNN~\cite{meng2020training,khan2023bayesian} perform best, ahead of STE~\cite{courbariaux2016binarized} and Synaptic Metaplasticity~\cite{laborieux2021synaptic}, confirming that Bayesian binary training is already competitive in this small-network regime.

\begin{table}[htbp!]
\caption{1000-tasks Permuted-MNIST (online; 100-unit MLP). Mean accuracy over the last 5 tasks; OOD AUC (Permuted MNIST vs.\ Fashion-MNIST); MMRR; Single-task accuracy. Mean$\pm$std over 5 runs.}
\label{tab:pmnist-results}
\setlength{\tabcolsep}{1.6pt}
\vskip 0.15in
\begin{center}
\begin{scriptsize}
\begin{sc}
\begin{tabular}{lccccc}
\toprule
\textbf{Method} &
\makecell[c]{\textbf{Task}\\\textbf{bounds}} &
\makecell[c]{\textbf{MEAN ACC.}\\\textbf{5 tasks (\%)}} &
\makecell[c]{\textbf{OOD det.} \\\textbf{(AUC)}} &
\makecell[c]{\textbf{MMRR}} &
\makecell[c]{\textbf{Acc.} \\\textbf{1 task (\%)}} \\

\midrule
\multicolumn{6}{c}{\textit{Binary neural networks}} \\
\midrule
BiMU & no &
$\textbf{90.30} \pm \textbf{0.38}$ &
$\textbf{0.99} \pm \textbf{0.00}$ &
$\textbf{139.47}$ &
$\textbf{94.67} \pm \textbf{0.11}$ \\

\cmidrule(lr){1-1}
\makecell[l]{BayesBiNN} & yes &
$41.12 \pm 1.62$ &
$0.57 \pm 0.12$ &
$2.04$ &
$93.22 \pm 0.09$ \\

\cmidrule(lr){1-1}
\makecell[l]{Syn. Meta.} & yes &
$10.27 \pm 0.01$ &
- &
$1.64$ &
$71.40 \pm 1.48$ \\

\cmidrule(lr){1-1}
\makecell[l]{STE} & no &
$29.35 \pm 0.96$ &
$0.69 \pm 0.04$ &
$9.32$ &
$77.56 \pm 1.35$ \\

\midrule
\multicolumn{6}{c}{\textit{Real-valued neural networks}} \\
\midrule
MESU & no &
$\textbf{91.69} \pm \textbf{0.58}$ &
$\textbf{0.95} \pm \textbf{0.03}$ &
$\textbf{261.10}$ &
$\textbf{96.10} \pm \textbf{0.18}$ \\

\cmidrule(lr){1-1}
\makecell[l]{EWC O.} & yes &
$81.78 \pm 0.82$ &
$0.66 \pm 0.11$ &
$6.63$ &
$96.06 \pm 0.11$ \\

\cmidrule(lr){1-1}
\makecell[l]{SI} & yes &
$74.41 \pm 1.19$ &
$0.66 \pm 0.17$ &
$5.11$ &
$95.55 \pm 0.28$ \\

\cmidrule(lr){1-1}
\makecell[l]{SGD} & no &
$66.64 \pm 2.70$ &
$0.87 \pm 0.05$ &
$43.50$ &
$96.03 \pm 0.34$ \\

\bottomrule
\end{tabular}
\end{sc}
\end{scriptsize}
\end{center}
\vskip -0.1in
\end{table}

After training on all 1000 tasks, we report three quantities (Table~\ref{tab:pmnist-results}). (i) \emph{MEAN ACC. (last 5 tasks)} averages test accuracy over the final five tasks and directly measures late-stream adaptability (ability to keep learning after a very long sequence of shifts). (ii) \emph{MMRR} (Appendix~\ref{appendix:mmrr}) summarizes rigidity accumulation by comparing late-stream performance to each method's best observed per-task performance over the stream. (iii) \emph{OOD AUC} evaluates uncertainty-based discrimination between in-distribution Permuted-MNIST and Fashion-MNIST (see Appendix~\ref{appendix:methodology} for the uncertainty score used per method).
Appendix~\ref{appendix:bwt} also reports backward transfer (BWT) and explains why late-stream accuracy and MMRR are more informative than backward transfer for diagnosing loss of plasticity in very long non-stationary streams.

Among binary-weight methods, BiMU is the only approach that remains accurate at the end of the 1000-tasks stream: it reaches $90.30\%$ mean accuracy on the last five tasks, whereas BayesBiNN, Synaptic Metaplasticity, and STE drop to $41.12\%$, $10.27\%$, and $29.35\%$, respectively. BiMU also has the highest MMRR among binary methods, indicating far less rigidity accumulation. 

The gap to BayesBiNN reflects two mechanisms. First, BiMU implements bounded-memory forgetting. Second, BiMU uses the uncertainty-dependent metaplastic step size in Eq.~\eqref{eq:eta_bimu}, which changes how evidence is consolidated or de-consolidated online. The $N$ ablation supports this distinction: even when $N=100{,}000$, where BiMU approaches cumulative learning, it remains above BayesBiNN in the 100-tasks study ($83.70\%$ vs.\ $67.91\%$ accuracy, $0.94$ vs.\ $0.75$ OOD AUC, MMRR $13.13$ vs.\ $5.03$; Appendix~\ref{appendix:pmnist-n-ablation-study}).
The same ablation study shows that the results of BiMU remain strong over a broad range of $N$ values.  $N$ acts as an interpretable stability-plasticity-forgetting control parameter, analogous to a replay size or discount factor.

STE, lacking a continual-learning mechanism, suffers both catastrophic forgetting and some rigidity from latent real-valued weight divergence over long training. Synaptic Metaplasticity has the strongest rigidity of all tested methods, by construction, due to its nearly irreversible metaplasticity mechanism.

Among real-valued baselines, MESU achieves the best late-stream accuracy ($91.69\%$), narrowly above BiMU, but relies on a higher-dimensional posterior parameterization $(\bm{\mu},\bm{\sigma})$. EWC Online and Synaptic Intelligence reach intermediate performance ($81.78\%$ and $74.41\%$), while SGD degrades substantially at the tail of the stream ($66.64\%$).

Finally, BiMU yields the strongest OOD detection ($0.99$ AUC), ahead of MESU ($0.95$) and SGD ($0.87$). We do not report OOD AUC for Synaptic Metaplasticity because its near-chance late-stream accuracy makes its predictive uncertainty uninformative for in-vs-out discrimination in this setting. We further find that the Reverse Binary Gate activation sharpens disagreement between posterior samples on out-of-distribution inputs, improving OOD separability (Appendix~\ref{appendix:gate-function}). BiMU's gains persist with increased network capacity (Appendix~\ref{appendix:pmnist-2000neurons}) while retaining a minimal, constant training-state memory footprint (Appendix~\ref{appendix:memory-footprint}).

\subsection{OpenLORIS-Object: lifelong learning under nuisance-factor shifts}~\label{sec:open-loris}

\begin{table}[t]
\caption{OpenLORIS-Object (online; linear head on frozen VGG19). Mean accuracy over 12 tasks and OOD AUC (held-out \texttt{toy}); results for 1{,}024/8{,}192/25{,}088 features. Mean$\pm$std over 5 runs.}
\label{tab:openloris-features-rows}
\setlength{\tabcolsep}{3pt}
\vskip 0.15in
\begin{center}
\begin{scriptsize}
\begin{sc}
\begin{tabular}{lcccc}
\toprule
\textbf{Method} & \textbf{Features} &
\makecell[c]{\textbf{MEAN ACC.} \\\textbf{(\%)}} &
\makecell[c]{\textbf{Aleatoric} \\\textbf{(AUC)}}  &
\makecell[c]{\textbf{Epistemic} \\\textbf{(AUC)}} 
\\
\midrule
\multicolumn{5}{c}{\textit{Binary neural networks}} \\
\midrule
BiMU 
& 1{,}024  & $\textbf{73.61} \pm \textbf{1.53}$  & $\textbf{0.96}\pm\textbf{0.01}$ & $\textbf{1.00}\pm\textbf{0.00}$ \\
& 8{,}192  & $\textbf{89.19}\pm\textbf{0.19}$ & $\textbf{0.99}\pm\textbf{0.00}$ & $\textbf{1.00}\pm\textbf{0.00}$ \\
& 25{,}088 & $\textbf{90.62}\pm\textbf{0.22}$ & $\textbf{0.93}\pm\textbf{0.00}$ & $\textbf{0.90}\pm\textbf{0.00}$ \\
\cmidrule(lr){1-1}
BayesBiNN 
& 1{,}024  & $72.01\pm1.69$ & $0.93\pm0.01$ & $\textbf{1.00}\pm\textbf{0.00}$ \\
& 8{,}192  & $86.93\pm0.41$ & $\textbf{0.99}\pm\textbf{0.00}$ & $\textbf{1.00}\pm\textbf{0.00}$ \\
& 25{,}088 & $89.37\pm0.77$ & $0.92\pm0.00$ & $\textbf{0.90}\pm\textbf{0.01}$ \\
\cmidrule(lr){1-1}
Syn. Meta. 
& 1{,}024  & $62.82\pm2.31$ & $0.72\pm0.04$ & -- \\
& 8{,}192  & $88.03\pm0.38$ & $0.63\pm0.03$ & -- \\
& 25{,}088 & $86.72\pm0.34$ & $0.55\pm0.00$ & -- \\
\cmidrule(lr){1-1}
STE 
& 1{,}024  & $52.88\pm3.39$ & $0.73\pm0.02$ & -- \\
& 8{,}192  & $79.12\pm1.39$ & $0.61\pm0.05$ & -- \\
& 25{,}088 & $83.79\pm1.13$ & $0.55\pm0.00$ & -- \\
\midrule
\multicolumn{5}{c}{\textit{Real-valued neural networks}} \\
\midrule
MESU 
& 1{,}024  & $\textbf{80.82}\pm\textbf{0.94}$ & $0.90\pm0.01$ & $\textbf{0.99}\pm\textbf{0.00}$ \\
& 8{,}192  & $87.01\pm0.69$ & $1.00\pm0.00$ & $\textbf{0.98}\pm\textbf{0.00}$ \\
& 25{,}088 & $87.84\pm0.11$ & $\textbf{0.83}\pm\textbf{0.00}$ & $\textbf{0.86}\pm\textbf{0.00}$ \\
\cmidrule(lr){1-1}
EWC Online 
& 1{,}024  & $75.72\pm0.87$ & $\textbf{0.97}\pm\textbf{0.01}$ & -- \\
& 8{,}192  & $\textbf{87.18}\pm\textbf{0.58}$ & $\textbf{1.00}\pm\textbf{0.00}$ & -- \\
& 25{,}088 & $\textbf{88.23}\pm\textbf{0.05}$ & $0.79\pm0.00$ & -- \\
\cmidrule(lr){1-1}
SI 
& 1{,}024  & $70.75\pm0.57$ & $0.95\pm0.03$ & -- \\
& 8{,}192  & $86.44\pm0.64$   & $\textbf{1.00}\pm\textbf{0.00}$ & -- \\
& 25{,}088 & $88.04\pm0.03$ & $\textbf{0.83}\pm\textbf{0.00}$ & -- \\
\cmidrule(lr){1-1}
SGD 
& 1{,}024  & $62.31\pm1.80$ & $0.50\pm0.00$ & -- \\
& 8{,}192  & $86.27\pm0.51$ & $\textbf{1.00}\pm\textbf{0.00}$ & -- \\
& 25{,}088 & $88.04\pm0.03$ & $\textbf{0.83}\pm\textbf{0.00}$ & -- \\
\bottomrule
\end{tabular}
\end{sc}
\end{scriptsize}
\end{center}
\vskip -0.1in
\end{table}

We next evaluate BiMU on OpenLORIS-Object, a lifelong recognition benchmark designed around robotic nuisance factors. We follow the sequential factors analysis protocol (12 tasks: illumination, occlusion, pixel corruption, and clutter at three difficulty levels) without active querying. To isolate online adaptation under constrained trainable capacity, we freeze an ImageNet-pretrained VGG19 feature extractor and train only an online linear classifier on top of its $512\times7\times7=25{,}088$-dimensional representation. To emulate edge-like feature-compression constraints, we additionally train the head on a fixed random subset of $8{,}192$ and $1{,}024$ features ($\approx 3\times$ and $25\times$ compression).

After training on all 12 tasks in a single pass (no replay), we report the mean accuracy averaged across the 12 task test sets (Table~\ref{tab:openloris-features-rows}). For OOD evaluation, we remove the \texttt{toy} class during training and treat it as out-of-distribution at test time. We report ROC-AUC using aleatoric and epistemic uncertainty estimates.

Under strong compression (1{,}024 features), BiMU reaches $73.61\%$ mean accuracy, improving substantially over STE, Synaptic Metaplasticity, and non-Bayesian real-valued baselines (SGD, SI), while remaining below real-valued MESU and EWC Online in this low-dimensional regime. As the feature dimension increases, BiMU scales smoothly to $89.19\%$ at $8{,}192$ features and $90.62\%$ at $25{,}088$ features, slightly above the real-valued baselines in this raw-feature setting. BiMU also remains consistently ahead of BayesBiNN, with a smaller gap than on 1000-tasks Permuted-MNIST, consistent with the shorter horizon (12 tasks) where Bernoulli posterior saturation is less pronounced.

Uncertainty remains informative under compression: BiMU attains near-perfect epistemic OOD separability (AUC $=1.00$) at $1{,}024$ and $8{,}192$ features. At full dimensionality, epistemic AUC drops to $0.90$ (a trend shared by BayesBiNN), indicating reduced uncertainty-based separability in this linear-head setting as capacity increases. When the extracted features are standardized using OpenLORIS statistics (offline control), real-valued methods regain their advantage (Appendix~\ref{appendix:openloris-standardised}), highlighting BiMU’s robustness to unnormalized features in the streaming regime.

\subsection{Animals: one-pass active learning under class imbalance}~\label{sec:active-learning}

We first isolate the \emph{active learning} component (without continual distribution shift) in a strongly imbalanced image-classification stream built from a 20-class subset of the Animals dataset~\cite{kaggle-animals} (Appendix~\ref{appendix:animals-imbalance}). We designate as \emph{low-frequency} the classes with fewer than 200 training images and group the remaining classes as \emph{high-frequency}. Images are encoded with a frozen ImageNet-pretrained VGG19 feature extractor (25{,}088-dim), and we train only an online linear classifier in a single pass (batch size 1; no replay buffer). Active learning follows our one-pass thresholding protocol: for each incoming sample, we compute an uncertainty score and query its label (and perform backpropagation) only if the score exceeds a fixed threshold; otherwise the example is skipped (no label request, no update). 
Therefore, the queried-label fraction directly measures both the \emph{annotation budget} and the \emph{update budget}. We compare random querying to predictive, aleatoric, epistemic uncertainty, and variation ratio (VR), averaging over five runs (with $K=10$ MC predictors for Bayesian methods). We additionally report an oracle diagnostic, \emph{VR-True}, which compares sampled predictions to the ground-truth label; it is not deployable because it assumes labels for all samples.

Fig.~\ref{fig:animals-bimu} shows final test accuracy as a function of the queried-label fraction, evaluated on a balanced test set (50 images per class). Selective querying consistently improves over random querying at matched budgets, indicating that one-pass active learning is effective in this low-label regime. The horizontal line shows \emph{100\% update baseline under the same one-pass protocol} (i.e., always querying/updating on every sample), reaching $86.28\%$. VR is the most effective criterion, outperforming entropy scores at comparable budgets: it reaches $84.46\%$ accuracy while querying only $11\%$ of labels, and reaches $87.12\%$ at $18\%$ labels, slightly exceeding the 100\% update baseline, consistent with updates being concentrated on informative and under-represented examples rather than redundant majority-class samples.

\begin{figure}[htbp!]
\vskip 0.1in
\centering
\includegraphics[width=\columnwidth]{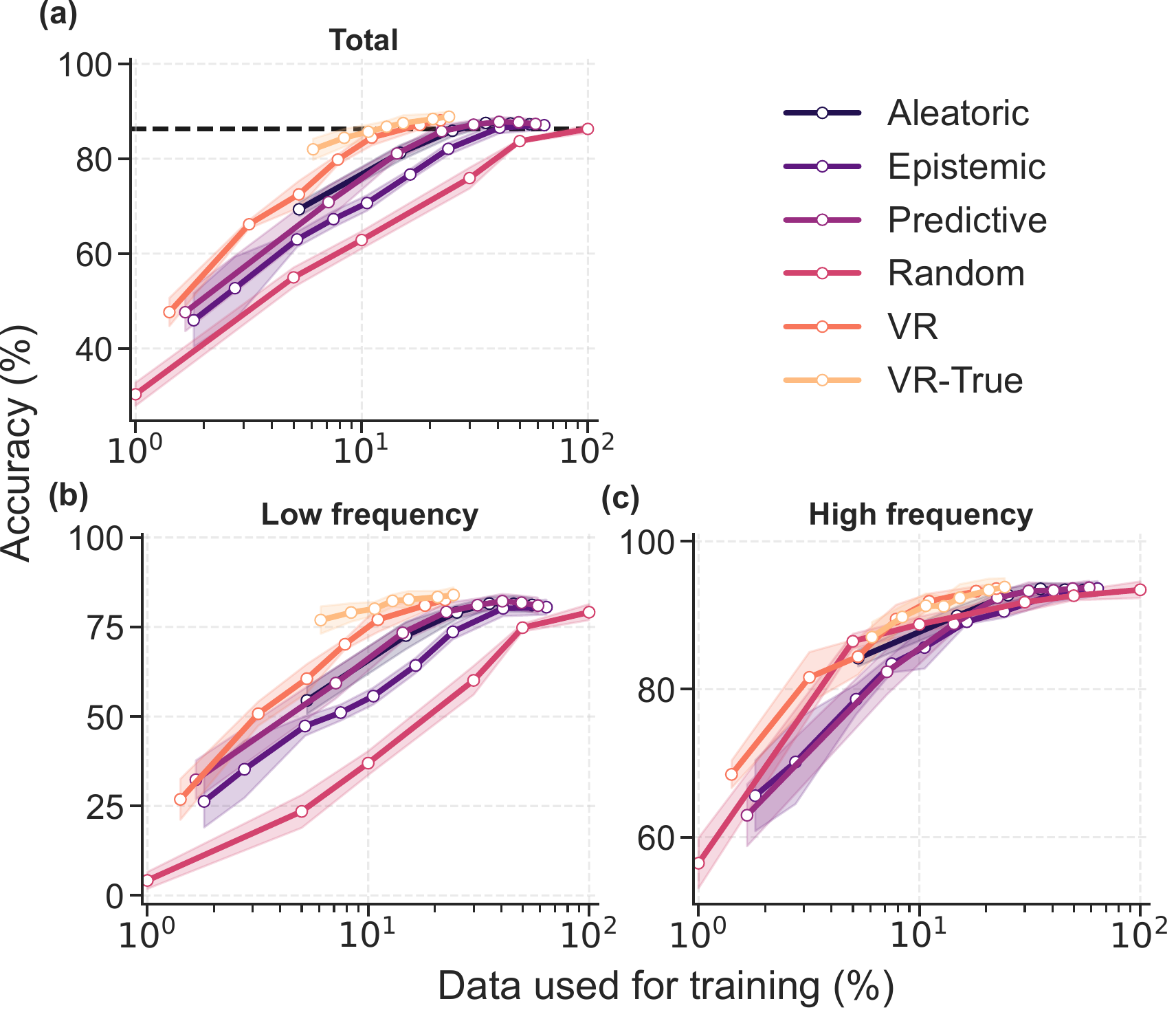}
\caption{Imbalanced Animals active learning (frozen VGG19 features).
Accuracy vs. queried-label fraction: (a) overall, (b) high-frequency classes, (c) low-frequency classes.
Horizontal line: 100\% update baseline. Mean over five runs.
}
\label{fig:animals-bimu}
\end{figure}

Figs.~\ref{fig:animals-bimu}(b,c) break down results by class frequency. Gains are driven primarily by low-frequency classes while high-frequency accuracy is largely preserved. At the $11\%$ labeling budget, VR improves low-frequency accuracy by 41 absolute points over random querying, yielding substantially better balanced performance under severe imbalance.

\begin{figure}[htbp!]
\vskip 0.1in
\centering
\includegraphics[width=\columnwidth]{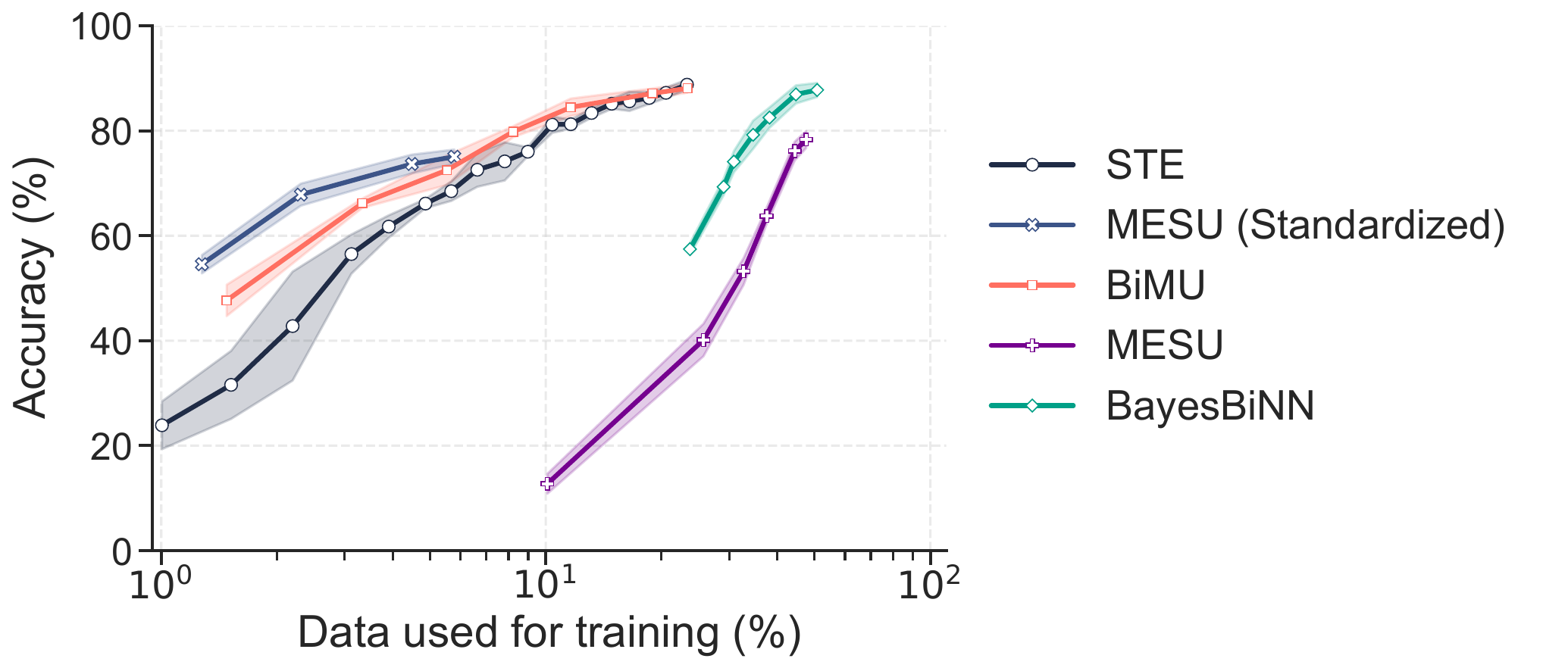}
\caption{Variation ratio-based and STE aleatoric-based active learning on the Animals dataset.}
\label{fig:animals-vr}
\end{figure}

Fig.~\ref{fig:animals-vr} compares VR thresholding across Bayesian methods; for the deterministic STE baseline we use aleatoric uncertainty for querying. In this stationary setting, inherent class confusability makes entropy-based querying already useful, but posterior sampling further improves the accuracy-label trade-off when the disagreement signal is well calibrated.

Among Bayesian baselines, MESU is sensitive to feature scaling on raw VGG19 features; per-dimension standardization substantially improves MESU and can be competitive over a narrow range of query budgets (Appendix~\ref{appendix:animals-extra-figures}). In contrast, BiMU remains consistently strong without feature standardization, indicating that its disagreement signal remains informative under the native VGG19 feature distribution, a practical advantage for edge-like streams where reliable normalization may be unavailable or drift over time.

BayesBiNN performs markedly worse under VR querying, requiring substantially more labels to reach a given accuracy and often not exceeding random querying (Fig.~\ref{fig:animals-bayesbinn}, Appendix~\ref{appendix:animals-extra-figures}). Empirically, its posterior-sample disagreement is less aligned with learning progress in this single-pass, imbalanced regime, causing VR to select less informative samples at matched budgets.

\subsection{OpenLORIS-Object: active continual learning under shift and imbalance}~\label{sec:open-loris-al}

We finally test whether BiMU’s one-pass active-learning gains persist under \emph{continual} nuisance-factor shifts. We use OpenLORIS-Object under the sequential factors analysis protocol (12 tasks) with a frozen VGG19 feature extractor and an online linear head, using a fixed random subset of $8{,}192$ features from the original $25{,}088$-dimensional representation. To induce a long-tailed stream, we define low-frequency classes as those with fewer than 700 training examples and randomly remove 50-80\% of their training images. This makes rare categories appear more sparsely over time and increases the share of redundant majority-class samples. Active learning follows the same one-pass rule as in Sec.~\ref{sec:active-learning}: for each incoming sample, we compute an uncertainty score and query its label only if it exceeds a fixed threshold. Only queried samples trigger labeling and backpropagation updates, so the queried fraction directly measures both annotation and update budgets.

\begin{figure}[htbp!]
\vskip 0.1in
\centering
\includegraphics[width=\columnwidth]{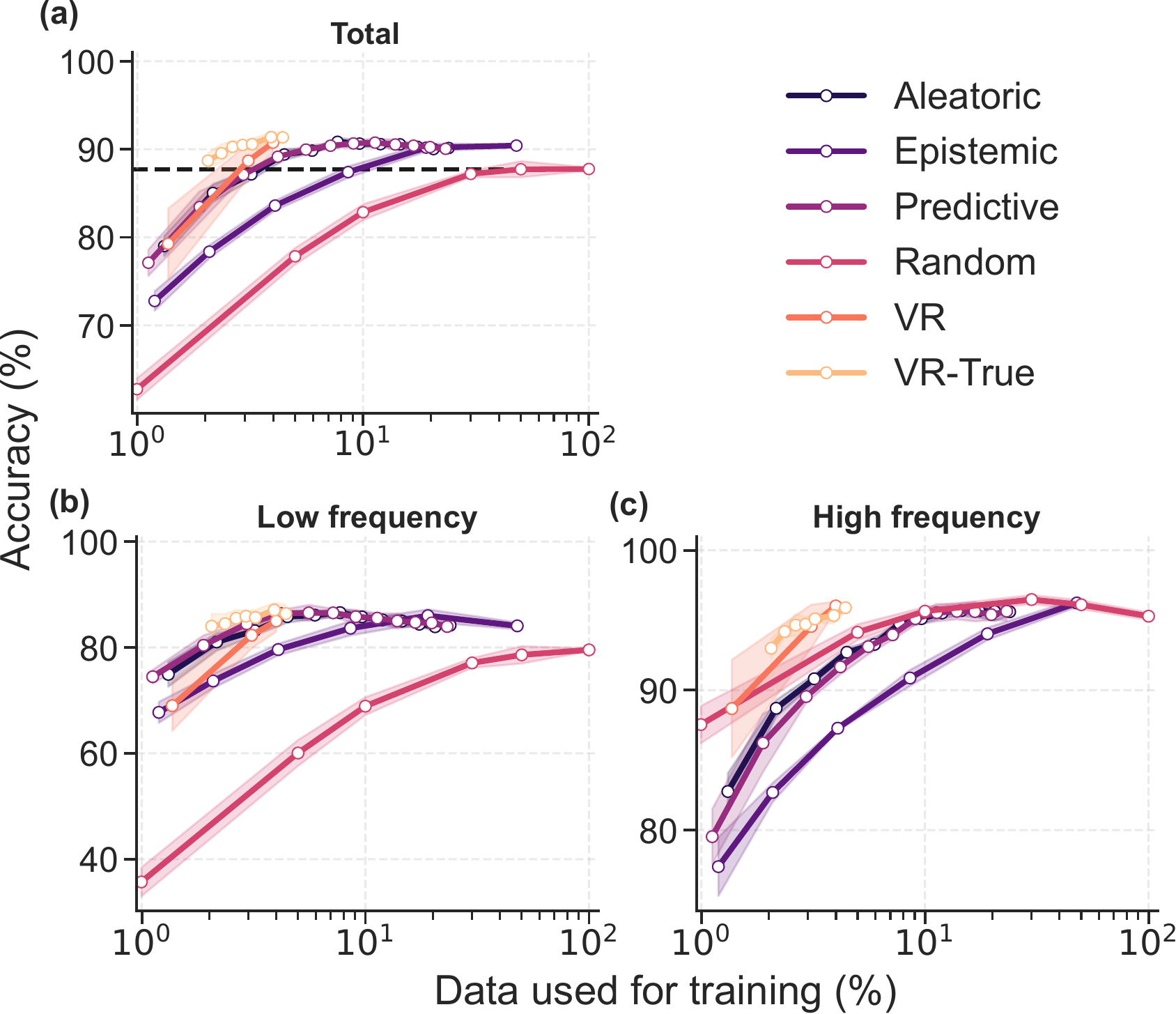}
\caption{
Active continual learning on OpenLORIS-Object under class imbalance (sequential factor analysis; frozen VGG19 features subsampled to 8{,}192 dims; online linear head). Accuracy vs. queried-label fraction for (a) overall accuracy, (b) frequent classes, and (c) rare classes. Horizontal line: 100\% update baseline. Mean over five runs.
}
\label{fig:vgg19-8192}
\end{figure}

Fig.~\ref{fig:vgg19-8192} reports accuracy versus queried-label fraction (overall, low frequency, and high frequency classes). With VR thresholding (computed with $K=10$ posterior-sampled predictors unless stated otherwise), BiMU reaches $88.70\%$ while updating on only $3.1\%$ of the stream, corresponding to a $32\times$ reduction in labeled samples and gradient updates relative to the 100\% update baseline ($87.76\%$, equivalent to training on the full stream in this one-pass setting). With a $4.0\%$ update budget, accuracy increases to $90.91\%$ (a \textbf{$25\times$} reduction). The  improvement over the 100\% update baseline is consistent with the imbalanced stream: skipping redundant majority-class samples implicitly reweights updates toward informative and under-represented examples. The gains in the low-label regime are indeed driven primarily by improved performance on low-frequency categories (Fig.~\ref{fig:vgg19-8192}c), while avoiding redundant updates on frequent classes (Fig.~\ref{fig:vgg19-8192}b), yielding higher overall accuracy than random querying at matched budgets. Appendix~\ref{app:openloris-vr-learning-dynamics} further shows that VR queries concentrate shortly after task shifts while remaining non-zero later in each task.

\begin{figure}[htbp!]
\vskip 0.1in
\centering
\includegraphics[width=\columnwidth]{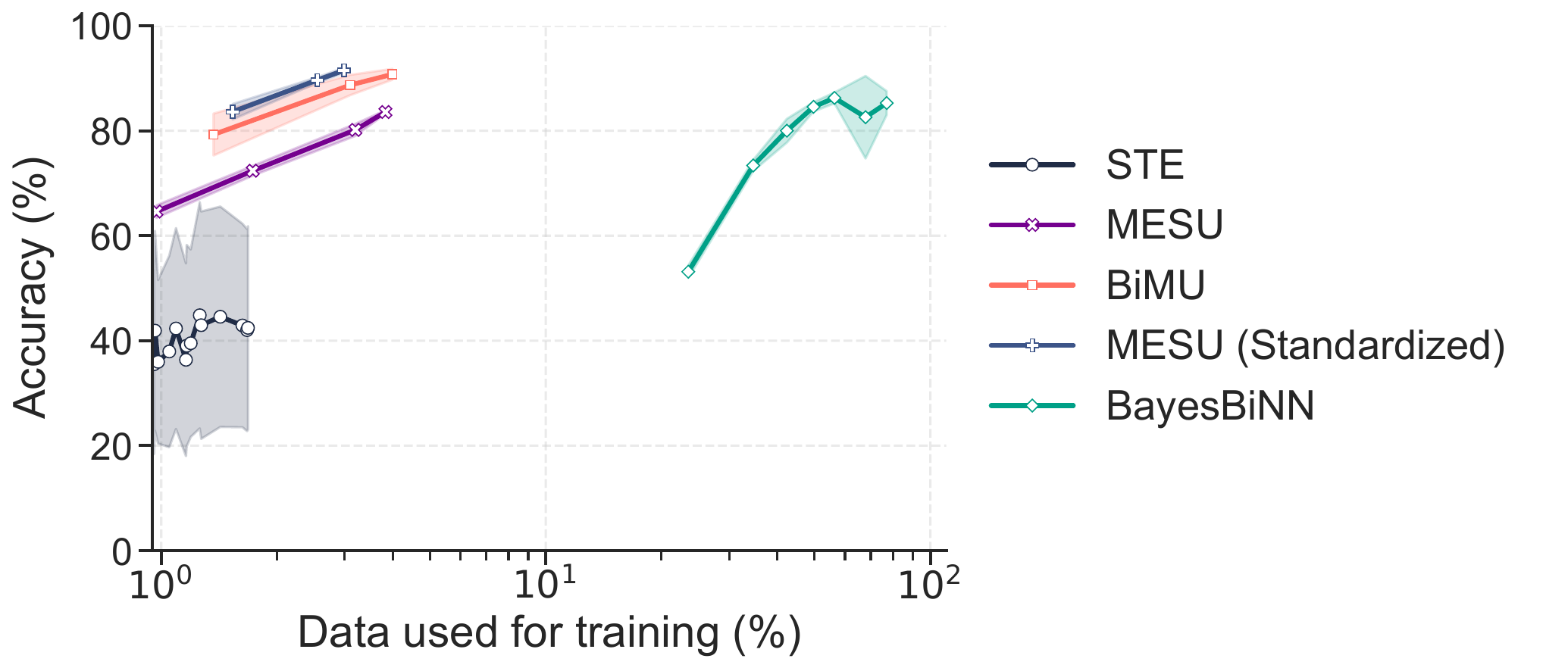}
\caption{
OpenLORIS-Object active continual learning (8{,}192 frozen VGG19 features): accuracy vs. queried-label fraction for BiMU, MESU, and BayesBiNN (VR thresholding) and STE (aleatoric thresholding). Results averaged over five runs.
}
\label{fig:openloris-vr}
\end{figure}

Fig.~\ref{fig:openloris-vr} compares VR thresholding across Bayesian methods at matched queried-label fractions. BiMU achieves higher accuracy than MESU and BayesBiNN, indicating that its posterior-sample disagreement remains more useful for one-pass querying under feature compression and continual shift. Appendix~\ref{appendix:openloris-extra-figures} provides additional details for other algorithms. Appendix~\ref{appendix:openloris-vr-samples} further shows that small $K$ already recovers most of the label/update savings, supporting low-overhead deployment. Overall, BiMU enables effective active continual learning without replay buffers and without requiring task-identity/boundary signals by sustaining informative epistemic uncertainty in a binary-weight model.

\section{Conclusion}
We introduced BiMU, a Bayesian continual-learning rule for binary neural networks that prevents mean-field Bernoulli posteriors from saturating while preserving informative epistemic uncertainty on long, non-stationary streams. Starting from a bounded-memory Bayesian learning-and-forgetting objective, BiMU yields a fully online update in the natural parameters that combines a data-driven term with controlled relaxation toward the prior and a bounded, uncertainty-dependent (metaplastic) step size. This sustains plasticity without replay buffers and without requiring task-identity or task-boundary signals.

Empirically, BiMU maintains long-horizon adaptation where existing binary baselines lose plasticity. On 1000-tasks Permuted-MNIST, it achieves high final accuracy and near-perfect OOD detection, while BayesBiNN, Synaptic Metaplasticity, and STE degrade substantially as training proceeds. On OpenLORIS-Object with frozen VGG19 features and aggressive dimensionality reduction, BiMU remains competitive in the streaming setting while retaining a compact binary posterior and useful uncertainty estimates. Preserving epistemic uncertainty is also operationally beneficial: BiMU enables fully online active continual learning via one-pass Monte Carlo disagreement (variation ratio) thresholding. On OpenLORIS-Object under class imbalance, BiMU reaches $88.70\%$ accuracy while updating on only $3.1\%$ of samples (a $32\times$ reduction in labels and gradient updates), with gains driven primarily by improved performance on under-represented classes while avoiding redundant updates on frequent ones.

Appendix~\ref{app:adaptive-threshold} further shows that the fixed threshold $\tau$ can be replaced by a fully online budget-driven controller that targets a desired query/update fraction. Appendix~\ref{appendix:compute-cost} quantifies the inference-update trade-off on an STM32 microcontroller unit and shows that active BiMU reduces expected compute despite the extra Monte Carlo forward passes.

BiMU  makes uncertainty evaluation practical on energy-constrained edge platforms because its Monte Carlo inference is intrinsically cheaper than in real-valued Bayesian networks. Sampling Bernoulli synapses can be implemented with simple bit-level draws and comparisons, and each sampled forward pass leverages binary arithmetic (e.g., XNOR/popcount or bitwise accumulations) rather than real-valued multiplications, with substantially reduced data movement due to compact weight representations. This shifts the cost profile of uncertainty estimation toward operations that are well matched to microcontroller-class deployments \cite{cerutti2020sound} and becomes even more compelling on dedicated hardware \cite{conti2018xnor}, where emerging devices and circuits can provide stochastic sampling primitives at very low overhead \cite{querlioz2025bayesian}.
When an update is required, however, backpropagation is still comparatively expensive; BiMU’s one-pass querying mitigates this by triggering learning only on samples that are predicted to be informative under the current posterior. This makes the average training-time budget compatible with always-on operation, while retaining calibrated uncertainty for both OOD monitoring and selective supervision.

Overall, BiMU shows that binary Bayesian models can combine edge-efficient inference with sustained plasticity and actionable uncertainty, providing a practical route to reliable, data-efficient lifelong learning under tight memory, compute, and annotation budgets.

\section*{Code availability}

The code to reproduce our experiments is available at
\url{https://github.com/kellian-cottart/active-continual-learning-bayesianbinn}.

\section*{Acknowledgments}

This work was supported by the Horizon Europe program (EIC Pathfinder METASPIN, grant number 326101098651). It also benefited from  a France 2030 government grant managed by the French National Research Agency (ANR-23-PEIA-0002). The authors would like to thank Emre Neftci for discussion and invaluable feedback.

\section*{Impact Statement}

BiMU targets continual learning on resource-constrained devices by enabling fully online adaptation with binary Bayesian neural networks while preserving epistemic uncertainty over long, non-stationary streams. This can reduce energy use, labeling effort, and unnecessary gradient updates, and it can improve reliability through uncertainty-aware monitoring (e.g., selective supervision and out-of-distribution detection) in applications such as robotics and embedded vision.

At the same time, continual adaptation can amplify biases present in streaming and imbalanced data, and uncertainty estimates may become unreliable under unmodeled shifts or adversarial conditions. 
Practical deployments should include privacy and data-governance protections, ongoing evaluation across subpopulations, and human oversight, especially in safety-critical settings. Our contribution is methodological; its societal impact depends on how the approach is integrated, validated, and governed in downstream systems.

\bibliography{references}
\bibliographystyle{icml2026}

\newpage
\appendix
\onecolumn

\section{Mathematical derivations}

\begin{proposition}[Exponential family product of Bernoulli]
    \label{prop:exp-family-bernoulli}
    The joint distribution of $\bm{\omega}$ can be written in exponential form as
    \begin{equation}
        q(\bm{\omega} | \bm{\lambda})
        =
        \prod_{i=1}^s
        \frac{\exp\!\left(\lambda^{(i)} \omega^{(i)}\right)}
        {2 \cosh\!\left(\lambda^{(i)}\right)},
        \label{eq:exp_family_product}
    \end{equation}
    where the natural parameters $\lambda^{(i)} \in \mathbb{R}$ are related to the Bernoulli parameters by
    \begin{equation}
        p^{(i)} = \sigma\!\left(2\lambda^{(i)}\right).
    \end{equation}
\end{proposition}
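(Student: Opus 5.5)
The plan is to verify the identity one synapse at a time and then take the product using the mean-field independence assumption. For a single synapse the model is $\omega^{(i)} = 2b^{(i)}-1$ with $b^{(i)}\sim\mathrm{Ber}(p^{(i)})$ and $p^{(i)}=\sigma(2\lambda^{(i)})$. Since the affine map $b\mapsto 2b-1$ is a bijection from $\{0,1\}$ onto $\{-1,+1\}$, the law of $\omega^{(i)}$ is $q(\omega^{(i)}=+1)=p^{(i)}$ and $q(\omega^{(i)}=-1)=1-p^{(i)}$. It therefore suffices to check that $\exp(\lambda\omega)/(2\cosh\lambda)$ reproduces these two values.

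For $\omega=+1$, write $\sigma(2\lambda)=1/(1+e^{-2\lambda})$ and multiply numerator and denominator by $e^{\lambda}$. This gives
\[
\sigma(2\lambda)=\frac{e^{\lambda}}{e^{\lambda}+e^{-\lambda}}=\frac{e^{\lambda}}{2\cosh\lambda}.
\]
For $\omega=-1$, the same manipulation gives
\[
1-\sigma(2\lambda)=\sigma(-2\lambda)=\frac{e^{-\lambda}}{2\cosh\lambda}.
\]
These two cases combine into the single expression $q(\omega^{(i)}|\lambda^{(i)})=\exp(\lambda^{(i)}\omega^{(i)})/(2\cosh\lambda^{(i)})$. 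As a consistency check, it sums to $1$ over $\omega\in\{-1,+1\}$ by the definition of $\cosh$. This also exhibits the exponential-family structure: the sufficient statistic is $\omega$, the natural parameter is $\lambda$, and the log-partition function is $A(\lambda)=\log(2\cosh\lambda)$. Conversely, $\lambda^{(i)}=\tfrac12\,\mathrm{logit}(p^{(i)})$, so the correspondence $p^{(i)}=\sigma(2\lambda^{(i)})$ is a bijection between $(0,1)$ and $\mathbb{R}$.

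Finally, the mean-field assumption says that $q(\bm\omega|\bm\lambda)$ factorizes over synapses. Multiplying the per-coordinate expressions then gives the product form in Eq.~\eqref{eq:exp_family_product}.

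None of these steps is a real obstacle. The only point that needs care is keeping the factor of $2$ in $\sigma(2\lambda)$ consistent with the $\{-1,+1\}$ encoding, rather than with the $\{0,1\}$ encoding, where the natural parameter would be $2\lambda$.
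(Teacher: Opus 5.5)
Your proof is correct. It has the same outer structure as the paper's (establish the single-synapse form, then multiply using mean-field independence), but you establish the single-synapse identity by a different route. The paper starts from an arbitrary $p^{(i)}$ and writes the pmf on $\{-1,+1\}$ as $(p^{(i)})^{(1+\omega^{(i)})/2}(1-p^{(i)})^{(1-\omega^{(i)})/2}$. It then takes logarithms to split off a term linear in $\omega^{(i)}$ from a constant, and reads off $\lambda^{(i)}=\tfrac12\log\bigl(p^{(i)}/(1-p^{(i)})\bigr)$ as the coefficient of that linear term. After inverting to get $p^{(i)}=\sigma(2\lambda^{(i)})$, it uses the identity $p^{(i)}(1-p^{(i)})=(2\cosh\lambda^{(i)})^{-2}$, stated without proof, to turn the constant into $-\log(2\cosh\lambda^{(i)})$.

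You instead start from $\lambda^{(i)}$ and check the two point masses directly. This is shorter and makes that identity unnecessary; in fact multiplying your two expressions proves it. The paper's derivation shows where the natural parameter and its factor $\tfrac12$ come from: it discovers the parameterization rather than verifying a given one. Your closing remark that $p^{(i)}=\sigma(2\lambda^{(i)})$ is a bijection between $(0,1)$ and $\mathbb{R}$ recovers that generality, so your argument loses nothing.
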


\begin{proof}

We begin by using a change of variable that maps the standard Bernoulli support
$\{0,1\}$ to $\{-1,1\}$. Let $v^{(i)} \sim \text{Bernoulli}(p^{(i)})$ with $v^{(i)} \in \{0,1\}$,
and define
\begin{equation}
    \omega^{(i)} = 2 v^{(i)} - 1.
\end{equation}
This transformation ensures that $\omega^{(i)} \in \{-1,1\}$ and preserves the Bernoulli probability,
with $\mathbb{P}(\omega^{(i)} = 1) = p^{(i)}$ and $\mathbb{P}(\omega^{(i)} = -1) = 1 - p^{(i)}$. For a single synapse $\omega^{(i)} \in \{-1,1\}$, the Bernoulli probability mass function can be written as
\begin{equation}
    q(\omega^{(i)}) 
    =
    \big(p^{(i)}\big)^{\frac{1+\omega^{(i)}}{2}}
    \big(1-p^{(i)}\big)^{\frac{1-\omega^{(i)}}{2}}.
\end{equation}
Taking the logarithm yields
\begin{align}
    \log q(\omega^{(i)})
    &=
    \frac{1}{2}
    \Big[
        (1+\omega^{(i)})\log p^{(i)}
        +
        (1-\omega^{(i)})\log(1-p^{(i)})
    \Big] \\
    &=
    \frac{1}{2}
    \Big[
        \log\!\big(p^{(i)}(1-p^{(i)})\big)
        +
        \omega^{(i)} \log\!\left(\frac{p^{(i)}}{1-p^{(i)}}\right)
    \Big].
\end{align}

Defining the natural parameter
\begin{equation}
    \lambda^{(i)} 
    =
    \frac{1}{2}
    \log\!\left(\frac{p^{(i)}}{1-p^{(i)}}\right),
\end{equation}

The Bernoulli parameter $p^{(i)}$ can be recovered by inverting the definition of the natural parameter:
\begin{align}
    \lambda^{(i)} = \frac{1}{2}\log\!\left(\frac{p^{(i)}}{1-p^{(i)}}\right)
    &\iff
    \frac{p^{(i)}}{1-p^{(i)}} = e^{2\lambda^{(i)}} \\
    &\iff
    p^{(i)} = \frac{e^{2\lambda^{(i)}}}{1 + e^{2\lambda^{(i)}}}
    = \sigma\!\left(2\lambda^{(i)}\right),
\end{align}
where $\sigma(\cdot)$ denotes the logistic sigmoid function. The distribution is rewritten as
\begin{equation}
    q(\omega^{(i)})
    =
    \exp\!\left(
        \lambda^{(i)} \omega^{(i)}
        + \frac{1}{2}\log\!\big(p^{(i)}(1-p^{(i)})\big)
    \right).
\end{equation}

Using the identity
\begin{equation}
    p^{(i)}(1-p^{(i)}) 
    =
    \frac{1}{\big(2\cosh(\lambda^{(i)})\big)^2},
\end{equation}
the remaining log term simplifies to
\begin{equation}
    \exp\!\left(\frac{1}{2}\log\!\big(p^{(i)}(1-p^{(i)})\big)\right)
    =
    \frac{1}{2\cosh(\lambda^{(i)})}.
\end{equation}
Hence, for each variable,
\begin{equation}
    q(\omega^{(i)})
    =
    \frac{\exp\!\left(\lambda^{(i)} \omega^{(i)}\right)}
    {2\cosh(\lambda^{(i)})}.
\end{equation}

Finally, independence of the synapses implies that the joint distribution factorizes as
\begin{equation}
    q(\bm{\omega} | \bm{\lambda})
    =
    \prod_{i=1}^s q(\omega^{(i)})
    =
    \prod_{i=1}^s
    \frac{\exp\!\left(\lambda^{(i)} \omega^{(i)}\right)}
    {2\cosh(\lambda^{(i)})},
\end{equation}
which completes the proof.
\end{proof}

\begin{proposition}[Variational free-energy decomposition under controlled forgetting]
\label{prop:vfe-decomposition}

Let $p(\bm{\omega})$ be the prior distribution over binary synaptic variables
$\bm{\omega}$, and let $\{\mathcal{D}_t\}_{t > 0}$ denote a stream of data batches.
Assume that all past batches $\mathcal{D}_{t-N-1},\dots,\mathcal{D}_{t-1}$ have equal
marginal likelihood. Then, approximating the posterior
$p(\bm{\omega}|\mathcal{D}_{t-N:t})$ with a variational distribution
$q(\bm{\omega}|\bm{\lambda})$, the variational free energy at time $t$ decomposes as
\begin{equation}
\mathcal{D}_{KL}\!\left(
    q(\bm{\omega}|\bm{\lambda})
    \,\big\|\,
    p(\bm{\omega}|\mathcal{D}_{t-N:t})
\right)
\;\approx\;
\left(1-\frac{1}{N}\right)
\mathcal{D}_{KL}\!\left(
    q(\bm{\omega}|\bm{\lambda})
    \,\big\|\,
    q(\bm{\omega}|\bm{\lambda}_{t-1})
\right)
+
\frac{1}{N}
\mathcal{D}_{KL}\!\left(
    q(\bm{\omega}|\bm{\lambda})
    \,\big\|\,
    p(\bm{\omega})
\right)
+
\mathcal{L}(\bm{\lambda},\mathcal{D}_t),
\end{equation}
where
\begin{equation}
\mathcal{L}(\bm{\lambda},\mathcal{D}_t)
=
-\,\mathbb{E}_{q(\bm{\omega}|\bm{\lambda})}
\!\left[
    \log p(\mathcal{D}_t | \bm{\omega})
\right]
\end{equation}
is the negative log-likelihood of the current batch.
\end{proposition}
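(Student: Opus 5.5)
The plan is to start from the controlled-forgetting posterior, Eq.~\eqref{eq:bayesian-forgetting}, and take $\log$ of both sides:
\begin{equation*}
\log p(\bm{\omega}|\mathcal{D}_{t-N:t}) = \log p(\mathcal{D}_t|\bm{\omega}) + \log p(\bm{\omega}|\mathcal{D}_{t-N-1:t-1}) - \log p(\mathcal{D}_{t-N-1}|\bm{\omega}) + C,
\end{equation*}
where $C$ gathers the evidence terms $\log p(\mathcal{D}_{t-N-1}) - \log p(\mathcal{D}_t)$, which do not depend on $\bm{\omega}$. Substituting this into $\mathcal{D}_{KL}(q\Vert p(\bm{\omega}|\mathcal{D}_{t-N:t})) = \mathbb{E}_q[\log q - \log p(\bm{\omega}|\mathcal{D}_{t-N:t})]$ immediately produces the data term $\mathcal{L}(\bm{\lambda},\mathcal{D}_t) = -\mathbb{E}_q[\log p(\mathcal{D}_t|\bm{\omega})]$. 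The remaining work is to rewrite the leftover quantity
\begin{equation*}
\mathbb{E}_q\big[\log q - \log p(\bm{\omega}|\mathcal{D}_{t-N-1:t-1}) + \log p(\mathcal{D}_{t-N-1}|\bm{\omega})\big]
\end{equation*}
as the two KL terms, up to constants that do not affect the argmin in Eq.~\eqref{eq:minimization}. The ``$\approx$'' in the statement will be read as equality up to such constants together with the two modelling approximations below.

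The key step is to eliminate the oldest likelihood $p(\mathcal{D}_{t-N-1}|\bm{\omega})$, which is not available online. I would use the equal-marginal-likelihood assumption to treat the $N$ batches in the window $\mathcal{D}_{t-N-1:t-1}$ as exchangeable contributors to the window posterior. Writing $p(\bm{\omega}|\mathcal{D}_{t-N-1:t-1}) \propto p(\bm{\omega})\prod_{k} p(\mathcal{D}_k|\bm{\omega})$ and replacing each factor by a geometric average gives
\begin{equation*}
p(\mathcal{D}_{t-N-1}|\bm{\omega}) \approx \left(\frac{p(\bm{\omega}|\mathcal{D}_{t-N-1:t-1})}{p(\bm{\omega})}\right)^{1/N}
\end{equation*}
up to a normalizing constant. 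This is the same device used for MESU in \citep{bonnet2025bayesian}, and I would follow it. Inserting the approximation turns the bracket into $\log q - (1-\tfrac1N)\log p(\bm{\omega}|\mathcal{D}_{t-N-1:t-1}) - \tfrac1N \log p(\bm{\omega})$.

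Next I would split $\log q = (1-\tfrac1N)\log q + \tfrac1N\log q$. Taking expectations under $q$ then yields
\begin{equation*}
\left(1-\tfrac1N\right)\mathcal{D}_{KL}\big(q\Vert p(\bm{\omega}|\mathcal{D}_{t-N-1:t-1})\big) + \tfrac1N\,\mathcal{D}_{KL}\big(q\Vert p(\bm{\omega})\big).
\end{equation*}
Finally, I would invoke the recursive variational assumption that the previous window posterior is represented by its projection, $p(\bm{\omega}|\mathcal{D}_{t-N-1:t-1}) \approx q(\bm{\omega}|\bm{\lambda}_{t-1})$. This is justified because $\bm{\lambda}_{t-1}$ was itself obtained by Eq.~\eqref{eq:minimization} at the previous step. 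With this substitution the decomposition matches the statement.

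The main obstacle is the middle step. Rigorously justifying the $1/N$-power replacement of $p(\mathcal{D}_{t-N-1}|\bm{\omega})$ is where the equal-marginal-likelihood assumption has to do its work, and it is inherently a heuristic, mean-field-type averaging rather than an identity. I would state it explicitly as the approximation behind ``$\approx$'' and keep track of the dropped constants, so that it is clear they are independent of $\bm{\lambda}$.
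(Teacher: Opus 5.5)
Your proposal is correct and follows essentially the same route as the paper's proof. Both replace the oldest likelihood by $\bigl(p(\bm{\omega}|\mathcal{D}_{t-N-1:t-1})/p(\bm{\omega})\bigr)^{1/N}$ via the equal-likelihood assumption, split $\log q$ into $(1-\tfrac1N)$ and $\tfrac1N$ parts to form the two KL terms, and then substitute $q(\bm{\omega}|\bm{\lambda}_{t-1})$ for the previous window posterior; your explicit tracking of the $\bm{\omega}$-independent evidence and normalization constants is slightly more careful than the paper, which drops them silently.
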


\begin{proof}
By Bayes’ rule with controlled forgetting, the posterior over the window
$\mathcal{D}_{t-N:t}$ satisfies
\begin{equation}
p(\bm{\omega}|\mathcal{D}_{t-N:t})
\propto
p(\mathcal{D}_t|\bm{\omega})\,
p(\bm{\omega}|\mathcal{D}_{t-N-1:t-1})\,
\left(
\frac{p(\bm{\omega})}
     {p(\bm{\omega}|\mathcal{D}_{t-N-1:t-1})}
\right)^{\frac{1}{N}},
\end{equation}
where the equal-marginal-likelihood assumption implies
\begin{equation}
p(\mathcal{D}_{t-N-1:t-1}|\bm{\omega})
=
\prod_{i=t-N-1}^{t-1} p(\mathcal{D}_i|\bm{\omega})
=
\bigl[p(\mathcal{D}_{t-N-1}|\bm{\omega})\bigr]^N.
\end{equation}

We now consider the variational objective
\begin{equation}
\mathcal{D}_{KL}\!\left(
q(\bm{\omega}|\bm{\lambda})
\,\big\|\,
p(\bm{\omega}|\mathcal{D}_{t-N:t})
\right)
=
\mathbb{E}_q[\log q(\bm{\omega}|\bm{\lambda})]
-
\mathbb{E}_q[\log p(\bm{\omega}|\mathcal{D}_{t-N:t})].
\end{equation}
Substituting the expression of the posterior and expanding the logarithm yields
\begin{align}
\mathcal{D}_{KL}\!\left(
q(\bm{\omega}|\bm{\lambda})
\,\big\|\,
p(\bm{\omega}|\mathcal{D}_{t-N:t})
\right)
&=
\mathbb{E}_q[\log q(\bm{\omega}|\bm{\lambda})]
-
\mathbb{E}_q[\log p(\bm{\omega}|\mathcal{D}_{t-N-1:t-1})] \notag
\\ 
&\quad
-
\frac{1}{N}
\mathbb{E}_q\!\left[
    \log\frac{p(\bm{\omega})}
              {p(\bm{\omega}|\mathcal{D}_{t-N-1:t-1})} 
\right] \
-
\mathbb{E}_q[\log p(\mathcal{D}_t|\bm{\omega})].
\end{align}

We multiply and divide by the variational distribution
\begin{align}
\mathcal{D}_{KL}\!\left(
q(\bm{\omega}|\bm{\lambda})
\,\big\|\,
p(\bm{\omega}|\mathcal{D}_{t-N:t})
\right)
&=
\mathbb{E}_{q}\!\left[\log q(\bm{\omega}|\bm{\lambda})\right]
-
\mathbb{E}_{q}\!\left[\log p(\bm{\omega}|\mathcal{D}_{t-N-1:t-1})\right] \notag
\\
&\quad
-
\mathbb{E}_{q}\!\left[
    \log\!\left( \left[
        \frac{
            p(\bm{\omega})
        }{
            p(\bm{\omega}|\mathcal{D}_{t-N-1:t-1})
        }\,
        \frac{
            q(\bm{\omega}|\bm{\lambda})
        }{
            q(\bm{\omega}|\bm{\lambda})
        }
    \right]^{\!\frac{1}{N}} \right)
\right]
-
\mathbb{E}_{q}\!\left[\log p(\mathcal{D}_{t}|\bm{\omega})\right].
\end{align}

Rearranging terms, we obtain
\begin{align}
\mathcal{D}_{KL}\!\left(
q(\bm{\omega}|\bm{\lambda})
\,\big\|\,
p(\bm{\omega}|\mathcal{D}_{t-N:t})
\right)
&=
\left(1-\frac{1}{N}\right)
\mathcal{D}_{KL}\!\left(
q(\bm{\omega}|\bm{\lambda})
\,\big\|\,
p(\bm{\omega}|\mathcal{D}_{t-N-1:t-1})
\right) \notag
\\
&\quad
+
\frac{1}{N}
\mathcal{D}_{KL}\!\left(
q(\bm{\omega}|\bm{\lambda})
\,\big\|\,
p(\bm{\omega})
\right)
-
\mathbb{E}_q[\log p(\mathcal{D}_t|\bm{\omega})].
\end{align}

Finally, since the true posterior
$p(\bm{\omega}|\mathcal{D}_{t-N-1:t-1})$ is intractable, we approximate it by the
previous variational solution
$q(\bm{\omega}|\bm{\lambda}_{t-1})$, assuming it minimized the divergence at the
previous time step. Defining
\begin{equation}
\mathcal{L}(\bm{\lambda},\mathcal{D}_t)
=
-\,\mathbb{E}_{q(\bm{\omega}|\bm{\lambda})}
\!\left[
    \log p(\mathcal{D}_t|\bm{\omega})
\right],
\end{equation}
the stated decomposition follows.
\end{proof}

\begin{proposition}[Closed-form KL divergence for Bernoulli variables]
\label{prop:kl-bernoulli}
Let $q(\omega|\theta)$ and $q(\omega|\xi)$ be Bernoulli distributions in exponential-family form with support $\mathcal{X} =\{-1, 1\}$, $\omega \in \mathcal{X}$:
\begin{equation}
q(\omega|\theta) = \frac{\exp(\theta \, \omega)}{2 \cosh(\theta)}, \quad
q(\omega|\xi) = \frac{\exp(\xi \, \omega)}{2 \cosh(\xi)}.
\end{equation}
The KL divergence between these two distributions admits the closed-form expression:
\begin{align}
\mathcal{D}_{KL}\big(q(\omega|\theta) \,\|\, q(\omega|\xi)\big)
&= \mathbb{E}_{q(\omega|\theta)} \Bigg[ \log \frac{q(\omega|\theta)}{q(\omega|\xi)} \Bigg] \notag \\
&= (\theta - \xi) \tanh(\theta) - \log \frac{\cosh(\theta)}{\cosh(\xi)}.
\end{align}
\end{proposition}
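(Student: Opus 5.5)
The plan is a direct computation from the definition of the KL divergence, using the exponential-family form of Prop.~\ref{prop:exp-family-bernoulli}. First, I compute the log-ratio pointwise for $\omega\in\{-1,1\}$:
\begin{equation*}
\log\frac{q(\omega|\theta)}{q(\omega|\xi)} = (\theta-\xi)\,\omega - \log\frac{\cosh(\theta)}{\cosh(\xi)}.
\end{equation*}
This holds because the factors $2$ in the normalizers cancel. The log-ratio is affine in $\omega$, so its expectation under $q(\omega|\theta)$ depends only on the mean $\mathbb{E}_{q(\omega|\theta)}[\omega]$. The second term is a constant, so its expectation is itself.

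Second, I compute that mean directly over the two-point support:
\begin{equation*}
\mathbb{E}_{q(\omega|\theta)}[\omega] = \frac{e^{\theta}-e^{-\theta}}{2\cosh(\theta)} = \frac{2\sinh(\theta)}{2\cosh(\theta)} = \tanh(\theta).
\end{equation*}
As a cross-check, this equals the derivative of the log-partition $\log(2\cosh\theta)$, the standard exponential-family identity.

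Finally, I substitute the mean into the expectation of the log-ratio and use linearity. This gives $(\theta-\xi)\tanh(\theta)-\log\frac{\cosh\theta}{\cosh\xi}$, which is the claimed formula.

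There is no real obstacle. The only point needing care is the sign convention: the expectation is taken under the first argument, $\theta$, so $\tanh(\theta)$ appears rather than $\tanh(\xi)$. As a sanity check, the expression vanishes at $\theta=\xi$, and its derivative in $\theta$ is $(\theta-\xi)\cosh^{-2}(\theta)$. That derivative is zero at $\theta=\xi$, which is consistent with nonnegativity of the KL divergence. The same derivative is also what later produces the $\cosh^{-2}$ factors in the BiMU update.
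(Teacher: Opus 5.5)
Your proposal is correct and follows essentially the same route as the paper. The paper also writes the pointwise log-ratio as $(\theta-\xi)\omega-\log\frac{\cosh\theta}{\cosh\xi}$ and sums it explicitly over $\omega\in\{-1,1\}$, which is your computation with the mean $\mathbb{E}_{q(\omega|\theta)}[\omega]=\tanh(\theta)$ left implicit in the two-term sum.
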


\begin{proof}
By definition, the KL divergence is given by
\begin{align}
\mathcal{D}_{KL}\big(q(\omega|\theta) \,\|\, q(\omega|\xi)\big)
&:= \sum_{\mathcal{X}} q(\omega|\theta) 
\log \frac{q(\omega|\theta)}{q(\omega|\xi)}.
\end{align}

Substituting the exponential-family forms:
\begin{align}
q(\omega|\theta) \log \frac{q(\omega|\theta)}{q(\omega|\xi)}
&= \frac{\exp(\theta \, \omega)}{2 \cosh(\theta)} 
\log \frac{\exp(\theta \, \omega)/2\cosh(\theta)}{\exp(\xi \, \omega)/2\cosh(\xi)} \notag \\
&= \frac{\exp(\theta \, \omega)}{2 \cosh(\theta)} 
\Big[ (\theta - \xi)\, \omega - \log \frac{\cosh(\theta)}{\cosh(\xi)} \Big].
\end{align}

Summing over $\mathcal{X}$:
\begin{align}
\mathcal{D}_{KL}\big(q(\omega|\theta) \,\|\, q(\omega|\xi)\big)
&= \frac{\exp(\theta)}{2\cosh(\theta)} 
\Big[ (\theta - \xi) - \log \frac{\cosh(\theta)}{\cosh(\xi)} \Big] \notag \\
&\quad + \frac{\exp(-\theta)}{2\cosh(\theta)} 
\Big[ -(\theta - \xi) - \log \frac{\cosh(\theta)}{\cosh(\xi)} \Big] \notag \\
&= (\theta - \xi) \frac{\exp(\theta) - \exp(-\theta)}{2 \cosh(\theta)} 
- \log \frac{\cosh(\theta)}{\cosh(\xi)} \notag \\
&= (\theta - \xi) \tanh(\theta) 
- \log \frac{\cosh(\theta)}{\cosh(\xi)}.
\end{align}

This completes the derivation of the closed-form.
\end{proof}

\begin{theorem}[Memory-limited second-order asymmetric update for binary synapses]
\label{thm:asymmetric-second-order}

Consider a Bayesian neural network with synaptic weights
$\bm{\omega} = (\omega^{(1)}, \dots, \omega^{(s)}) \in \mathcal{X}^s$, with $\mathcal{X} = \{-1,1\}$
Assume a mean-field Bernoulli variational approximation
\(
q(\bm{\omega}|\bm{\lambda}) = \prod_{i=1}^s q(\omega^{(i)}|\lambda^{(i)})
\),
where each synapse follows the exponential-family form
\begin{equation}
q(\omega^{(i)}|\lambda^{(i)}) =
\frac{\exp\!\left(\lambda^{(i)} \omega^{(i)}\right)}{2\cosh(\lambda^{(i)})}.
\end{equation}

Let $\mathcal{D}_t$ denote the current batch of data and
$\mathcal{L}(\bm{\lambda},\mathcal{D}_t)
= -\mathbb{E}_{q(\bm{\omega}|\bm{\lambda})}[\log p(\mathcal{D}_t|\bm{\omega})]$
the negative log-likelihood.
Assume a memory window of size $N$ and a prior parameter $\lambda_{\text{prior}}^{(i)}$.

The second-order expansion of the variational free energy yields the following
asymmetric optimal update rule:
\begin{align}
\lambda_{t}^{(i)}
=
\lambda_{t-1}^{(i)}
-
\frac{
\displaystyle
\frac{\partial \mathcal{L}(\bm{\lambda},\mathcal{D}_t)}{\partial \lambda^{(i)}}
\bigg|_{\bm{\lambda}=\bm{\lambda}_{t-1}}
+
\frac{\lambda_{t-1}^{(i)} - \lambda_{\text{prior}}^{(i)}}{N\,\cosh^2(\lambda_{t-1}^{(i)})}
}{
\displaystyle
\frac{1}{\cosh^2(\lambda_{t-1}^{(i)})}
+
2\tanh(\lambda_{t-1}^{(i)})
\frac{\partial \mathcal{L}(\bm{\lambda},\mathcal{D}_t)}{\partial \lambda^{(i)}}
\bigg|_{\bm{\lambda}=\bm{\lambda}_{t-1}}
+
\frac{\partial^2 \mathcal{L}(\bm{\lambda},\mathcal{D}_t)}{\partial \lambda^{(i)2}}
\bigg|_{\bm{\lambda}=\bm{\lambda}_{t-1}}
}.
\end{align}

This update corresponds to a memory-modulated second-order step with an
asymmetric, state-dependent learning rate.
\end{theorem}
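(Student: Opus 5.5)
The plan is to treat the per-synapse objective of Eq.~\eqref{eq:minimization-synapse} as a scalar function
\[
F(\lambda) = \mathcal{L}(\bm{\lambda},\mathcal{D}_t) + \tfrac{1}{N} D(\lambda,\lambda^{(i)}_{\mathrm{prior}}) + \left(1-\tfrac{1}{N}\right) D(\lambda,\lambda^{(i)}_{t-1}),
\]
with all other coordinates frozen at $\bm{\lambda}_{t-1}$. Here $D(\theta,\xi)=(\theta-\xi)\tanh\theta-\log\frac{\cosh\theta}{\cosh\xi}$ is the closed form of Prop.~\ref{prop:kl-bernoulli}; the objective itself comes from Prop.~\ref{prop:vfe-decomposition} and the mean-field factorization. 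I would then take one Newton step from $\lambda^{(i)}_{t-1}$, i.e.\ expand $F$ to second order around $\lambda^{(i)}_{t-1}$ and minimize the resulting quadratic. This gives $\lambda_t^{(i)}=\lambda_{t-1}^{(i)}-F'(\lambda_{t-1}^{(i)})/F''(\lambda_{t-1}^{(i)})$.

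\textbf{Derivatives of the KL term.} Differentiating $D$ in its first argument, the $\tanh\theta$ terms cancel, leaving
\[
\partial_\theta D=(\theta-\xi)\cosh^{-2}\theta,\qquad \partial^2_\theta D=\cosh^{-2}\theta\,\bigl(1-2(\theta-\xi)\tanh\theta\bigr).
\]
At $\theta=\lambda^{(i)}_{t-1}$ the KL-to-previous term has zero gradient and curvature $\cosh^{-2}\lambda^{(i)}_{t-1}$. The KL-to-prior term contributes gradient $(\lambda^{(i)}_{t-1}-\lambda^{(i)}_{\mathrm{prior}})/(N\cosh^2\lambda^{(i)}_{t-1})$, which is exactly the relaxation term in the numerator. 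Its curvature is $\frac1N\cosh^{-2}\lambda^{(i)}_{t-1}\,\bigl(1-2(\lambda^{(i)}_{t-1}-\lambda^{(i)}_{\mathrm{prior}})\tanh\lambda^{(i)}_{t-1}\bigr)$.

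\textbf{Collecting terms.} The two $\cosh^{-2}$ curvature pieces combine with weights $(1-\frac1N)+\frac1N=1$. This gives the denominator
\[
\cosh^{-2}\lambda_{t-1}^{(i)}+\partial^2\mathcal{L}-\tfrac{2}{N}(\lambda^{(i)}_{t-1}-\lambda^{(i)}_{\mathrm{prior}})\tanh(\lambda^{(i)}_{t-1})\cosh^{-2}\lambda^{(i)}_{t-1}.
\]

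\textbf{The asymmetric term (main obstacle).} The last term above is not yet in the stated form. To convert it, I would use the first-order stationarity condition of $F$ at the previous step, $\partial\mathcal{L}/\partial\lambda^{(i)}+(\lambda^{(i)}_{t-1}-\lambda^{(i)}_{\mathrm{prior}})/(N\cosh^2\lambda^{(i)}_{t-1})\approx0$. This lets me substitute
\[
-\tfrac1N(\lambda^{(i)}_{t-1}-\lambda^{(i)}_{\mathrm{prior}})\cosh^{-2}\lambda^{(i)}_{t-1}\approx \partial\mathcal{L}/\partial\lambda^{(i)},
\]
which turns the curvature correction into $2\tanh(\lambda^{(i)}_{t-1})\,\partial\mathcal{L}/\partial\lambda^{(i)}$ and yields the theorem's denominator.

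Justifying this substitution is the delicate point. The stationarity holds only approximately, since the Newton iterate is not an exact optimum, and the replacement is made inside a curvature term. I would argue it is consistent to the order of the second-order expansion: the discrepancy is proportional to the Newton residual $F'$, so its effect on the step is of higher order. I would present this explicitly as the approximation that defines the ``asymmetric'' form. Substituting both pieces back into the Newton step gives the stated update.
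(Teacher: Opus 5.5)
\textbf{There is a genuine gap: the step that converts your Newton denominator into the stated one does not hold up.} Your derivatives are correct. The plain Newton step $\lambda_{t-1}-F'/F''$ (dropping the superscript $(i)$) has the right numerator. Its denominator, however, is $F''(\lambda_{t-1})=\cosh^{-2}\lambda_{t-1}+\partial^2\mathcal{L}/\partial\lambda^2-\tfrac{2}{N}(\lambda_{t-1}-\lambda_{\mathrm{prior}})\tanh(\lambda_{t-1})\cosh^{-2}\lambda_{t-1}$.

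The relation you invoke, $\partial\mathcal{L}/\partial\lambda+(\lambda_{t-1}-\lambda_{\mathrm{prior}})/(N\cosh^2\lambda_{t-1})\approx 0$, is not the stationarity condition of the previous objective. That condition would involve $\mathcal{D}_{t-1}$ and $\lambda_{t-2}$. What you wrote is literally $F'(\lambda_{t-1})\approx 0$ for the current objective, which is the numerator of the very update you are computing. Imposing it consistently gives $\lambda_t=\lambda_{t-1}$.

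Your fallback, that the swap changes the step only by $O(F'^2)$, is true but proves too little. Every denominator of the form $F''+cF'$ passes the same test, including your own $F''$ with $c=0$. So you would have shown only that the stated rule agrees with a Newton step to leading order, not that the second-order expansion yields it. In particular, the gradient-sign-dependent term $2\tanh(\lambda_{t-1})\,\partial\mathcal{L}/\partial\lambda$, which is what makes the rule asymmetric, would be inserted by hand. Your Newton curvature does not depend on the sign of the gradient at all.

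\textbf{The missing idea is to rescale the stationarity condition before linearizing.} The combined KL gradient is exactly $\cosh^{-2}(\lambda)\bigl[(\lambda-\lambda_{t-1})+\tfrac1N(\lambda_{t-1}-\lambda_{\mathrm{prior}})\bigr]$. This follows from your cancellation plus $(1-\tfrac1N)(\lambda-\lambda_{t-1})+\tfrac1N(\lambda-\lambda_{\mathrm{prior}})=(\lambda-\lambda_{t-1})+\tfrac1N(\lambda_{t-1}-\lambda_{\mathrm{prior}})$. Multiplying $F'(\lambda)=0$ by $\cosh^2\lambda$ therefore gives
\[
(\lambda-\lambda_{t-1})+\tfrac{1}{N}(\lambda_{t-1}-\lambda_{\mathrm{prior}})+\cosh^2(\lambda)\,\frac{\partial\mathcal{L}}{\partial\lambda}(\lambda)=0,
\]
in which the KL part is exactly linear.

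The paper then linearizes only the product $\cosh^2(\lambda)\,\partial\mathcal{L}/\partial\lambda$ around $\lambda_{t-1}$. Its slope is $\cosh^2(\lambda_{t-1})\,\partial^2\mathcal{L}/\partial\lambda^2+2\sinh(\lambda_{t-1})\cosh(\lambda_{t-1})\,\partial\mathcal{L}/\partial\lambda$. Solving for $\Delta\lambda$ and dividing by $\cosh^2\lambda_{t-1}$ gives the stated update exactly, with no stationarity assumption.

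Equivalently, this is a Newton step on $G=\cosh^2(\lambda)F'(\lambda)$ instead of on $F'$. Since $G'/\cosh^2\lambda=F''+2\tanh(\lambda)F'$, this also explains why the correction you needed was precisely $2\tanh(\lambda_{t-1})F'(\lambda_{t-1})$.
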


\begin{proof}
We derive the optimal update for the variational parameter $\lambda^{(i)}$
by minimizing the variational free energy associated with synapse $i$
under the memory-limited Bayesian continual learning objective.

\textbf{Variational objective.}
Following the derivation of the memory-limited posterior in Prop.~\ref{prop:vfe-decomposition},
the variational free energy for a single synapse $i$ is given by
\begin{align}
\mathcal{F}(\lambda^{(i)},\mathcal{D}_t)
&=
\left(1-\frac{1}{N}\right)
\mathcal{D}_{KL}
\!\left(
q(\omega^{(i)}|\lambda^{(i)})
\,\|\,
q(\omega^{(i)}|\lambda_{t-1}^{(i)})
\right)
+
\frac{1}{N}
\mathcal{D}_{KL}
\!\left(
q(\omega^{(i)}|\lambda^{(i)})
\,\|\,
p(\omega^{(i)})
\right)
+
\mathcal{L}(\bm{\lambda},\mathcal{D}_t),
\label{eq:free-energy-single}
\end{align}
where
\(
\mathcal{L}(\bm{\lambda},\mathcal{D}_t)
=
-\mathbb{E}_{q(\bm{\omega}|\bm{\lambda})}
[\log p(\mathcal{D}_t|\bm{\omega})]
\)
is the negative log-likelihood. 

\textbf{Exponential-family form.}
Following Prop.~\ref{prop:exp-family-bernoulli}, each synapse follows a Bernoulli distribution over $\mathcal{X}$,
written in exponential-family form as
\begin{equation}
q(\omega^{(i)}|\lambda^{(i)})
=
\frac{\exp(\lambda^{(i)}\omega^{(i)})}
{2\cosh(\lambda^{(i)})}.
\label{eq:bern-exp-family}
\end{equation}
The prior $p(\omega^{(i)})$ is of the same form with natural parameter
$\lambda_{\text{prior}}^{(i)}$.

\textbf{Closed-form KL divergence.}
Prop.~\ref{prop:kl-bernoulli} states that for any Bernoulli distributions over $\omega \in \mathcal{X}$ in exponential-family form $q(\omega|\theta)$ and $q(\omega|\xi)$:
\begin{equation}
q(\omega|\theta) = \frac{\exp(\theta \, \omega)}{2 \cosh(\theta)}, \quad
q(\omega|\xi) = \frac{\exp(\xi \, \omega)}{2 \cosh(\xi)},
\end{equation}
the KL divergence between these two distributions admits the closed-form expression:
\begin{align}
\label{eq:kl-between-theta-xi}
\mathcal{D}_{KL}\big(q(\omega|\theta) \,\|\, q(\omega|\xi)\big)
&= \mathbb{E}_{q(\omega|\theta)} \Bigg[ \log \frac{q(\omega|\theta)}{q(\omega|\xi)} \Bigg] \notag \\
&= (\theta - \xi) \tanh(\theta) - \log \frac{\cosh(\theta)}{\cosh(\xi)}.
\end{align}

\textbf{Closed-form free-energy.}
Substituting the closed-form KL divergence of Bernoulli distributions from (Eq.~\eqref{eq:kl-between-theta-xi}) into the single-synapse variational free energy~\eqref{eq:free-energy-single}, we obtain:
\begin{align}
\mathcal{F}(\lambda^{(i)},\mathcal{D}_t)
&=
\left(1-\frac{1}{N}\right)
\Big[
(\lambda^{(i)}-\lambda_{t-1}^{(i)})\tanh(\lambda^{(i)})
-
\log\frac{\cosh(\lambda^{(i)})}{\cosh(\lambda_{t-1}^{(i)})}
\Big]\notag
\\
&\quad
+
\frac{1}{N}
\Big[
(\lambda^{(i)}-\lambda_{\text{prior}}^{(i)})\tanh(\lambda^{(i)})
-
\log\frac{\cosh(\lambda^{(i)})}{\cosh(\lambda_{\text{prior}}^{(i)})}
\Big]
+
\mathcal{L}(\bm{\lambda},\mathcal{D}_t).
\end{align}
 
We reorganize the terms to isolate contributions from previous parameter values \(\lambda_{t-1}^{(i)}\) and the prior \(\lambda_{\text{prior}}^{(i)}\). First, note that
\[
(\lambda^{(i)}-\lambda_{\text{prior}}^{(i)}) = (\lambda^{(i)}-\lambda_{t-1}^{(i)}) + (\lambda_{t-1}^{(i)}-\lambda_{\text{prior}}^{(i)}),
\]
so that
\begin{align}
\frac{1}{N} (\lambda^{(i)}-\lambda_{\text{prior}}^{(i)})\tanh(\lambda^{(i)})
&= \frac{1}{N} (\lambda^{(i)}-\lambda_{t-1}^{(i)})\tanh(\lambda^{(i)})
+ \frac{1}{N} (\lambda_{t-1}^{(i)}-\lambda_{\text{prior}}^{(i)})\tanh(\lambda^{(i)}).
\end{align}

Similarly, the logarithmic term can be split:
\begin{align}
-\frac{1}{N} \log\frac{\cosh(\lambda^{(i)})}{\cosh(\lambda_{\text{prior}}^{(i)})} 
&= -\frac{1}{N} \log\frac{\cosh(\lambda^{(i)})}{\cosh(\lambda_{t-1}^{(i)})} 
   - \frac{1}{N} \log\frac{\cosh(\lambda_{t-1}^{(i)})}{\cosh(\lambda_{\text{prior}}^{(i)})}.
\end{align}

Combining these expansions with the first KL term weighted by \(1-1/N\), we factor the contributions of \((\lambda^{(i)}-\lambda_{t-1}^{(i)})\tanh(\lambda^{(i)})\) and \(-\log(\cosh(\lambda^{(i)})/\cosh(\lambda_{t-1}^{(i)}))\):
\begin{align}
\mathcal{F}(\lambda^{(i)},\mathcal{D}_t)
&=
(\lambda^{(i)}-\lambda_{t-1}^{(i)})\tanh(\lambda^{(i)}) 
- \log\frac{\cosh(\lambda^{(i)})}{\cosh(\lambda_{t-1}^{(i)})} 
\notag\\
&\quad
+ \frac{1}{N} \Big[
(\lambda_{t-1}^{(i)}-\lambda_{\text{prior}}^{(i)})\tanh(\lambda^{(i)}) 
- \log\frac{\cosh(\lambda_{t-1}^{(i)})}{\cosh(\lambda_{\text{prior}}^{(i)})}
\Big]
\notag\\
&\qquad\qquad
+ \mathcal{L}(\bm{\lambda},\mathcal{D}_t).
\label{eq:free-energy-expanded}
\end{align}

\textbf{First-order optimality condition.}
Differentiating~\eqref{eq:free-energy-expanded} with respect to
$\lambda^{(i)}$ yields
\begin{align}
\frac{\partial \mathcal{F}}{\partial \lambda^{(i)}}
&=
\frac{1}{\cosh^2(\lambda^{(i)})}
\left[
(\lambda^{(i)}-\lambda_{t-1}^{(i)})
+
\frac{\lambda_{t-1}^{(i)}-\lambda_{\text{prior}}^{(i)}}{N}
\right]
+
\frac{\partial \mathcal{L}(\bm{\lambda},\mathcal{D}_t)}{\partial \lambda^{(i)}}.
\label{eq:stationarity}
\end{align}

The optimal update $\lambda_t^{(i)}$ satisfies
\begin{equation}
\frac{\partial \mathcal{F}}{\partial \lambda^{(i)}}=0 \iff  
\lambda^{(i)}-\lambda_{t-1}^{(i)}= - \cosh^2(\lambda^{(i)}) \frac{\partial \mathcal{L}(\bm{\lambda},\mathcal{D}_t)}{\partial \lambda^{(i)}}  - \frac{\lambda_{t-1}^{(i)}-\lambda_{\text{prior}}^{(i)}}{N}
~\label{eq:opt-cond}
\end{equation}

\textbf{Second-order expansion.}
To retrieve terms in $\lambda_{t-1}^{(i)}$, we apply a second-order Taylor expansion around $\lambda_{t-1}^{(i)}$ for $\cosh^2(\lambda^{(i)}) \frac{\partial \mathcal{L}(\bm{\lambda},\mathcal{D}_t)}{\partial \lambda^{(i)}}$, with $\mathcal{L}$ twice differentiable, and $\cosh^2(\lambda^{(i)})$ differentiable. 
\begin{align}
\cosh^2(\lambda_t^{(i)})
\frac{\partial \mathcal{L}(\bm{\lambda},\mathcal{D}_t)}{\partial \lambda^{(i)}}
&\approx
\cosh^2(\lambda_{t-1}^{(i)})
\frac{\partial \mathcal{L}(\bm{\lambda},\mathcal{D}_t)}{\partial \lambda^{(i)}}\big\rvert_{\bm{\lambda} = \bm{\lambda}_{t-1}}
\notag\\
&\quad
+
(\lambda_t^{(i)}-\lambda_{t-1}^{(i)})
\Bigg[
\cosh^2(\lambda_{t-1}^{(i)})
\frac{\partial^2 \mathcal{L}(\bm{\lambda},\mathcal{D}_t)}{\partial(\lambda^{(i)})^2}\big\rvert_{\bm{\lambda} = \bm{\lambda}_{t-1}}
\notag\\
&\qquad\qquad
+
2\sinh(\lambda_{t-1}^{(i)})\cosh(\lambda_{t-1}^{(i)})
\frac{\partial \mathcal{L}(\bm{\lambda},\mathcal{D}_t)}{\partial (\lambda^{(i)})}\big\rvert_{\bm{\lambda} = \bm{\lambda}_{t-1}} \Bigg]
\label{eq:taylor}
\end{align}

\textbf{Solving for the update.}

Using the identity $\tanh(x) = \sinh(x) / \cosh(x)$, substituting~\eqref{eq:taylor} into the stationary condition
\eqref{eq:opt-cond} and collecting terms linear in
$\Delta\lambda^{(i)} = \lambda_t^{(i)}-\lambda_{t-1}^{(i)}$
yields
\begin{align}
\lambda_{t}^{(i)}
=
\lambda_{t-1}^{(i)}
-
\frac{
\displaystyle
\frac{\partial \mathcal{L}(\bm{\lambda},\mathcal{D}_t)}{\partial \lambda^{(i)}}\big\rvert_{\bm{\lambda} = \bm{\lambda}_{t-1}}
+
\frac{\lambda_{t-1}^{(i)}-\lambda_{\text{prior}}^{(i)}}
{N\cosh^2(\lambda_{t-1}^{(i)})}
}{
\displaystyle
\frac{1}{\cosh^2(\lambda_{t-1}^{(i)})}
+
2\tanh(\lambda_{t-1}^{(i)})
\frac{\partial \mathcal{L}(\bm{\lambda},\mathcal{D}_t)}{\partial \lambda^{(i)}}\big\rvert_{\bm{\lambda} = \bm{\lambda}_{t-1}}
+
\frac{\partial^2 \mathcal{L}(\bm{\lambda},\mathcal{D}_t)}{\partial (\lambda^{(i)})^2}\big\rvert_{\bm{\lambda} = \bm{\lambda}_{t-1}}
}.
\end{align}

This concludes the proof.
\end{proof}

\begin{proposition}[Bounded Learning Rate with a Curvature Surrogate]
\label{prop:bounded-learning-rate}
Consider the binary Bayesian update rule with learning rate
\begin{align}
\eta(\lambda_{t-1}^{(i)}) =
\frac{1}{\frac{1}{\cosh^2(\lambda_{t-1}^{(i)})}
+ 2\tanh(\lambda_{t-1}^{(i)})
\frac{\partial \mathcal{L}(\bm{\lambda},\mathcal{D}_t)}{\partial \lambda^{(i)}}\big\rvert_{\bm{\lambda}=\bm{\lambda}_{t-1}}
+ \frac{\partial^2 \mathcal{L}(\bm{\lambda},\mathcal{D}_t)}{\partial (\lambda^{(i)})^2}\big\rvert_{\bm{\lambda}=\bm{\lambda}_{t-1}}} .
\end{align}

In large-scale neural networks, the exact evaluation of the second derivative term
\(
\partial^2 \mathcal{L} / \partial (\lambda^{(i)})^2
\)
is computationally prohibitive.
Let \(f : \mathbb{R} \to \mathbb{R}\) be a surrogate curvature function that replaces this second derivative in the denominator.

Fix a maximum learning rate \(\alpha_{\max} > 0\).
If \(f\) satisfies
\begin{align}
f(\lambda_{t-1}^{(i)}) \;\geq\;
\frac{1}{\alpha_{\max}}
- \frac{1}{\cosh^2(\lambda_{t-1}^{(i)})}
- 2\tanh(\lambda_{t-1}^{(i)})
\frac{\partial \mathcal{L}(\bm{\lambda},\mathcal{D}_t)}{\partial \lambda^{(i)}}
\bigg|_{\bm{\lambda}=\bm{\lambda}_{t-1}},
\label{eq:f-surrogate-cond}
\end{align}
then the resulting learning rate satisfies
\[
0 < \eta(\lambda_{t-1}^{(i)}) \leq \alpha_{\max}
\quad \forall \lambda_{t-1}^{(i)} \in \mathbb{R},
\]
and therefore defines a valid bounded gradient descent step.

In particular, the choice
\begin{align}
f(\lambda_{t-1}^{(i)})
=
\frac{1}{\alpha_{\max}}
+ 2 \left|
\frac{\partial \mathcal{L}(\bm{\lambda},\mathcal{D}_t)}{\partial \lambda^{(i)}}
\bigg|_{\bm{\lambda}=\bm{\lambda}_{t-1}}
\right|
\label{eq:f-surrogate}
\end{align}
is a computationally efficient surrogate that guarantees positivity and boundedness of the learning rate without requiring second-order derivatives.
\end{proposition}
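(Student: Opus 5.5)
Write $g := \partial\mathcal{L}/\partial\lambda^{(i)}|_{\bm{\lambda}_{t-1}}$ and $D := \cosh^{-2}(\lambda_{t-1}^{(i)}) + 2\tanh(\lambda_{t-1}^{(i)})\,g + f(\lambda_{t-1}^{(i)})$, the denominator after replacing the second derivative by $f$, so that $\eta = 1/D$. The plan is to bound $D$ from below by $1/\alpha_{\max}$ and then invert.

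\textbf{Step 1 (general bound).} Adding $\cosh^{-2}(\lambda_{t-1}^{(i)}) + 2\tanh(\lambda_{t-1}^{(i)})\,g$ to both sides of \eqref{eq:f-surrogate-cond} gives $D \ge 1/\alpha_{\max} > 0$. Since $D$ is strictly positive, $\eta = 1/D$ is well defined and positive. Because $x\mapsto 1/x$ is decreasing on $(0,\infty)$, we also get $\eta \le \alpha_{\max}$. This holds for every $\lambda_{t-1}^{(i)}\in\mathbb{R}$, since the condition is imposed pointwise.

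\textbf{Step 2 (the particular choice).} It remains to check that the choice \eqref{eq:f-surrogate} satisfies \eqref{eq:f-surrogate-cond}. Subtracting the right-hand side of \eqref{eq:f-surrogate-cond} from $f$, this reduces to showing
\[
\cosh^{-2}(\lambda_{t-1}^{(i)}) + 2\tanh(\lambda_{t-1}^{(i)})\,g + 2|g| \ge 0 .
\]
Two facts give this. First, $\cosh^{-2}(\lambda_{t-1}^{(i)}) > 0$. Second, $|\tanh(\lambda_{t-1}^{(i)})| < 1$, so $2\tanh(\lambda_{t-1}^{(i)})\,g \ge -2|g|$. Hence the sum is strictly positive, and in fact $D > 1/\alpha_{\max}$. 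Positivity and boundedness then follow from Step 1. The choice uses only the first-order gradient $g$, which is already computed for the update, so no second derivative is needed.

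There is no real obstacle here; the only point needing care is that $|\tanh|\le 1$ is what lets the $2|g|$ term absorb the sign-indefinite term $2\tanh(\cdot)\,g$.
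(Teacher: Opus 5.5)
Your proof is correct and follows essentially the same route as the paper: bound the denominator below by $1/\alpha_{\max}$ to get $0<\eta\le\alpha_{\max}$, then use $\cosh^{-2}\ge 0$ and $|\tanh|\le 1$ (so that $2\tanh(\lambda_{t-1}^{(i)})\,g \ge -2|g|$) to show that the choice $f = 1/\alpha_{\max} + 2|g|$ satisfies the condition. Splitting the argument into the general implication and the check of the particular choice is slightly cleaner than the paper's presentation, and your remark that the bound is actually strict ($\eta<\alpha_{\max}$, since $\cosh^{-2}>0$) is correct.
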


\begin{proof}
Fix $\alpha_{\max} > 0$. By definition of the update rule, replacing the second derivative with a surrogate curvature function $f$ yields
\begin{align}
\frac{1}{\eta(\lambda_{t-1}^{(i)})}
=
\frac{1}{\cosh^2(\lambda_{t-1}^{(i)})}
+ 2 \tanh(\lambda_{t-1}^{(i)})
\frac{\partial \mathcal{L}(\bm{\lambda},\mathcal{D}_t)}{\partial \lambda^{(i)}}
\bigg|_{\bm{\lambda}=\bm{\lambda}_{t-1}}
+ f(\lambda_{t-1}^{(i)}).
\label{eq:inv-lr-surrogate}
\end{align}

The constraint
\(
0 < \eta(\lambda_{t-1}^{(i)}) \leq \alpha_{\max}
\)
is equivalent to requiring
\begin{align}
\frac{1}{\eta(\lambda_{t-1}^{(i)})} \geq \frac{1}{\alpha_{\max}} .
\label{eq:lr-bound}
\end{align}
Thus, a sufficient condition is
\begin{align}
\frac{1}{\cosh^2(\lambda_{t-1}^{(i)})}
+ 2 \tanh(\lambda_{t-1}^{(i)})
\frac{\partial \mathcal{L}(\bm{\lambda},\mathcal{D}_t)}{\partial \lambda^{(i)}}
\bigg|_{\bm{\lambda}=\bm{\lambda}_{t-1}}
+ f(\lambda_{t-1}^{(i)})
\;\geq\;
\frac{1}{\alpha_{\max}} .
\label{eq:sufficient}
\end{align}

We now derive a uniform lower bound for the terms that do not involve $f$.
Since $1/\cosh^2(x) \geq 0$ and $|\tanh(x)| \leq 1$ for all $x \in \mathbb{R}$, we obtain
\begin{align}
2 \tanh(\lambda_{t-1}^{(i)})
\frac{\partial \mathcal{L}(\bm{\lambda},\mathcal{D}_t)}{\partial \lambda^{(i)}}
\bigg|_{\bm{\lambda}=\bm{\lambda}_{t-1}}
\;\geq\;
-2 \left|
\frac{\partial \mathcal{L}(\bm{\lambda},\mathcal{D}_t)}{\partial \lambda^{(i)}}
\bigg|_{\bm{\lambda}=\bm{\lambda}_{t-1}}
\right|.
\end{align}
Consequently,
\begin{align}
\frac{1}{\cosh^2(\lambda_{t-1}^{(i)})}
+ 2 \tanh(\lambda_{t-1}^{(i)})
\frac{\partial \mathcal{L}(\bm{\lambda},\mathcal{D}_t)}{\partial \lambda^{(i)}}
\bigg|_{\bm{\lambda}=\bm{\lambda}_{t-1}}
\;\geq\;
-2 \left|
\frac{\partial \mathcal{L}(\bm{\lambda},\mathcal{D}_t)}{\partial \lambda^{(i)}}
\bigg|_{\bm{\lambda}=\bm{\lambda}_{t-1}}
\right|.
\label{eq:lower-bound}
\end{align}

Substituting~\eqref{eq:lower-bound} into~\eqref{eq:sufficient}, it suffices to define the surrogate curvature term as
\begin{align}
f(\lambda_{t-1}^{(i)})
\;\geq\;
\frac{1}{\alpha_{\max}}
+ 2 \left|
\frac{\partial \mathcal{L}(\bm{\lambda},\mathcal{D}_t)}{\partial \lambda^{(i)}}
\bigg|_{\bm{\lambda}=\bm{\lambda}_{t-1}}
\right|.
\end{align}

This choice of $f$ depends only on first-order information and can be evaluated efficiently even in large-scale neural networks.
With this surrogate, the denominator in~\eqref{eq:inv-lr-surrogate} is strictly positive and uniformly lower bounded by $1/\alpha_{\max}$ for all $\lambda_{t-1}^{(i)} \in \mathbb{R}$, ensuring
\(
0 < \eta(\lambda_{t-1}^{(i)}) \leq \alpha_{\max}.
\)
\end{proof}

\section{Main results: Experiments methodology}
\label{appendix:methodology}

\subsection{Implemented Objective}

In practice, we optimize a scaled bounded-memory variational objective for BiMU, which explicitly balances data fitting, stability to previous parameters, and forgetting toward the prior. Given an online batch $\mathcal{D}_t$, we minimize
\begin{equation}
\label{eq:bimu-objective}
\mathcal{S}_t(\bm{\lambda}) =
\beta_{\text{L}}\,\mathcal{L}(\bm{\lambda}, \mathcal{D}_t)
+ \beta_{\text{KL}}
\left[
\left(1 - \frac{1}{N}\right)
\mathrm{KL}\!\left(q(\bm{\omega}\mid\bm{\lambda}) \,\|\, q(\bm{\omega}\mid\bm{\lambda}_{t-1})\right)
+ \frac{1}{N}
\mathrm{KL}\!\left(q(\bm{\omega}\mid\bm{\lambda}) \,\|\, p(\bm{\omega})\right)
\right],
\end{equation}
where $\bm{\lambda}$ denotes the Bernoulli natural parameters, 
$\mathcal{L}(\bm{\lambda}, \mathcal{D}_t) = -\mathbb{E}_{q(\bm{\omega}\mid\bm{\lambda})}[\log p(\mathcal{D}_t\mid\bm{\omega})]$
is the negative log-likelihood, $p(\bm{\omega})$ is the prior, $N$ controls the effective memory window, likelihood coefficient $\beta_{\text{L}}$ controls the scale of the asymmetry, and KL coefficient $\beta_{\text{KL}}$ allows for regularization strength.

The second-order approximation of the optimal update of $\mathcal{S}_t$ yields the metaplastic learning rate
\begin{equation}
\label{eq:metaplastic-lr-appendix}
\bm{\eta}(\bm{\lambda}_{t-1}) =
\frac{1}{
\beta_{\text{KL}}\!\left(1-\tanh^2(\bm{\lambda}_{t-1})\right)
+ 2\beta_{\text{L}}\tanh(\bm{\lambda}_{t-1})\, g
+ 2\beta_{\text{L}}|g|
+ \alpha_{\max}^{-1}
},
\end{equation}
where $g = \nicefrac{\partial \mathcal{L}(\bm{\lambda},\mathcal{D}_t)}{\partial \bm{\lambda}
\rvert_{\bm{\lambda}=\bm{\lambda}_{t-1}}}$ and $\alpha_{\max}$ bounds the maximum step size.
We prefer $1-\tanh^2(\bm{\lambda}_{t-1})$ to $\cosh^{-2}(\bm{\lambda}_{t-1})$, which is mathematically equivalent but allows factoring the computation of $\tanh(\bm{\lambda}_{t-1})$. The resulting online update of the natural parameters is
\begin{equation}
\label{eq:bimu-update-appendix}
\bm{\lambda}_t =
\bm{\lambda}_{t-1}
- \bm{\eta}(\bm{\lambda}_{t-1}) \odot
\left[
\gamma \, \beta_{\text{L}} \,g
+
\frac{\beta_{\text{KL}}}{N}
(\bm{\lambda}_{t-1} - \bm{\lambda}_{\text{prior}})
\left(1-\tanh^2(\bm{\lambda}_{t-1})\right)
\right],
\end{equation}
where $\gamma$ is an added learning rate scaling the gradient, and $\odot$ denotes element-wise multiplication.
This formulation is used in all experiments unless stated otherwise.

\subsection{Hyperparameters tuning}\label{appendix:hpo}

Hyperparameter optimization (HPO) is performed using Optuna~\cite{akiba2019optuna}, with $200$ optimization trials per method. We define a task-dependent objective $\mathcal{H}$, evaluated at each task, as
\begin{equation}
\mathcal{H}_t =
w_1 \cdot \text{Acc}_{0}
+ w_2 \cdot \overline{\text{Acc}}
+ w_3 \cdot \text{Acc}_{t},
\label{eq:hpo-cost}
\end{equation}
where $\text{Acc}_{0}$ is the accuracy on the first task, $\overline{\text{Acc}}$ denotes the mean accuracy over all tasks learned so far, and $\text{Acc}_{t}$ is the accuracy on the current task.
The objective $\mathcal{H}$ jointly captures initial-task retention, overall performance, and adaptability to the most recent task, with their relative importance controlled by weights $(w_1, w_2, w_3)$. Maximizing $\mathcal{H}$ therefore promotes balanced hyperparameter configurations that neither over-emphasize stability nor plasticity, preventing HPO from artificially amplifying catastrophic forgetting or catastrophic remembering in the different algorithms.

\subsection{1000-tasks Permuted MNIST}

\textbf{Experimental setup.}
We evaluate all methods on the Permuted MNIST benchmark with $1000$ sequential tasks, where each task applies a distinct random pixel permutation to the MNIST images, inducing continual distribution shifts. Training follows a fully online protocol: each sample is observed once and no replay is used.

All approaches share a common continual learning setup. We use a compact MLP with a single hidden layer of 100 neurons, train for one epoch per task with batch size 1, and evaluate with batch size 100. Inputs are standardized. Monte Carlo sampling is used for Bayesian methods. BiMU and BayesBiNN employ 5 posterior samples for both inference and gradient estimation, while MESU uses 10 samples. Out-of-distribution detection is evaluated using Fashion-MNIST~\cite{xiao2017fashion} as OOD data. Performance is measured by ROC-AUC, computed over 1000 decision thresholds to discriminate in-distribution from OOD samples.

\textbf{Hyperparameter optimization.}

Hyperparameters are computed on $10$ tasks of Permuted MNIST with different permutations as the ones presented in the main paper as validation. Hyperparameters are obtained by maximizing the hyperparameter tuning cost function $\mathcal{H}$ (see Appendix~\ref{appendix:hpo}).
\begin{table}[htbp!]
\centering
\caption{Hyperparameter configurations for all evaluated methods on the 1000-tasks Permuted MNIST benchmark with 100 neurons.}
\label{tab:hpo-params}
\vskip 0.1in
\scriptsize
\setlength{\tabcolsep}{-0.5pt}
\begin{sc}
\begin{tabular}{lcccc}
\toprule
\textbf{Method} & \textbf{Activation} & \textbf{Learning Rate(s)} & \textbf{Additional Parameters} & \textbf{MC samples} \\
\midrule
\multicolumn{5}{l}{\textit{Binary Neural Networks}} \\
\midrule
BiMU & Reverse Binary Gate & $\gamma=4.9$ &
$\alpha_{\text{max}}=0.0023$, $\beta_{\text{L}}$\ $=161.3$, $\beta_{\text{KL}}$\ $=3.76$, $N=700$ & $5$ \\[3pt]
BayesBiNN & Reverse Binary Gate & $0.77$ &
Prior strength $=1.25\times10^{-5}$ & $5$ \\[3pt]
Synaptic Metaplasticity & Sign & $3.7\times10^{-5}$ &
Metaplasticity $=7.2$, weight decay $=2.9\times10^{-9}$ & 1 \\[3pt]
STE (Baseline) & Sign & $1\times10^{-4}$ &
Weight decay $=2.2\times10^{-9}$ & 1 \\[3pt]
\midrule
\multicolumn{5}{l}{\textit{Real-Valued Neural Networks}} \\
\midrule
MESU & ReLU & $\alpha_\mu=1.6$, $\alpha_\sigma=2.1$ &
$\sigma_{\text{prior}}=0.3$, $N=600{,}000$, clamp grad $=0.36$ & $10$ \\[3pt]
Online EWC & ReLU & $0.003$ &
Importance $=4.1$, downweighting $=0.7$ & 1 \\[3pt]
Synaptic Intelligence & ReLU & $0.003$ &
Coeff.\ $=1\times10^{-4}$, damping $=1.0$ & 1 \\[3pt]
SGD (Baseline) & ReLU & $9\times10^{-4}$ & -- & 1 \\
\bottomrule
\end{tabular}
\end{sc}
\vskip -0.05in
\end{table}

\subsection{OpenLORIS-Object}

\textbf{Experimental setup.}
We evaluate all methods on the OpenLORIS-Object dataset~\cite{she2020openloris} using the sequential factor analysis protocol. The benchmark comprises 12 sequential tasks defined by four nuisance factors (illumination, occlusion, pixel corruption, and clutter) each evaluated at three difficulty levels, covering 19 object classes. Learning follows a fully online continual setting, where each sample is processed once without replay.

We adopt a frozen ImageNet-pretrained VGG19~\cite{deng2009imagenet, simonyan2014very} network as a feature extractor. Images are preprocessed using the standard VGG19 pipeline, and features are taken from the final convolutional block after removing the classifier, yielding 512×7×7 representations. To study adaptation under constrained capacity, we randomly subsample these features to 8,192 and 1,024 dimensions when specified.

All methods share a unified continual learning configuration. A linear classifier is trained online with no hidden layers, for one epoch per task, using batch size 1 for training and batch size 4 for evaluation. No input normalization is applied. Bayesian methods (MESU, BiMU, and BayesBiNN) use 10 Monte Carlo samples for inference and gradient estimation.

\subsubsection{Continual Learning}

\textbf{Additional experimental setup.}

We evaluate all algorithms on 18 classes instead of 19. The `toy' class is removed from the dataset. We evaluate out-of-distribution performance using images taken from the `toy' class. The ROC-AUC is computed through 1000 threshold points to distinguish between in and out-of-distribution data.

\textbf{Hyperparameter optimization.}

Hyperparameters are computed on the validation set of OpenLORIS-Object. Hyperparameters are obtained by maximizing the hyperparameter tuning cost function $\mathcal{H}$ (see Appendix~\ref{appendix:hpo}).

\begin{table}[htbp!]
\centering
\caption{Hyperparameter configurations for OpenLORIS-Object for 25{,}088, 8{,}192, and 1{,}024 input features.}
\label{tab:hpo-params-extended}
\vskip 0.1in
\scriptsize
\setlength{\tabcolsep}{0pt}
\begin{sc}
\begin{tabular}{lccc}
\toprule
\textbf{Method}  & \textbf{Learning Rate(s)} & \textbf{Additional Parameters} & \textbf{MC samples} \\
\midrule
\multicolumn{4}{l}{\textbf{25{,}088 input features}} \\
\midrule
\multicolumn{4}{l}{\textit{Binary Neural Networks}} \\
\midrule
BiMU & $\gamma=2.16$ &
$\alpha_{\text{max}}=0.06$, $\beta_{\text{L}}$\ $=164.9$, $\beta_{\text{KL}}$\ $=0.156$, $N=6900$ & 10 \\[3pt]

BayesBiNN & $0.011$ &
Prior strength $=1.3\times10^{-6}$ & 10 \\[3pt]

Synaptic Metaplasticity & $3\times10^{-4}$ &
Metaplasticity $=5.6$, weight decay $=3.3\times10^{-12}$ & 1 \\[3pt]

STE (Baseline) & $1\times10^{-4}$ &
-- & 1 \\[3pt]
\midrule
\multicolumn{4}{l}{\textit{Real-Valued Neural Networks}} \\
\midrule

MESU & $\alpha_\mu=71.91$, $\alpha_\sigma=0.015$ &
$\sigma_{\text{prior}}=0.69$, $N=449{,}000$, clamp grad $=0.61$ & 10 \\[3pt]

Online EWC & $3.9\times10^{-4}$ &
Importance $=0.12$, downweighting $=0.03$ & 1 \\[3pt]

Synaptic Intelligence & $0.002$ &
Coeff.\ $=7\times10^{-5}$, damping $=12.6$ & 1 \\[3pt]

SGD (Baseline) & $0.002$ & -- & 1 \\
\midrule
\multicolumn{4}{l}{\textbf{8{,}192 input features}} \\
\midrule

\multicolumn{4}{l}{\textit{Binary Neural Networks}} \\
\midrule
BiMU & $\gamma=33.2$ &
$\alpha_{\text{max}}=0.08$, $\beta_{\text{L}}$\ $=187.8$, $\beta_{\text{KL}}$\ $=1.5$, $N=6700$ & 10 \\[3pt]

BayesBiNN & $0.86$ &
Prior strength $=1\times10^{-6}$ & 10 \\[3pt]

Synaptic Metaplasticity & $6\times10^{-4}$ &
Metaplasticity $=6$, weight decay $=6.4\times10^{-12}$ & 1 \\[3pt]

STE (Baseline) & $0.003$ &
-- & 1 \\[3pt]

\midrule
\multicolumn{4}{l}{\textit{Real-Valued Neural Networks}} \\
\midrule
MESU & $\alpha_\mu=16.34$, $\alpha_\sigma=0.04$ &
$\sigma_{\text{prior}}=0.9$, $N=460{,}000$, clamp grad $=0.9$ & 10 \\[3pt]

Online EWC & $0.001$ &
Importance $=0.069$, downweighting $=0.2$ & 1 \\[3pt]

Synaptic Intelligence & $0.001$ &
Coeff.\ $=0.025$, damping $=0.15$ & 1 \\[3pt]

SGD (Baseline) & $0.01$ & -- & 1 \\
\midrule
\multicolumn{4}{l}{\textbf{1{,}024 input features}} \\
\midrule

\multicolumn{4}{l}{\textit{Binary Neural Networks}} \\
\midrule
BiMU & $\gamma=0.87$ &
$\alpha_{\text{max}}=0.092$, $\beta_{\text{L}}$\ $=27.26$, $\beta_{\text{KL}}$\ $=1.6$, $N=8900$ & 10 \\[3pt]

BayesBiNN & $0.3$ &
Prior strength $=6.25\times10^{-6}$ & 10 \\[3pt]

Synaptic Metaplasticity & $6\times10^{-4}$ &
Metaplasticity $=2.15$, weight decay $=5.22\times10^{-6}$ & 1\\[3pt]

STE (Baseline) & $0.0035$ &
-- & 1 \\[3pt]

\midrule
\multicolumn{4}{l}{\textit{Real-Valued Neural Networks}} \\
\midrule
MESU & $\alpha_\mu=2.96$, $\alpha_\sigma=11.27$ &
$\sigma_{\text{prior}}=0.92$, $N=275{,}000$, clamp grad $=0.34$ & 10 \\[3pt]

Online EWC & $0.005$ &
Importance $=0.028$, downweighting $=0.6$ & 1 \\[3pt]

Synaptic Intelligence & $0.006$ &
Coeff.\ $=0.003$, damping $=0.13$ & 1 \\[3pt]

SGD (Baseline) & $89.5$ & -- & 1 \\
\bottomrule
\end{tabular}
\end{sc}
\vskip -0.05in
\end{table}
\FloatBarrier

\subsubsection{Imbalanced Active Continual Learning}

\textbf{Additional experimental setup.}

From the 19 classes of the OpenLORIS-Object dataset~\cite{she2020openloris} under the sequential factor analysis protocol, we designate as low frequency the classes containing fewer than 700 training samples. To induce further class imbalance, we randomly remove between 50\% and 80\% of the training images from each low frequency class.

For evaluation, the test set is balanced by subsampling all classes to the minimum class cardinality, enabling separate and fair assessment of performance on high and low frequency classes. In the main results, we perform an active learning ablation study, comparing epistemic, aleatoric, predictive, variation ratio (VR), and oracle VR-True strategies across a range of uncertainty thresholds and multiple algorithms.

\textbf{Hyperparameter optimization.}

Hyperparameter optimization is conducted without active learning, and the resulting performance is reported as the baseline. Hyperparameters are computed on the validation set of OpenLORIS-Object. They are obtained by maximizing the hyperparameter tuning cost function $\mathcal{H}$ (see Appendix~\ref{appendix:hpo}).

\begin{table}[htbp!]
\centering
\caption{Hyperparameters for algorithms on the OpenLORIS-Object dataset with an imbalanced training set with 8{,}192 input features.}
\label{tab:hpo-bayes-bimu}
\vskip 0.1in
\scriptsize
\setlength{\tabcolsep}{3pt}
\begin{sc}
\begin{tabular}{lccc}
\toprule
\textbf{Method}  & \textbf{Learning Rate(s)} & \textbf{Additional Parameters} & \textbf{MC samples} \\
\midrule
\multicolumn{4}{l}{\textit{Binary Neural Networks}} \\
\midrule
BiMU & $\gamma=48.7$ &
$\alpha_{\text{max}}=0.065$, $\beta_{\text{L}}$\ $=16.7$, $\beta_{\text{KL}}$\ $=0.53$, $N=1600$ & 10 \\[3pt]

BayesBiNN & $0.066$ &
Prior strength $=4.6\times10^{-5}$ & 10 \\[3pt]

STE & $0.005$ &
weight decay $=5\times10^{-7}$ & 1 \\[3pt]

\midrule
\multicolumn{4}{l}{\textit{Real-valued Neural Networks}} \\
\midrule
MESU & $\alpha_\mu=60.16$, $\alpha_\sigma=2.95$ &
$\mu_{\text{prior}}=0$, $\sigma_{\text{prior}}=0.36$, $N=328{,}000$, clamp grad $=0.7$ & 10 \\[3pt]

MESU (standardized) & $\alpha_\mu=20.2$, $\alpha_\sigma=0.005$ &
$\mu_{\text{prior}}=0$, $\sigma_{\text{prior}}=0.61$, $N=485{,}000$, clamp grad $=0.05$ & 10 \\[3pt]

\bottomrule
\end{tabular}
\end{sc}
\vskip -0.05in
\end{table}
\FloatBarrier

\subsection{Animals Detection: Imbalanced active learning}\label{appendix:animals-imbalance}

In the main text, we evaluate and compare active learning strategies on a subset of the Animals Detection dataset~\cite{kaggle-animals}. From the 80 available classes, we select 20 classes exhibiting strong class imbalance (see Table~\ref{tab:classes}). Classes with fewer than 200 training samples are considered low frequency. For fair evaluation, we construct a balanced test set by uniformly sampling 50 examples per class, allowing us to explicitly assess the impact of class imbalance on low and high frequency classes.

Images are processed using a frozen VGG19 network~\cite{simonyan2014very} pretrained on ImageNet~\cite{deng2009imagenet}. Standard VGG19 preprocessing is applied, and features are extracted from the final convolutional block after removing the classifier, yielding 512×7×7 representations. All methods share a common experimental configuration. A linear classifier is trained online with no hidden layers, for a single epoch, using batch size 1 for training and batch size 50 for evaluation. No input normalization is applied. Bayesian methods (MESU, BiMU, and BayesBiNN) use 10 Monte Carlo samples for inference and gradient estimation.

\begin{table}[htbp!]
\caption{Train/test splits for selected animal classes. Classes under 200 training examples are considered as being under-represented classes.}
~\label{tab:classes}
\setlength{\tabcolsep}{3pt}
\vskip 0.15in
\begin{center}
\begin{small}
\begin{sc}
\begin{tabular}{lrr}
\toprule
\textbf{Class} & \textbf{Number of train examples} & \textbf{Number of test examples} \\
\midrule
Butterfly & 1997 & 50 \\
Lizard & 1412 & 50 \\
Fish & 1404 & 50 \\
Monkey & 1043 & 50 \\
Spider & 1015 & 50 \\
Eagle & 849 & 50 \\
Frog & 617 & 50 \\
Jellyfish & 501 & 50 \\
Penguin & 390 & 50 \\
Whale & 291 & 50 \\
Zebra & 164 & 50 \\
Crocodile & 136 & 50 \\
Leopard & 132 & 50 \\
Sheep & 125 & 50 \\
Raccoon & 106 & 50 \\
Raven & 91 & 50 \\
Panda & 62 & 50 \\
Lynx & 66 & 50 \\
Bull & 72 & 50 \\
Scorpion & 76 & 50 \\
\bottomrule
\end{tabular}
\end{sc}
\end{small}
\end{center}
\vskip -0.1in
\end{table}
\FloatBarrier

\textbf{Hyperparameter optimization.}

To decouple model selection from the active-learning mechanism, we tune all training hyperparameters once in a fully supervised streaming run (i.e., the model updates on every incoming example; no querying is applied during HPO).  In the active-learning runs, we do not retune hyperparameters for any acquisition function or labeling budget: accuracy-budget curves are obtained solely by sweeping the uncertainty threshold $\tau$ with these fixed settings. This protocol ensures that differences between acquisition strategies reflect the querying criterion rather than strategy-specific hyperparameter tuning.

\begin{table}[htbp!]
\centering
\caption{Baseline hyperparameter configurations for the Animals Detection dataset with imbalanced classes.}
\label{tab:hpo-animals}
\vskip 0.1in
\scriptsize
\setlength{\tabcolsep}{3pt}
\begin{sc}
\begin{tabular}{lccc}
\toprule
\textbf{Method}  & \textbf{Learning Rate(s)} & \textbf{Additional Parameters} & \textbf{MC samples} \\
\midrule
\multicolumn{4}{l}{\textit{Binary Neural Networks}} \\
\midrule

BiMU & $\gamma=5.8$ &
$\alpha_{\text{max}}=0.054$, $\beta_{\text{L}}$\ $=14$, $\beta_{\text{KL}}$\ $=0.16$, $N=4600$ & 10 \\[3pt]

BayesBiNN & $10$ &
Prior strength $=1\times10^{-6}$ & 10 \\[3pt]

STE & $0.0001$ &
weight decay $=2.4\times10^{-9}$ & 1 \\[3pt]

\midrule
\multicolumn{4}{l}{\textit{Real-Valued Neural Networks}} \\
\midrule

MESU & $\alpha_\mu=42.4$, $\alpha_\sigma=97.7$ &
$\sigma_{\text{prior}}=0.56$, $N=61{,}000$, clamp grad $=0.1$ & 10 \\[3pt]

MESU (standardized) & $\alpha_\mu=65.6$, $\alpha_\sigma=4.7$ &
$\sigma_{\text{prior}}=0.86$, $N=397{,}000$, clamp grad $=0.41$ & 10 \\[3pt]

\bottomrule
\end{tabular}
\end{sc}
\vskip -0.05in
\end{table}

\FloatBarrier

\section{Backward transfer clarifications}\label{appendix:bwt}

Backward transfer (BWT)~\citep{lopez2017gradient} is a standard metric for quantifying forgetting in continual learning. 
If $a_{t,i}$ denotes the test accuracy on task $i$ after training up to task $t$, a common definition is
\begin{equation}
    \mathrm{BWT}
    =
    \frac{1}{T-1}
    \sum_{i=1}^{T-1}
    \left(a_{T,i} - a_{i,i}\right).
\end{equation}
BWT measures how much performance on previously learned tasks changes after subsequent training. A negative value indicates stronger forgetting, while a value close to zero is usually interpreted as better backward stability. Positive values indicate positive transfer.

The goal of our work is to both mitigate catastrophic forgetting of old tasks, as well as progressive loss of plasticity due to posterior saturation on new tasks. In Binary networks, the latent variable (or natural parameter) can grow in magnitude over a long stream, driving synapses to saturation. Once this happens, weights are nearly deterministic and sign changes become increasingly unlikely.

BWT is conditional on the amount of task-specific performance that was acquired in the first place. A method that learns a task well and later forgets it will have a negative BWT. In contrast, a method that fails to learn, or that becomes rigid and remains uniformly poor, can obtain a near-zero BWT simply because there is little performance left to lose. Therefore, on very long non-stationary streams, BWT can confound two qualitatively different regimes: useful stability after learning, and apparent stability caused by loss of plasticity.

\begin{table}[htbp!]
\centering
\caption{
BWT on the 1000-tasks Permuted-MNIST stream. Synaptic Metaplasticity has the least negative BWT, but it reaches only chance-level accuracy. 
BiMU has a more negative BWT, but remains highly plastic and accurate after 1000 tasks.}
\label{tab:appendix-bwt-pm1000}
\vskip 0.1in
\small
\setlength{\tabcolsep}{6pt}
\begin{sc}
\begin{tabular}{lccc}
\toprule
\textbf{Method} 
& \makecell[c]{\textbf{Mean acc.}\\\textbf{5 tasks (\%)}} 
& \textbf{MMRR}
& \textbf{BWT} \\
\midrule
BiMU 
& $\mathbf{90.30 \pm 0.38}$ 
& $\mathbf{139.47}$ 
& $-0.8187 \pm 0.0014$ \\
BayesBiNN 
& $41.12 \pm 1.62$ 
& $2.04$ 
& $-0.3897 \pm 0.0021$ \\
Synaptic Metaplasticity 
& $10.27 \pm 0.01$ 
& $1.64$ 
& $\mathbf{-0.0009 \pm 0.0001}$ \\
STE 
& $29.35 \pm 0.96$ 
& $9.32$ 
& $-0.5689 \pm 0.0105$ \\
\bottomrule
\end{tabular}
\end{sc}
\vskip -0.05in
\end{table}
\FloatBarrier

In the main paper, we report the mean accuracy on the last five tasks and Maximum Memory Rigidity Resilience (MMRR) instead of BWT, which appears in Table~\ref{tab:appendix-bwt-pm1000}. Late-stream accuracy directly measures whether the model can still acquire new tasks after a long history of distribution shifts. MMRR complements this by measuring how much of the method's best attainable performance is preserved at the end of the stream. Together, these quantities distinguish a model that remains both stable and plastic from one that stops changing.

Specifically, Synaptic Metaplasticity has the least negative BWT among the binary methods. However, it is the weakest method at the end of the stream, with only $10.27\%$ mean accuracy over the last five tasks. 
This value is close to chance for MNIST classification, showing that the near-zero BWT neither reflects good performance in continual learning nor how much loss of plasticity occurred in the model.

This supplementary analysis emphasizes that a better BWT is not necessarily a good sign of well-performing continual learning, specifically for our use-case. When posterior saturation or synaptic rigidity prevents new learning, BWT can become artificially favorable. 
To solve the challenges we presented in the introduction, the relevant question is whether the method can still learn after hundreds or thousands of distribution shifts. Late-stream accuracy and MMRR directly answer this question, while BWT only measures one aspect of backward stability.

The OpenLORIS-Object results provide a complementary case, due to not facing the same rigidity as for the 1000-tasks Permuted MNIST. As shown in Table~\ref{tab:appendix-bwt-openloris-25088}, BiMU achieves the highest mean accuracy, $90.62\%$, while having BWT comparable to BayesBiNN, Synaptic Metaplasticity and within range of other continual learning methods. STE is worse both in accuracy and in BWT. 

\begin{table}[htbp!]
\centering
\caption{OpenLORIS-Object with $25{,}088$ frozen VGG19 features. 
We report the mean accuracy over all tasks and backward transfer (BWT). }
\label{tab:appendix-bwt-openloris-25088}
\vskip 0.1in
\small
\setlength{\tabcolsep}{6pt}
\begin{sc}
\begin{tabular}{lcc}
\toprule
\textbf{Method} 
& \textbf{Mean acc. over all tasks (\%)} 
& \textbf{BWT} \\
\midrule
\multicolumn{3}{l}{\textit{Binary Neural Networks}} \\
\midrule
BiMU 
& $\mathbf{90.62 \pm 0.22}$ 
& $-0.0525 \pm 0.0027$ \\[3pt]
BayesBiNN 
& $89.37 \pm 0.77$ 
& $\mathbf{-0.0399 \pm 0.0078}$ \\[3pt]
Synaptic Metaplasticity 
& $86.72 \pm 0.34$ 
& $-0.0485 \pm 0.0021$ \\[3pt]
STE (Baseline) 
& $83.79 \pm 1.13$ 
& $-0.0790 \pm 0.0157$ \\[3pt]
\midrule
\multicolumn{3}{l}{\textit{Real-Valued Neural Networks}} \\
\midrule
MESU 
& $87.84 \pm 0.11$ 
& $-0.0515 \pm 0.0012$ \\[3pt]
Online EWC 
& $88.23 \pm 0.05$ 
& $-0.0452 \pm 0.0003$ \\[3pt]
Synaptic Intelligence 
& $88.04 \pm 0.03$ 
& $-0.0517 \pm 0.0004$ \\[3pt]
SGD (Baseline) 
& $88.04 \pm 0.03$ 
& $-0.0517 \pm 0.0004$ \\
\bottomrule
\end{tabular}
\end{sc}
\vskip -0.05in
\end{table}
\FloatBarrier

So far, this is consistent with our main claim: BiMU deliberately forgets part of the accumulated posterior information in order to avoid rigidity and preserve plasticity. 
If the information is not useful currently and lies outside of the memory window $N$, it is discarded. A loss in backward transfer is exchanged for higher overall accuracy on late stream: it tends to indicate that the method discards non-essential or outdated representations.

\section{Maximum Memory Rigidity Resilience (MMRR): A measure of plasticity loss}\label{appendix:mmrr}

In long-horizon continual learning, models may progressively lose plasticity as evidence accumulates, becoming increasingly resistant to parameter updates. This rigidity can degrade performance on new tasks even in the absence of explicit forgetting. Standard metrics such as average accuracy (ACC), Forward Transfer (FWT), or Backward Transfer (BWT)~\cite{lopez2017gradient} do not explicitly capture this phenomenon, as they focus on task-to-task retention rather than long-term adaptability under prolonged non-stationarity.

To quantify plasticity loss, we build on the Memory Rigidity Resilience (MRR) metric of~\cite{bonnet2025bayesian} and introduce Maximum Memory Rigidity Resilience (MMRR), a scalar diagnostic that measures performance degradation relative to the model’s own best observed performance.

Let $a_t \in [0,1]$ denote the accuracy of the model evaluated on task $t$, and let
\[
a_{\max} = \max_{t' \leq T} a_{t'}
\]
be the maximum accuracy achieved by the model on any task over the full training horizon $T$.
We define the Maximum Memory Rigidity Resilience (MMRR) at time $t$ as
\[
\mathrm{MMRR}_t = \frac{1}{a_{\max} - a_t + \varepsilon},
\]
where $\varepsilon > 0$ ensures numerical stability. Larger values indicate smaller performance degradation and thus higher resistance to rigidity.

MMRR captures the inability of a model to sustain its peak performance as new tasks arrive, independently of explicit forgetting. Unlike the original MRR, which anchors rigidity to early-task performance and can be misleading when performance improves over time, MMRR evaluates each model relative to its maximum attainable accuracy over the stream. This ensures stability and interpretability in long-horizon settings.

In our experiments, MMRR clearly separates methods that maintain long-term adaptability from those that become rigid: BayesBiNN, Synaptic Metaplasticity, and STE exhibit rapid MMRR collapse due to posterior saturation, while BiMU maintains high MMRR over extended non-stationary streams, indicating sustained plasticity.

\section{1000-tasks Permuted MNIST: Effect of Network Capacity}
\label{appendix:pmnist-2000neurons}

This appendix investigates whether BiMU’s advantages persist when network capacity is substantially increased. While the main paper focuses on compact architectures to emphasize bounded-memory effects, scaling the model allows us to disentangle representational limitations from loss of plasticity and synaptic rigidity.

We repeat the 1000-tasks Permuted MNIST experiment using a single-hidden-layer network with 2000 neurons, keeping the online continual learning protocol unchanged. Training is strictly online, with each sample observed once. Inputs are standardized, training uses batch size 1, evaluation uses batch size 100, and out-of-distribution (OOD) detection is assessed on Fashion-MNIST using ROC-AUC over 1000 thresholds. Bayesian methods employ Monte Carlo sampling during inference: BiMU and BayesBiNN use 5 samples, while MESU uses 10. All other methods are deterministic.

Table~\ref{tab:pmnist-results-2000neurons} reports results averaged over the last five tasks after training on all 1000 permutations. Increasing capacity improves accuracy for all methods, confirming that additional representational capacity partly mitigates long-horizon continual learning challenges.

BiMU benefits most from scaling, achieving a mean accuracy of $95.20\%$, the best performance among binary-weight models and higher than all real-valued baselines. In contrast, other binary approaches fail to fully exploit the additional capacity: BayesBiNN improves relative to the low-capacity setting but remains constrained by progressive rigidity, STE shows only moderate gains, and Synaptic Metaplasticity collapses to near-chance performance. This indicates that capacity alone is insufficient without mechanisms that actively regulate plasticity.

Among real-valued models, scaling preserves the relative performance trends observed in smaller networks. MESU achieves strong results ($92.99\%$), followed by EWC Online and Synaptic Intelligence. Notably, BiMU surpasses all real-valued baselines despite operating under a more constrained synaptic posterior, highlighting that its advantage stems from synapse-level control of learning dynamics rather than architectural flexibility.

The Maximum Memory Rigidity Resilience (MMRR) metric further emphasizes these differences. BiMU achieves the highest resilience ($862.09$), indicating minimal degradation between early and late tasks and sustained plasticity across the entire stream. All other methods exhibit substantially lower resilience, confirming that they progressively lose adaptability even with increased capacity.

Scaling also accentuates differences in uncertainty estimation. BiMU achieves perfect OOD discrimination on Fashion-MNIST (AUC $=1.00$), while MESU remains strong but inferior (AUC $=0.86$), and other baselines lag behind. This demonstrates that BiMU preserves meaningful epistemic uncertainty over extremely long task horizons, even as capacity increases.

\begin{table}[htbp!]
\caption{Benchmark on 1000 tasks Permuted MNIST in the Online Continual Learning context (2000 neurons). The mean accuracy is taken after training on all tasks, computing the average of the accuracy on the last five tasks test datasets. Memory rigidity resilience is defined as the inverse of the absolute differential in accuracy between the first and the last task. Mean $\pm$ standard deviation are given over five runs. Methods are grouped by whether they use binary weights or not.}
\label{tab:pmnist-results-2000neurons}
\setlength{\tabcolsep}{3pt}
\vskip 0.15in
\begin{center}
\begin{small}
\begin{sc}
\begin{tabular}{lccc}
\toprule
\textbf{Method} &
\makecell[c]{\textbf{MEAN ACC.}\\\textbf{5 tasks (\%)}} &
\makecell[c]{\textbf{OOD det.} \\\textbf{(AUC)}} &
\makecell[c]{\textbf{MMRR}} \\
\midrule

\multicolumn{4}{c}{\textit{Binary neural networks}} \\
\midrule

BiMU & $\textbf{95.20} \pm \textbf{0.26}$ & $\textbf{1.00} \pm \textbf{0.00}$ & $\textbf{862.09}$ \\
\cmidrule(lr){1-1}

\makecell[l]{BayesBiNN} &
$86.61 \pm 0.27$ &
$0.80 \pm 0.10$ &
$9.86$ \\
\cmidrule(lr){1-1}

\makecell[l]{Syn. Meta.} &
$10.34 \pm 0.19$ &
$0.00 \pm 0.04$ &
$1.33$ \\
\cmidrule(lr){1-1}

\makecell[l]{STE} &
$47.39 \pm 1.35$ &
$0.61 \pm 0.10$ &
$39.12$ \\
\midrule

\multicolumn{4}{c}{\textit{Real-valued neural networks}} \\
\midrule

MESU &
$\textbf{92.99} \pm \textbf{0.71}$ &
$0.86 \pm 0.02$ &
$\textbf{171.82}$ \\
\cmidrule(lr){1-1}

\makecell[l]{EWC O.} &
$91.47 \pm 0.31$ &
$0.44 \pm 0.14$ &
$139.66$ \\
\cmidrule(lr){1-1}

\makecell[l]{SI} &
$92.82 \pm 0.49$ &
$0.96 \pm 0.04$ &
$89.77$ \\
\cmidrule(lr){1-1}

\makecell[l]{SGD} &
$90.61 \pm 1.32$ &
$\textbf{0.98} \pm \textbf{0.02}$ &
$117.65$ \\
\bottomrule

\end{tabular}
\end{sc}
\end{small}
\end{center}
\vskip -0.1in
\end{table}
\FloatBarrier

\textbf{Hyperparameter optimization.}

Hyperparameters are computed on $10$ tasks of Permuted MNIST with different permutations as the ones presented in the main paper as validation. They are obtained by maximizing the hyperparameter tuning cost function $\mathcal{H}$ (see Appendix~\ref{appendix:hpo}).

\begin{table}[htbp!]
\centering
\caption{Hyperparameter configurations for all evaluated methods on the 1000-tasks Permuted MNIST benchmark with 2000 neurons.}
\label{tab:hpo-params-2000neurons}
\vskip 0.1in
\scriptsize
\setlength{\tabcolsep}{-0.5pt}
\begin{sc}
\begin{tabular}{lcccc}
\toprule
\textbf{Method} & \textbf{Activation} & \textbf{Learning Rate(s)} & \textbf{Additional Parameters} & \textbf{MC samples} \\
\midrule
\multicolumn{5}{l}{\textit{Binary Neural Networks}} \\
\midrule
BiMU & Reverse Binary Gate & $60$ &
$\alpha_{\text{max}}=0.005$, $\beta_{\text{L}}$\ $=280.7$, $\beta_{\text{KL}}$\ $=6.2$, $N=900$ & $5$ \\[3pt]

BayesBiNN & Reverse Binary Gate & $0.05$ &
Prior strength $=1.1\times10^{-6}$ & $5$ \\[3pt]

Synaptic Metaplasticity & Sign & $1\times10^{-5}$ &
Metaplasticity $=6.3$, weight decay $=1.46\times10^{-6}$ & 1 \\[3pt]

STE (Baseline) & Sign & $1\times10^{-4}$ &
Weight decay $=9.7\times10^{-5}$ & 1 \\[3pt]

\midrule
\multicolumn{5}{l}{\textit{Real-Valued Neural Networks}} \\
\midrule
MESU & ReLU & $\alpha_\mu=2.2$, $\alpha_\sigma=0.47$ &
$\sigma_{\text{prior}}=0.46$, $N=1.6\times10^{6}$, clamp grad $=0.83$ & $10$ \\[3pt]

Online EWC & ReLU & $0.006$ &
Importance $=4.0$, downweighting $=0.4$ & 1 \\[3pt]

Synaptic Intelligence & ReLU & $0.0015$ &
Coeff.\ $=1\times10^{-4}$, damping $=1.0$ & 1 \\[3pt]

SGD (Baseline) & ReLU & $0.001$ & -- & 1 \\
\bottomrule
\end{tabular}
\end{sc}
\vskip -0.05in
\end{table}
\FloatBarrier

\section{1000-tasks Permuted MNIST: Training-State Memory Overhead}
\label{appendix:memory-footprint}

Table~\ref{tab:memory-footprint} reports the training-state memory overhead of all methods evaluated on the 1000-tasks Permuted MNIST benchmark. We measure the total persistent memory required during training, including model parameters and all auxiliary variables needed to perform parameter updates, excluding activations, capturing the dominant constraints in long-horizon learning scenarios.

BiMU is the only continual learning method among the others that is fully online, not requiring stored past parameters, task-specific statistics, replay buffers, or optimizer momentum. Its updates rely solely on the current Bernoulli posterior parameters and the incoming batch, so the training-state memory footprint is identical to the inference-state memory footprint and remains constant over time.

In the binary setting, BayesBiNN stores both Bernoulli parameters and associated the posterior of the previous task, resulting in a larger memory overhead than BiMU. STE and Synaptic Metaplasticity are trained with Adam, which requires storing two additional momentum matrices per parameter. In addition, Synaptic Metaplasticity maintains task-specific Batch Normalization parameters, which accumulate linearly with the number of tasks and dominate the memory overhead in long-horizon regimes.

In the real-valued setting, MESU attaches an explicit variance parameter to each synapse, increasing the memory footprint relative to SGD. EWC Online and Synaptic Intelligence require storing both an importance matrix and a copy of the parameters from the previous task, resulting in persistent additional storage even in their online variants.

\begin{table}[htbp!]
\caption{Training-state memory overhead on the 1000-tasks Permuted MNIST benchmark. Reported values correspond to the total persistent memory required during training, including model parameters and all auxiliary update variables, but excluding transient activations.}
\label{tab:memory-footprint}
\setlength{\tabcolsep}{4pt}
\vskip 0.15in
\begin{center}
\begin{small}
\begin{sc}
\begin{tabular}{lc}
\toprule
\textbf{Method} & \textbf{Training-state memory overhead  (MB)} \\
\midrule

\multicolumn{2}{c}{\textit{Binary neural networks}} \\
\midrule
BiMU & $\bm{0.32}$ \\
BayesBiNN & $0.64$ \\
Syn. Meta. & $1.84$ \\
STE & $0.95$ \\

\midrule
\multicolumn{2}{c}{\textit{Real-valued neural networks}} \\
\midrule
MESU & $0.64$ \\
EWC O. & $0.95$ \\
SI & $0.95$ \\
SGD & $\bm{0.32}$ \\

\bottomrule
\end{tabular}
\end{sc}
\end{small}
\end{center}
\vskip -0.1in
\end{table}
\FloatBarrier

\section{100-tasks Permuted MNIST: Effect of the memory window on stability and forgetting}\label{appendix:pmnist-n-ablation-study}

\subsection{Forgetting, rigidity, controlled regimes}

Consider BiMU's update of the natural parameters $\lambda_t^{(i)}$ defined in Eq.~(6), obtained from the bounded-memory Bayesian objective with memory window $N$. For a fixed data stream and learning-rate bound $\alpha_{\max}$, the memory window $N$ explicitly controls the forgetting-stability trade-off as follows.

\textbf{Small memory window (low $N$): forgetting-dominated regime.}  
    When $N$ is small, the forgetting term
    \[
        \frac{\lambda_{t-1}^{(i)} - \lambda_{\mathrm{prior}}^{(i)}}{N \cosh^2(\lambda_{t-1}^{(i)})}
    \]
    rapidly relaxes the parameters toward the prior. Past evidence is, therefore, quickly discarded, preventing sustained accumulation of information in $\lambda^{(i)}$ and leading to excessive forgetting.

\textbf{Large memory window (high $N$): stability-dominated regime.}  
    As $N$ increases, the forgetting term vanishes, and updates approach cumulative Bayesian learning. In this regime, the magnitude of $\lambda^{(i)}$ grows monotonically as evidence accumulates, driving Bernoulli probabilities toward deterministic values. This induces synaptic degeneracy and increases memory rigidity, reflected by low MMRR values, inducing a situation akin to the one observed on BayesBiNN.

\textbf{Intermediate memory window: controlled forgetting.}  
    For intermediate values of $N$, forgetting counterbalances evidence accumulation, bounding the magnitude of $\lambda^{(i)}$ over time. This prevents synaptic degeneracy while preserving sufficient information for learning, yielding a favorable trade-off between plasticity and stability.

Overall, the memory window $N$ acts as a direct control parameter for forgetting, regulating both the long-term magnitude of the synaptic parameters and the degree of rigidity in the learned representation.

\subsection{Memory window ablation study}

Table~\ref{tab:pmnist-memory-window} quantitatively illustrates the effect of the memory window $N$ on forgetting, memory rigidity, and uncertainty in 100-tasks Permuted-MNIST. For very small $N=100$, the model falls into the forgetting-dominated regime: final accuracy is only $27.09\%$, OOD detection remains near-perfect ($0.99$), and MMRR is high ($216.45$), indicating rapid weight fluctuations induced by forgetting, unstable posterior retention circumventing stability. Increasing $N$ to $400$ allows more evidence accumulation, improving accuracy to $73.46\%$, with OOD AUC stable ($1.00$) and a high MMRR (574.72), reflecting higher retention than $N=100$ without accumulating rigidity.  

Intermediate windows ($N=700$--$1000$) achieve the best balance: accuracy peaks at $90.29\%$ ($N=700$) while OOD detection remains strong ($0.99$--$0.98$) and MMRR is moderate ($215.52$--$46.30$), consistent with controlled forgetting that stabilizes synapses without compromising plasticity. For larger windows ($N=1300$--$1900$), final accuracy slightly declines ($86.55\%$--$87.67\%$) while MMRR steadily decreases to $28.36$--$21.01$, indicating increased rigidity, and slower adaptation to recent tasks. 

Finally, a large memory window ($N=100{,}000$) leads to further drops in accuracy, at about $83.70\%$ and OOD AUC ($0.94$), with minimal MMRR ($13.13$), consistent with synaptic degeneracy.

Overall, Table~\ref{tab:pmnist-memory-window} confirms that $N$ acts as a direct control mechanism over forgetting and rigidity: small $N$ causes rapid forgetting, very large $N$ induces rigidity and synaptic degeneracy, and intermediate $N$ preserves a favorable stability-plasticity trade-off while maintaining informative epistemic uncertainty. Contrary to MESU~\cite{bonnet2025bayesian} that suggests using $N$ as the number of batches, here the exponential dependency prevents from asserting a similar statement.

\begin{table}[htbp!]
\caption{
Effect of the memory window $N$ of BiMU, and of BayesBiNN on forgetting, rigidity, and uncertainty in 100-tasks Permuted-MNIST (OCL; 100-hidden-unit MLP). MEAN ACC. (last 5) is averaged over the final five tasks. OOD AUC is ROC-AUC for Permuted MNIST vs Fashion-MNIST. MMRR: Maximum Memory Rigidity Resilience (see main text). Mean$\pm$std over 5 runs.
}
\label{tab:pmnist-memory-window}
\setlength{\tabcolsep}{4pt}
\vskip 0.15in
\begin{center}
\begin{small}
\begin{sc}
\begin{tabular}{lccc}
\toprule
\textbf{Memory window $N$} &
\makecell[c]{\textbf{MEAN ACC.}\\\textbf{5 tasks (\%)}} &
\makecell[c]{\textbf{OOD det.}\\\textbf{(AUC)}} &
\makecell[c]{\textbf{MMRR}} \\

\midrule
100   & $27.09 \pm 0.57$ & $0.99 \pm 0.00$ & $216.45$ \\
400   & $73.46 \pm 2.59$ & $\textbf{1.00} \pm \textbf{0.00}$ & $\textbf{574.72}$ \\
700   & $\textbf{90.29} \pm \textbf{0.24}$ & $0.99 \pm 0.01$ & $215.52$ \\
1000  & $89.07 \pm 0.28$ & $0.98 \pm 0.01$ & $46.30$ \\
1300  & $87.67 \pm 0.33$ & $0.97 \pm 0.02$ & $28.36$ \\
1600  & $86.94 \pm 0.29$ & $0.96 \pm 0.02$ & $24.91$ \\
1900  & $86.55 \pm 0.24$ & $0.96 \pm 0.02$ & $21.01$ \\
100000 & $83.70 \pm 0.30$ & $0.94 \pm 0.05$ & $13.13$ \\
\midrule
BayesBiNN &	$67.91 \pm 0.96$ & $0.75 \pm 0.20$& $5.03$ \\

\bottomrule
\end{tabular}
\end{sc}
\end{small}
\end{center}
\vskip -0.1in
\end{table}

We further illustrate the effect of the memory window $N$ on synaptic weight distributions at the end of training. Figure~\ref{fig:pmnist-n-study} shows the histograms of the probability of each synapse for all values of $N$. Small $N$ results in broad, low-confidence distributions, reflecting rapid forgetting without stability. Intermediate $N$ produces well-formed, moderately peaked distributions, indicating a favorable balance between plasticity and stability. Very large $N$ leads to sharply peaked distributions near $0$ or $1$, highlighting distribution degeneracy and loss of plasticity, a behaviour also observed on BayesBiNN due to the absence of forgetting for the algorithm.

\begin{figure}[htbp!]
\begin{center}
\vskip 0.1in
\centerline{\includegraphics[width=\textwidth]{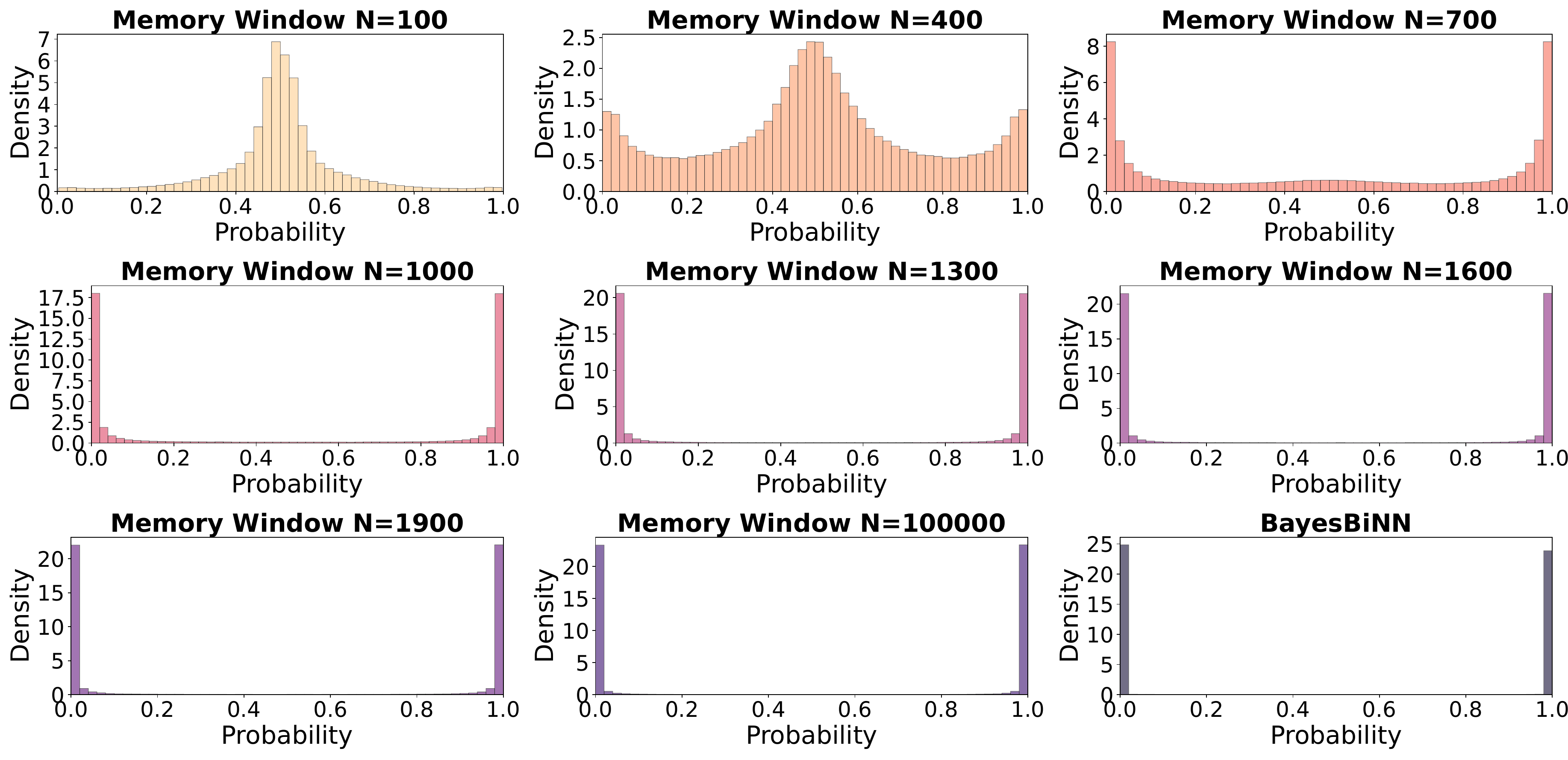}}
    \caption{Histograms of the synaptic probabilities $\bm{p} = \text{sigmoid}(2\bm{\lambda})$ at the end of training for different memory windows $N$ and BayesBiNN in 100-tasks Permuted-MNIST (OCL; 100-hidden-unit MLP). Small $N$ shows broad distributions indicative of forgetting-dominated dynamics, intermediate $N$ yields moderate peaks reflecting a balanced stability-plasticity-forgetting trade-off, and very large $N$ exhibits sharp peaks near $0$ and $1$, consistent with synaptic degeneracy and rigidity, highlighting too high stability, akin to BayesBiNN's behaviour.}
    \label{fig:pmnist-n-study}
\end{center}
\end{figure}
\FloatBarrier

\section{100-tasks Permuted MNIST: Activation function ablation study}
\label{appendix:gate-function}

Binary Neural Networks (BiNNs) traditionally rely on the sign activation function~\cite{courbariaux2016binarized, laborieux2021synaptic}, together with a surrogate hardtanh during backpropagation to overcome the non-differentiability of the binary non-linearity:
\begin{align}
\text{sign}(x) &= 
\begin{cases}
1 & \text{if } x > 0, \\
-1 & \text{if } x < 0, \\
0 & \text{otherwise,}
\end{cases}
&
\text{hardtanh}(x) &=
\begin{cases}
1 & \text{if } x > 1, \\
x & \text{if } -1 \le x \le 1, \\
-1 & \text{if } x < -1.
\end{cases}
\end{align}

In this setting, neurons are activated according to the \emph{sign} of their pre-activations. Combined with instance normalization~\cite{ulyanov2016instance}, gradients are only propagated through neurons whose pre-activations lie in the narrow neutral region $[-1,1]$. Consequently, learning is dominated by the most frequent pre-activation values, which can be suboptimal in continual learning scenarios where selective and stable parameter updates are required.

\paragraph{Reverse Binary Gate.}
We introduce the Reverse Binary Gate (RBG) activation function and its surrogate sRBG. Unlike the sign function, RBG selects neurons based on the \emph{magnitude} of their pre-activations rather than their sign:
\begin{align}
    \text{RBG}(x, a) =
    \begin{cases}
        0 & \text{if } |x| < \frac{a}{2}, \\
        1 & \text{otherwise},
    \end{cases}
    \qquad
    \text{sRBG}(x, a) =
    \begin{cases}
        -x & \text{if } -\frac{3a}{2} < x < -\frac{a}{2}, \\
        x & \text{if } \frac{a}{2} < x < \frac{3a}{2}, \\
        0 & \text{if } |x| < \frac{a}{2}, \\
        1 & \text{otherwise}.
    \end{cases}
\end{align}

This design inverts the usual gating logic: neurons with the most frequent activations are suppressed, while only the moderate to low activations contribute to the forward and backward passes. As a result, a majority of synapses are explicitly deactivated, promoting sparse and selective updates. The RBG function can be interpreted as a reversed version of the Elephant activation~\cite{lan2023elephant} in the limit case.

\paragraph{Disentangling activation functions from continual learning.}
Table~\ref{tab:gate-ablation} reports an ablation study on 100-tasks Permuted MNIST that isolates the effect of the activation function from the continual learning mechanism. By comparing sign-based and RBG-based variants of the same algorithms, we disentangle the respective contributions of activation design and learning rules for the Permuted MNIST experiment.

\begin{table}[htbp!]
\centering
\caption{
Disentangling the effect of the activation function on continual learning performance (100-tasks Permuted MNIST).
Mean accuracy is computed over the last five tasks; OOD detection is ROC-AUC for Permuted MNIST vs Fashion-MNIST.
Maximum Resilience corresponds to the Maximum Memory Rigidity Resilience (MMRR).
Mean$\pm$std over 5 runs.
}
\label{tab:gate-ablation}
\setlength{\tabcolsep}{3pt}
\vskip 0.1in
\begin{small}
\begin{sc}
\begin{tabular}{lccccc}
\toprule
\textbf{Method} &
\textbf{Activation} &
\makecell[c]{\textbf{MEAN ACC.}\\\textbf{5 tasks (\%)}} &
\makecell[c]{\textbf{OOD det.}\\\textbf{(AUC)}} &
\textbf{MMRR} &
\makecell[c]{\textbf{Acc.}\\\textbf{1 task (\%)}} \\
\midrule
BayesBiNN & Sign &
$66.40 \pm 0.97$ & 
$0.54 \pm 0.19$	&
$5.85$ &
$90.27 \pm 0.32$ \\

BayesBiNN & RBG &
$67.41 \pm 1.03$ & 
$0.76 \pm 0.17$ &
$4.99$ &
$93.22 \pm 0.09$ \\
\midrule
BiMU & Sign &
$81.78 \pm 0.58$ & 
$0.76 \pm 0.08$	& 
$22.25$ & 
$92.89 \pm 0.22$ \\

BiMU & RBG &
$90.29 \pm 0.24$ &
$0.99 \pm 0.01$ &
$215.52$ &
$94.67\pm 0.11$ \\
\midrule
MESU & ReLU &
$93.51 \pm 0.18$ &
$0.91 \pm 0.03$ &
$500.02$ &
$96.10 \pm 0.18$ \\

MESU & RBG &
$92.35 \pm 0.21$ &
$0.80 \pm 0.05$ &
$943.44$ &
$93.01 \pm 0.20$ \\
\midrule
Syn.\ Meta. & Sign &
$10.22 \pm 0.05$ &
- &
$9.92$ &
$71.40 \pm 1.48$ \\

Syn.\ Meta. & RBG &
$11.35 \pm 0.00$ &
- &
$26.29$ &
$13.20 \pm 1.38$ \\
\bottomrule
\end{tabular}
\end{sc}
\end{small}
\vskip -0.05in
\end{table}

To isolate the impact of the activation function from long-horizon continual learning effects, we provide an ablation study conducted on 100-tasks Permuted MNIST. This reduced setting compared to the main enables a clearer separation between optimization effects induced by the activation function and degradation caused by prolonged memory saturation.

The results in Table~\ref{tab:gate-ablation} reveal heterogeneous sensitivities to the choice of activation across methods. For BayesBiNN, replacing the sign activation with the Reverse Binary Gate yields a small but consistent improvement in continual performance: mean accuracy over the last five tasks increases from $66.40\%$ to $67.41\%$, while OOD detection improves markedly from $0.54$ to $0.76$ ROC-AUC. One-task accuracy also increases from $90.27\%$ to $93.22\%$, indicating that RBG provides a stronger single-task accuracy. However, memory resilience slightly decreases ($5.85$ to $4.99$), suggesting that while magnitude-based gating improves representation quality, it does not fundamentally resolve the rigidity and plasticity limitations of BayesBiNN.

In contrast, BiMU benefits substantially from the Reverse Binary Gate. Switching from sign to RBG increases the mean accuracy from $81.78\%$ to $90.29\%$, improves OOD detection from $0.76$ to $0.99$ ROC-AUC, and dramatically boosts MMRR from $22.25$ to $215.52$. Importantly, one-task accuracy also improves from $92.89\%$ to $94.67\%$, confirming that the gains are not solely due to reduced forgetting but to which neurons are going to be activated during the forward and backward passes. These results indicate that selecting neurons based on activation magnitude rather than sign leads to more stable Bayesian updates and mitigates interference between tasks.

For MESU, the activation change does not provide a clear advantage. While RBG substantially increases memory rigidity ($500.02$ to $943.44$), it degrades OOD detection ($0.91$ to $0.80$ ROC-AUC), slightly reduces mean continual accuracy ($93.51\%$ to $92.35\%$), and lowers one-task accuracy from $96.10\%$ to $93.01\%$. This indicates that the Reverse Binary Gate increases rigidity in real-valued networks, harming both generalization and predictive confidence. As a result, RBG does not replace the role of structured Bayesian updates in MESU.

Finally, for Synaptic Metaplasticity, one-task accuracy collapses from $71.40\%$ to $13.20\%$. This highlights a fundamental incompatibility between the method and magnitude-based gating, exacerbating known sensitivities to network size, batching, and training regime.

Overall, this ablation demonstrates that selecting synapses based on activation magnitude rather than sign can substantially improve both single-task performance and continual learning robustness in Bayesian binary neural networks, particularly for BiMU. The effect is strongly method-dependent: it is moderate for BayesBiNN, ineffective for MESU and Synaptic Metaplasticity. While it improves performance, it does not prevent memory rigidity nor does it work better than a ReLU for real-valued networks. 

\section{Animals Imbalanced Active Learning: Complementary Figures}
\label{appendix:animals-extra-figures}

Complementary to the main text, we provide additional figures illustrating the behavior of each acquisition strategy on the Animals benchmark under imbalanced active learning. These figures correspond to the aggregated results reported in Fig.~\ref{fig:animals-vr} and are presented here to facilitate a more detailed, per-method analysis. Certain strategies are not appearing on figures due to the threshold not being activated, very small measures of uncertainty or points overlapping.

Fig.~\ref{fig:animals-bayesbinn} reports the performance of BayesBiNN on the Animals dataset. All acquisition strategies perform below random sampling, indicating that the uncertainty estimates are not sufficiently informative to support effective active learning. Unlike settings with a very large number of tasks such as Permuted MNIST, where weight divergence naturally emerges, the Animals dataset contains relatively few training examples per task (10,549 images in total, corresponding to roughly one sixth of a MNIST task). As a result, the model’s weights do not have sufficient opportunity to diverge across tasks. Moreover, in the absence of a metaplastic learning rate mechanism, BayesBiNN is unable to learn efficiently enough from these limited examples to consolidate its weights. This prevents the model from reliably distinguishing between low-frequency and high-frequency samples. While this leads to high overall uncertainty and consequently strong out-of-distribution detection, the quality of the uncertainty estimates remains poor, as predictions are primarily influenced by the most recently observed samples rather than by the dataset as a whole, in contrast to methods incorporating metaplasticity such as BiMU.

\begin{figure*}[htbp!]
\begin{center}
\vskip 0.1in
\centerline{\includegraphics[width=\textwidth]{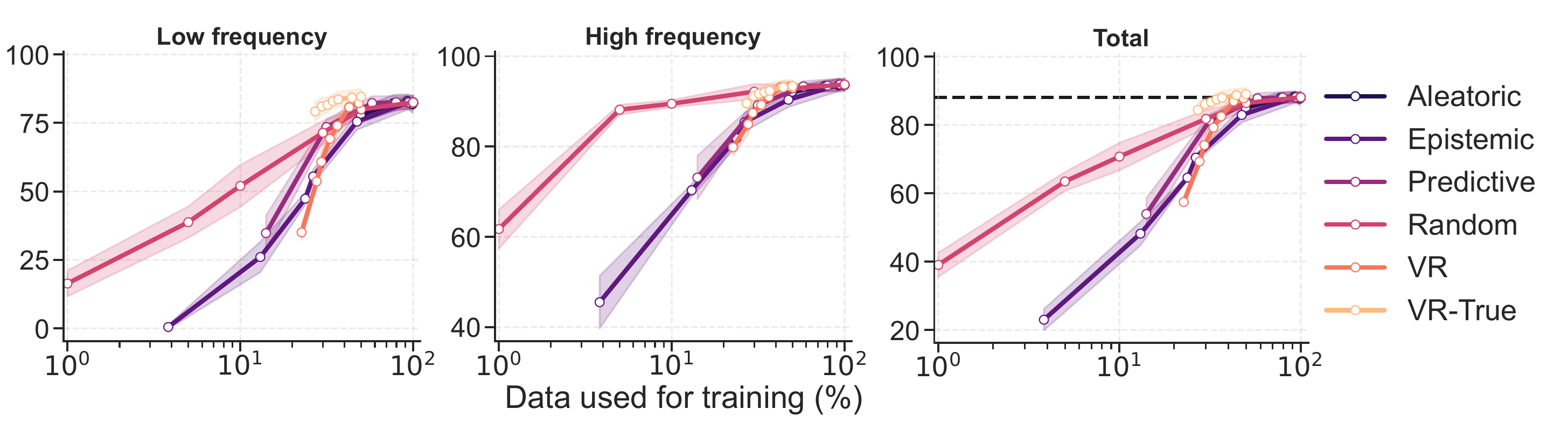}}
    \caption{Animals dataset under imbalanced active learning using BayesBiNN. Classification accuracy as a function of the fraction of queried samples, highlighting the behavior of Bayesian binary weights under uncertainty-driven querying.}
    \label{fig:animals-bayesbinn}
\end{center}
\end{figure*}
\FloatBarrier

Fig.~\ref{fig:animals-mesu} and Fig.~\ref{fig:animals-mesu-standardised} show the results for MESU, without and with input standardization, respectively. Without standardization, all querying strategies other than VR and VR-True collapse early in training, reflecting the inability to learn with low amount of data and poorly conditioned features. While both VR variants benefit from uncertainty-driven sampling, standardization markedly improves performance, yielding lower data used for training and more balanced exploration of minority classes, consistently outperforming random acquisition. Among all criteria, epistemic uncertainty achieves the strongest trade-off between data and accuracy, reaching the random-query baseline with the smallest fraction of labeled samples. This confirms that when uncertainty estimates are well calibrated, they enable highly selective supervision, and further highlights the sensitivity of real-valued Bayesian methods such as MESU to input preprocessing.

\begin{figure*}[htbp!]
\begin{center}
\vskip 0.1in
\centerline{\includegraphics[width=\textwidth]{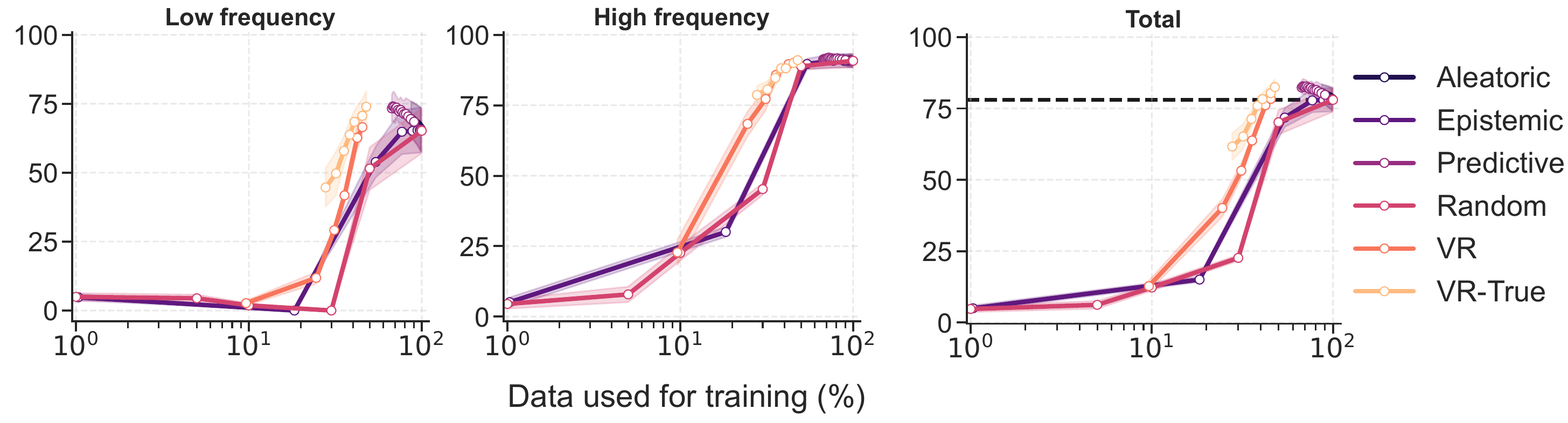}}
    \caption{Animals dataset under imbalanced active learning using MESU. Performance as a function of the queried-label fraction, illustrating uncertainty-based acquisition with real-valued Bayesian Gaussian weights.}
    \label{fig:animals-mesu}
\end{center}
\end{figure*}
\FloatBarrier

\begin{figure*}[htbp!]
\begin{center}
\vskip 0.1in
\centerline{\includegraphics[width=\textwidth]{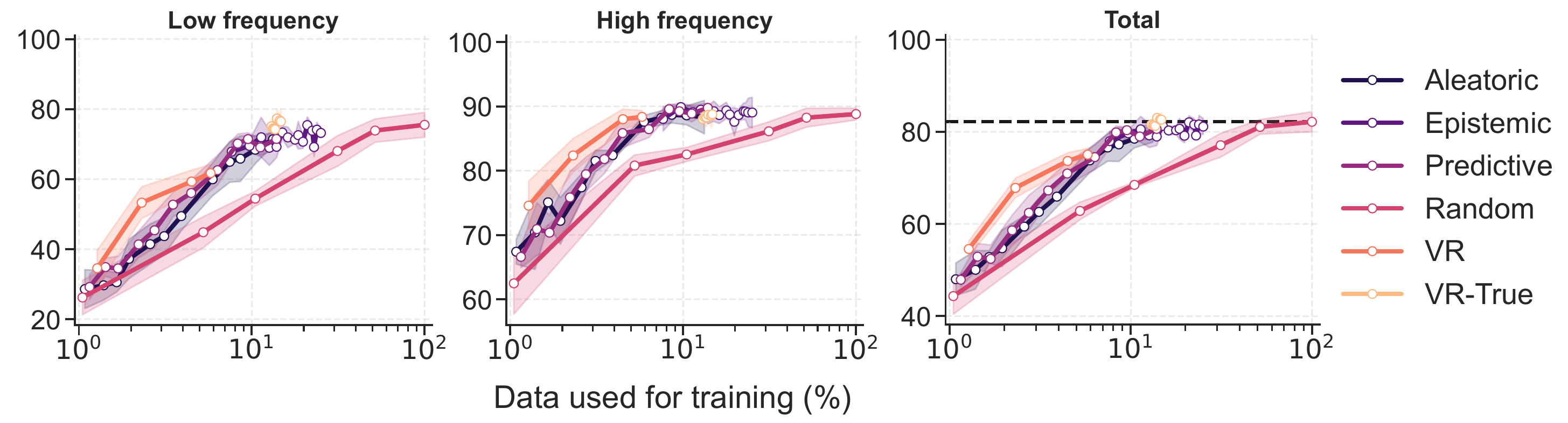}}
    \caption{Animals dataset under imbalanced active learning using MESU with feature standardisation. Standardisation improves stability and performance across labeling budgets.}
    \label{fig:animals-mesu-standardised}
\end{center}
\end{figure*}
\FloatBarrier

Finally, Fig.~\ref{fig:animals-ste} illustrates the behavior of STE. STE achieves competitive early performance, its acquisition strategy exhibits higher sensitivity to class imbalance at low percentage of labeled data, resulting in consistent drops in accuracy when threshold are increased, leading to a poor accuracy/labeled data ratio compared to BiMU and MESU.

\begin{figure*}[htbp!]
\begin{center}
\vskip 0.1in
\centerline{\includegraphics[width=\textwidth]{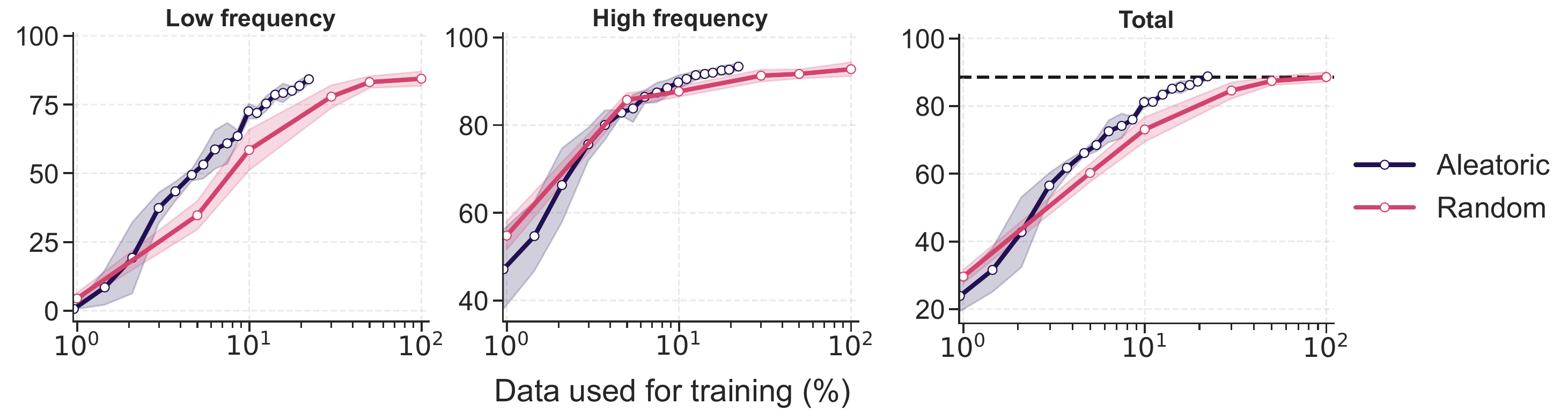}}
    \caption{Animals dataset under imbalanced active learning using STE. Accuracy as a function of the queried-label fraction when acquisition is driven by aleatoric uncertainty in binary weights setup.}
    \label{fig:animals-ste}
\end{center}
\end{figure*}
\FloatBarrier

Overall, these complementary figures highlight qualitative differences in acquisition dynamics that are not fully captured by summary metrics alone, especially for BayesBiNN, which, while having reliable OOD detection estimates, does not capture as well what examples are important and which are not.

\section{OpenLORIS-Object: Effect of standardization on the output features of VGG19}~\label{appendix:openloris-standardised}

The goal of this appendix experiment is to quantify how much of the performance gap observed in the main results  between Bayesian binary algorithms and real-valued algorithms is attributable to input preprocessing rather than to the learning algorithms themselves. To this end, we repeat the OpenLORIS-Object experiment using standardized 8{,}192-dimensional VGG19 features, where each feature is normalized using the mean and variance estimated from the training data. This contrasts with the streaming setting of Table~\ref{tab:openloris-features-rows}, where no such normalization is applied.

\begin{table}[htbp!]
\caption{OpenLORIS-Object (OCL; linear head) with standardized 8{,}192-dimensional VGG19 features. Mean accuracy is averaged over the 12 tasks after training on all tasks. OOD AUC is computed using the held-out \texttt{toy} class. Results are mean$\pm$std over 5 runs.}
\label{tab:openloris-uncertainty-8192}
\setlength{\tabcolsep}{4pt}
\vskip 0.15in
\begin{center}
\begin{small}
\begin{sc}
\begin{tabular}{lccc}
\toprule
\textbf{Method} &
\makecell[c]{\textbf{MEAN ACC.} \\\textbf{(\%)}} &
\makecell[c]{\textbf{Aleatoric} \\\textbf{(AUC)}} &
\makecell[c]{\textbf{Epistemic} \\\textbf{(AUC)}} \\
\midrule
\multicolumn{4}{c}{\textit{Binary neural networks}} \\
\midrule
BiMU & $\textbf{90.98} \pm \textbf{0.38}$ & $\textbf{1.00} \pm \textbf{0.00}$ & $\textbf{1.00} \pm \textbf{0.00}$ \\
BayesBiNN & $90.20 \pm 0.13$ & $\textbf{1.00} \pm \textbf{0.00}$ & $\textbf{1.00} \pm \textbf{0.00}$ \\
STE & $77.69 \pm 2.29$ & -- & -- \\
Synaptic Metaplasticity & $79.28 \pm 0.85$ & -- & -- \\
\midrule
\multicolumn{4}{c}{\textit{Real-valued neural networks}} \\
\midrule
MESU & $\textbf{92.55} \pm \textbf{0.24}$ & $0.99 \pm 0.00$ & $\textbf{0.99} \pm \textbf{0.00}$ \\
EWC Online & $92.34 \pm 0.32$ & $\textbf{1.00} \pm \textbf{0.00}$ & -- \\
Synaptic Intelligence & $92.23 \pm 0.30$ & $\textbf{1.00} \pm \textbf{0.00}$ & -- \\
SGD & $92.31 \pm 0.31$ & $\textbf{1.00} \pm \textbf{0.00}$ & -- \\
\bottomrule
\end{tabular}
\end{sc}
\end{small}
\end{center}
\vskip -0.1in
\end{table}
\FloatBarrier

Standardization consistently improves performance across all methods, but the gains are markedly larger for real-valued neural networks. In particular, MESU improves from $87.01\%$ to $92.55\%$ mean accuracy, while EWC Online and Synaptic Intelligence increase from $87.18\%$ and $86.44\%$ to $92.34\%$ and $92.23\%$, respectively. These substantial gains indicate that real-valued continual learning methods strongly benefit from well-conditioned inputs and rely on stable feature statistics induced by mean--variance normalization. Their optimization dynamics and regularization mechanisms are therefore closely tied to offline preprocessing assumptions.

Binary Bayesian methods also benefit from standardization, but to a lesser extent. BiMU improves from $89.19\%$ to $90.98\%$ accuracy, and BayesBiNN from $86.93\%$ to $90.20\%$, while maintaining near-perfect epistemic uncertainty separation (ROC-AUC $\approx 1.00$). The more modest performance increase suggests that binary models are comparatively less sensitive to feature scaling.

When input statistics are known in advance and preprocessing based on the full training distribution is feasible, real-valued methods achieve higher final accuracy than binary approaches. However, the main results focus on realistic continual learning scenarios for edge and robotic systems, where data arrive as an unbounded stream and reliable estimation of global feature statistics is not possible online. In this regime, the robustness of binary Bayesian methods -- and in particular BiMU -- to unnormalized inputs and shifting distributions provides a practical advantage, despite their more constrained parameterization.

\textbf{Hyperparameter optimization.}

Hyperparameters are computed on the validation set of OpenLORIS-Object. They are computed through the custom hyperparameter tuning cost function $\mathcal{H}$ (see Appendix~\ref{appendix:hpo}).

\begin{table}[htbp!]
\centering
\caption{Hyperparameter configurations for all evaluated methods on the OpenLORIS dataset (standardized, 8192 features).}
\label{tab:hpo-openloris-std}
\vskip 0.1in
\scriptsize
\setlength{\tabcolsep}{3pt}
\begin{sc}
\begin{tabular}{lccc}
\toprule
\textbf{Method} & \textbf{Learning Rate(s)} & \textbf{Additional Parameters} & \textbf{MC samples} \\
\midrule
\multicolumn{4}{l}{\textit{Binary Neural Networks}} \\
\midrule
BiMU  & $1.2$ &
$\alpha_{\text{max}}=0.07$, $\beta_{\text{L}}$\ $=184$, $\beta_{\text{KL}}$\ $=0.14$, $N=7{,}900$ & 10 \\[3pt]

BayesBiNN & $1.59$ &
Prior strength $=1.2\times10^{-6}$ & 10 \\[3pt]

Synaptic Metaplasticity & $4.5\times10^{-5}$ &
Metaplasticity $=13.7$, weight decay $=8\times10^{-12}$ & 1 \\[3pt]

STE (Baseline) & $1\times10^{-4}$ &
-- & 1 \\[3pt]

\midrule
\multicolumn{4}{l}{\textit{Real-Valued Neural Networks}} \\
\midrule
MESU & $\alpha_\mu=4.0$, $\alpha_\sigma=5.0$ &
$\sigma_{\text{prior}}=0.76$, $N=5{,}000{,}000$, clamp grad $=0.53$ & 10 \\[3pt]

Online EWC & $4\times10^{-4}$ &
Importance $=0.124$, downweighting $=0.7$ & 1 \\[3pt]

Synaptic Intelligence & $4\times10^{-4}$ &
Coeff.\ $=4.5\times10^{-5}$, damping $=0.05$ & 1 \\[3pt]

SGD (Baseline) & $3\times10^{-4}$ & -- & 1 \\
\bottomrule
\end{tabular}
\end{sc}
\vskip -0.05in
\end{table}
\FloatBarrier

\section{OpenLORIS-Object Imbalanced Continual Active Learning: Complementary Figures}
\label{appendix:openloris-extra-figures}

This section provides additional results on the OpenLORIS-Object benchmark, focusing on the imbalanced continual active learning setting. The figures presented here complement the main results and allow for a detailed inspection of each method’s behavior over time and across acquisition steps.

Fig.~\ref{fig:openloris-bayesbinn} presents the results for BayesBiNN on the OpenLORIS dataset in imbalanced continual active learning setting. As in the Animals setting, all active learning strategies perform below random acquisition, highlighting the limited usefulness of the uncertainty estimates produced by the model. Although OpenLORIS contains a larger total number of training examples (442,194 images), this corresponds to only approximately seven Permuted MNIST–scale tasks. This short task horizon limits the degree of weight divergence that emerge during training. In the absence of a metaplastic learning rate mechanism, BayesBiNN is therefore unable to leverage these examples efficiently enough to consolidate task-relevant weights. As a consequence, the model struggles to differentiate between low-frequency and high-frequency samples. While this results in high overall uncertainty and yields strong out-of-distribution detection, the resulting uncertainty remains poorly structured, as predictions are dominated by the most recently observed samples rather than reflecting statistics accumulated over the full dataset, unlike approaches that explicitly incorporate metaplasticity such as BiMU.

\begin{figure*}[htbp!]
\begin{center}
\vskip 0.1in
\centerline{\includegraphics[width=\textwidth]{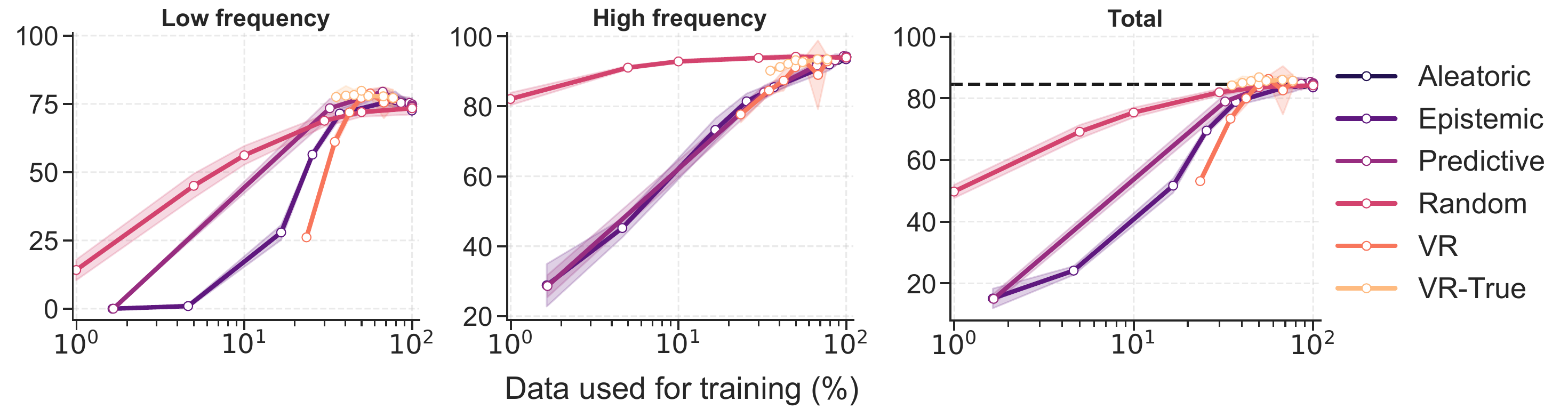}}
    \caption{OpenLORIS-Object (OCL; linear head) with 8{,}192-dimensional VGG19 features under imbalanced continual active learning using BayesBiNN. Evolution of classification accuracy across labeling budgets in the presence of distribution shifts for Bayesian binary weights without forgetting.}
    \label{fig:openloris-bayesbinn}
\end{center}
\end{figure*}
\FloatBarrier

Fig.~\ref{fig:openloris-mesu} and Fig.~\ref{fig:openloris-mesu-std} report MESU’s performance with and without input standardization. In contrast to the Animals benchmark, performance degradation without standardization is less severe, owing to the substantially larger number of samples available to accumulate information about the importance of each parameter, leading to more reliable uncertainty estimates. Nevertheless, standardization consistently improves robustness and reduces variability across acquisition steps. In particular, standardization yields an accuracy gain of roughly 10 percentage points at the 3\% labeling budget for VR. Under standardized features, aleatoric, epistemic, and predictive uncertainty criteria exhibit similar behavior, indicating reliable discrimination of informative samples throughout the stream leading to strong efficiency of uncertainty estimates to select which data to integrate to increase accuracy.

\begin{figure*}[htbp!]
\begin{center}
\vskip 0.1in
\centerline{\includegraphics[width=\textwidth]{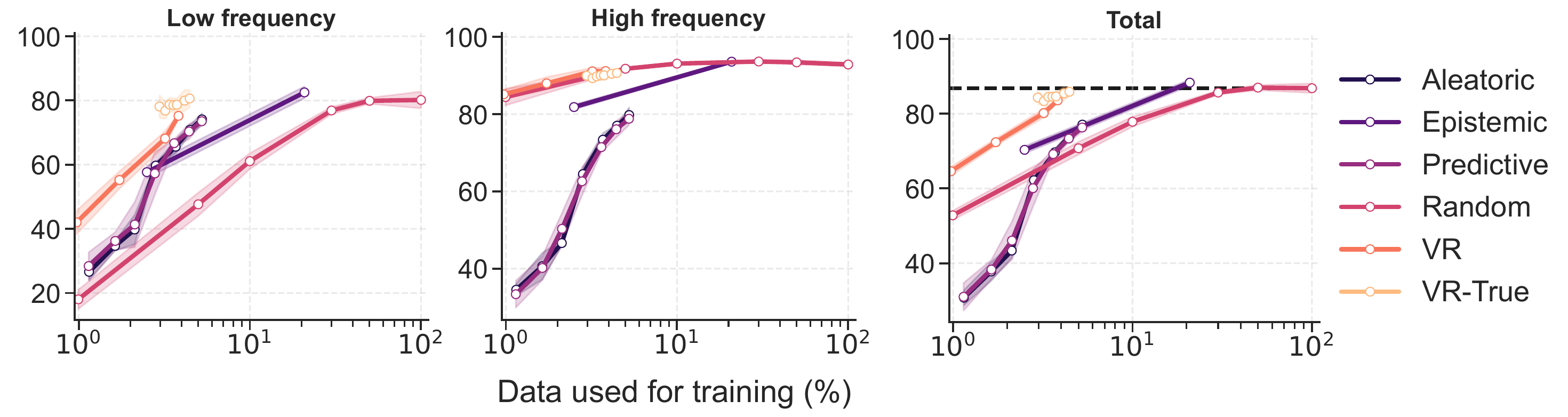}}
    \caption{OpenLORIS-Object (OCL; linear head) with 8{,}192-dimensional VGG19 features under imbalanced continual active learning using MESU. Performance under uncertainty-driven querying with real-valued Bayesian weights.}
    \label{fig:openloris-mesu}
\end{center}
\end{figure*}
\FloatBarrier

\begin{figure*}[htbp!]
\begin{center}
\vskip 0.1in
\centerline{\includegraphics[width=\textwidth]{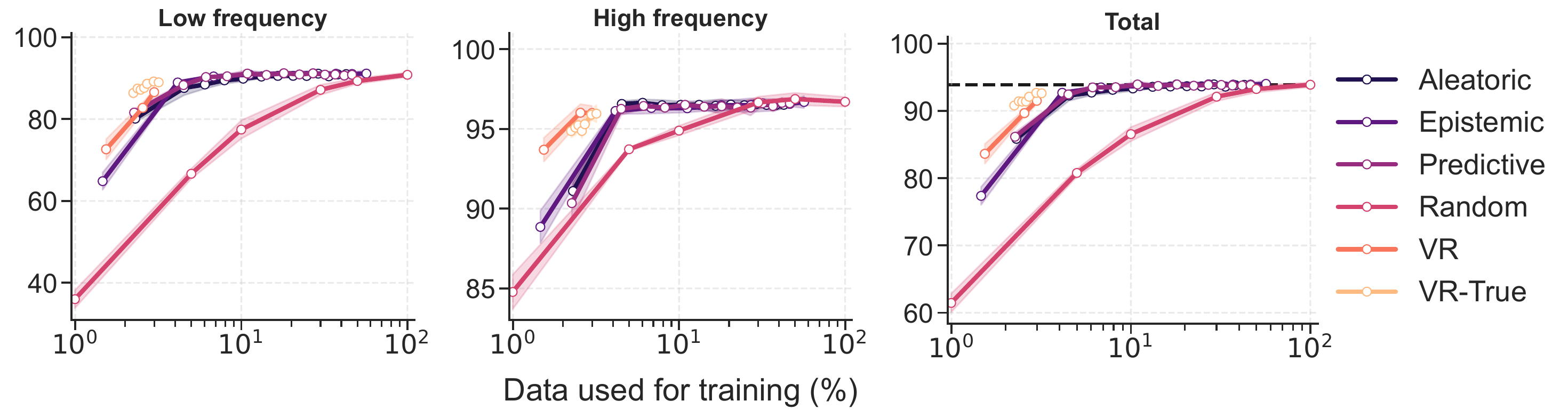}}
   \caption{OpenLORIS-Object (OCL; linear head) with 8{,}192-dimensional VGG19 features under imbalanced continual active learning using MESU with feature standardisation. Standardisation improves robustness under continual distribution shifts.}
    \label{fig:openloris-mesu-std}
\end{center}
\end{figure*}
\FloatBarrier

Fig.~\ref{fig:openloris-ste} shows the results obtained with STE. STE's performance degrades more noticeably when confronted with successive task shifts, yielding poor aleatoric measurements that are unable to capture efficient examples to learn on.

\begin{figure*}[htbp!]
\begin{center}
\vskip 0.1in
\centerline{\includegraphics[width=\textwidth]{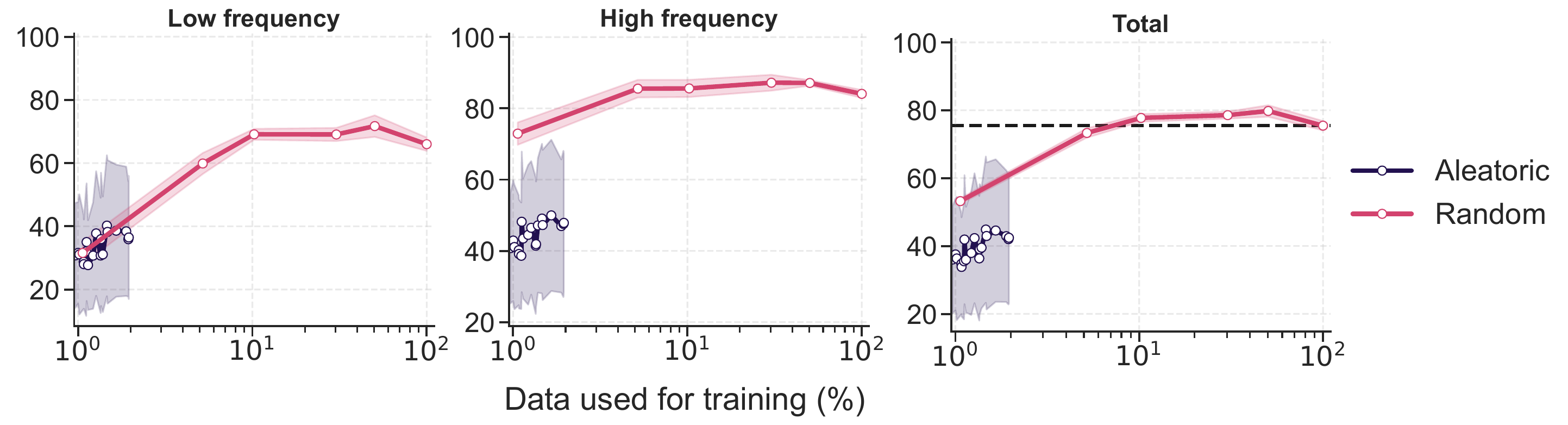}}
    \caption{OpenLORIS-Object (OCL; linear head) with standardized 8{,}192-dimensional VGG19 features under imbalanced continual active learning using STE. Performance degradation under long-tailed distributions and sequential task shifts.}
    \label{fig:openloris-ste}
\end{center}
\end{figure*}
\FloatBarrier

These complementary figures further support the main conclusions of the paper: that reliable epistemic uncertainty is critical for effective active continual learning under class imbalance, that its quality is strongly influenced by input conditioning in real-valued Bayesian methods, and that preserving uncertainty over long, non-stationary streams enables substantial reductions in labeling and update costs without sacrificing accuracy.

\section{OpenLORIS-Object Imbalanced Continual Active Learning: Number of predictors for variation ratio}\label{appendix:openloris-vr-samples}

In this appendix, we investigate the performance of variation ratio when limiting or increasing the number of predictors associated with the disagreement in the Online Continual Learning setting. Even when using few samples, variation ratio achieves strong predictive performance. For instance, using only 2 samples with 3.3\% of the data achieves $89.3\% \pm 0.88$ accuracy, while 3 samples with 3.87\% of the data reach $90.61\% \pm 0.53$. Low-sample regimes already provide high accuracy even without a large range of thresholds, denoting the ability of BiMU to distinguish well useful samples from non-informative ones. 

Increasing the number of samples generally improves accuracy and allows finer control via the threshold parameter. For example, with 10 samples and 3.97\% of the data, accuracy reaches $90.91\% \pm 0.98$, and with 25 samples using 4.81--5.63\% of the data, accuracy increases up to $91.74\% \pm 0.23$. These higher-sample regimes provide additional thresholds for flexible performance trade-offs, but require more forward passes to estimate disagreement, leading to more computational overhead. High number of predictors also induce a higher variance at low data usage due to the difference of training examples selected through seeds.

\begin{table*}[htbp!]
\caption{Ablation study of the number of samples used to compute the variation ratio uncertainty measure on OpenLORIS-Object (OCL; linear head) with 8{,}192-dimensional VGG19 features. Rows with identical samples, accuracy, and data usage are merged, and the thresholds column lists all thresholds for that configuration. Low sample regimes achieve high accuracy under strong compression. Additional samples yield better accuracy and offer more thresholds at the cost of more forward passes. Data used now shows mean $\pm$ standard deviation across final updates. Results are averaged over five runs.}
\label{tab:appendix-variation ratio}
\setlength{\tabcolsep}{3pt}
\vskip 0.15in
\centering
\small
\begin{sc}
\begin{tabular}{rrrr}
\toprule
Samples & Accuracy (\%) & Data used (\%) & Threshold \\
\midrule
2  & 89.30 $\pm$ 0.88 & 3.30 $\pm$ 0.04 & 0.50 \\
\midrule
3  & 90.61 $\pm$ 0.53 & 3.87 $\pm$ 0.05 & 0.33 \\
3  & 82.26 $\pm$ 1.15 & 1.62 $\pm$ 0.03 & 0.66 \\
\midrule
5  & 88.85 $\pm$ 0.92 & 2.91 $\pm$ 0.16 & 0.20 \\
5  & 80.86 $\pm$ 1.90 & 1.47 $\pm$ 0.16 & 0.40 -- 0.60 \\
5  & 52.21 $\pm$ 23.71 & 0.34 $\pm$ 0.19 & 0.80 \\
\midrule
10 & 90.91 $\pm$ 0.98 & 3.97 $\pm$ 0.03 & 0.10 \\
10 & 88.70 $\pm$ 1.94 & 3.10 $\pm$ 0.31 & 0.20 -- 0.30 \\
10 & 79.27 $\pm$ 4.00 & 1.37 $\pm$ 0.24 & 0.40 -- 0.50 \\
10 & 44.02 $\pm$ 32.02 & 0.42 $\pm$ 0.39 & 0.60 \\
10 & 9.37 $\pm$ 8.22 & 0.02 $\pm$ 0.04 & 0.70 \\
10 & 5.26 $\pm$ 0.00 & 0.00 $\pm$ 0.00 & 0.80 -- 0.90 \\
\midrule
25 & 91.34 $\pm$ 0.50 & 5.63 $\pm$ 0.09 & 0.04 \\
25 & 91.11 $\pm$ 0.44 & 4.40 $\pm$ 0.04 & 0.12 \\
25 & 90.93 $\pm$ 0.70 & 4.03 $\pm$ 0.05 & 0.16 \\
25 & 90.23 $\pm$ 0.68 & 3.65 $\pm$ 0.14 & 0.20 \\
25 & 89.00 $\pm$ 0.64 & 3.27 $\pm$ 0.23 & 0.24 \\
25 & 87.16 $\pm$ 1.88 & 2.93 $\pm$ 0.36 & 0.28 \\
25 & 86.40 $\pm$ 2.68 & 2.61 $\pm$ 0.44 & 0.32 \\
25 & 79.14 $\pm$ 13.18 & 1.96 $\pm$ 0.78 & 0.36 \\
25 & 66.41 $\pm$ 30.90 & 1.48 $\pm$ 0.84 & 0.40 \\
25 & 51.53 $\pm$ 37.93 & 1.03 $\pm$ 0.88 & 0.44 \\
25 & 34.82 $\pm$ 36.25 & 0.52 $\pm$ 0.65 & 0.48 \\
25 & 19.95 $\pm$ 29.37 & 0.23 $\pm$ 0.45 & 0.52 \\
25 & 19.16 $\pm$ 27.79 & 0.17 $\pm$ 0.34 & 0.56 \\
25 & 18.48 $\pm$ 26.44 & 0.13 $\pm$ 0.26 & 0.60 \\
25 & 17.01 $\pm$ 23.49 & 0.10 $\pm$ 0.19 & 0.64 \\
25 & 16.07 $\pm$ 21.61 & 0.07 $\pm$ 0.14 & 0.68 \\
25 & 5.26 $\pm$ 0.00 & 0.00 $\pm$ 0.00 & $>$ 0.72 \\
\midrule
Baseline & 87.76 $\pm$ 0.19 & 100 & - \\
\bottomrule
\end{tabular}
\end{sc}
\vskip -0.1in
\end{table*}
\FloatBarrier

\section{OpenLORIS-Object Imbalanced Continual Active Learning: Querying dynamics and temporal distribution of active updates}~\label{app:openloris-vr-learning-dynamics}

In this appendix, we further analyze when active updates occur during OpenLORIS-Object. We want to assert whether variation ratio querying reacts to distributional changes or if it selects samples uniformly throughout each task. For each task, we divide the online stream into 100 parts, with three temporal regions: the first quarter (0--24), the middle part of the task (25--74), and the last quarter (74--99). As labels are queried only when the variation ratio score exceeds the fixed threshold, the number of queried samples is also the number of backpropagation updates.

Table~\ref{tab:openloris-querying-dynamics} reports the accuracy at several checkpoints within each task, together with the percentage of updates occurring in each temporal region. The main trend is consistent across tasks: most updates occur shortly after the distribution shift. On average, $45.85\%$ of all updates are performed in the first quarter of a task, while only $12.56\%$ occur in the last quarter. Querying is concentrated when the stream changes then progressively decreases as the model adapts to the current factor.

Importantly, the update rate does not collapse to zero at the end of the task. The last quarter still accounts for $12.56\%$ of the updates on average, indicating that the posterior remains sufficiently uncertain to select informative samples after the initial adaptation phase. The model reacts strongly after a shift and does not become completely rigid once the current task has been partially learned.

\begin{table*}[htbp!]
\centering
\small
\setlength{\tabcolsep}{5pt}
\begin{sc}
\caption{Temporal distribution of variation ratio updates on OpenLORIS-Object (OCL; linear head) with 8{,}192-dimensional VGG19 features.
Accuracy is reported at fixed checkpoints within each task. Update percentages indicate
the fraction of updates occurring in the first quarter, middle part, and last quarter of the task.
Most updates occur immediately after the task shift, while a smaller but non-zero fraction remains available later in the task.}
\label{tab:openloris-querying-dynamics}

\begin{tabular}{lccccc}
\toprule
\multicolumn{6}{c}{\textbf{(a) Accuracy checkpoints}} \\
\midrule
Task
& Acc@0 (\%)
& Acc@25 (\%)
& Acc@50 (\%)
& Acc@75 (\%)
& Acc@99 (\%) \\
\midrule
Task 1
& $5.26 \pm 0.00$
& $86.40 \pm 12.42$
& $90.53 \pm 6.55$
& $92.54 \pm 4.96$
& $90.26 \pm 4.30$ \\
Task 2
& $5.26 \pm 0.00$
& $92.72 \pm 3.96$
& $92.50 \pm 4.16$
& $93.03 \pm 2.93$
& $91.01 \pm 2.79$ \\
Task 3
& $8.73 \pm 6.93$
& $94.21 \pm 1.19$
& $89.82 \pm 0.58$
& $89.47 \pm 1.68$
& $88.11 \pm 1.78$ \\
Task 4
& $8.90 \pm 7.28$
& $75.53 \pm 3.06$
& $94.61 \pm 2.59$
& $95.96 \pm 1.19$
& $91.93 \pm 1.32$ \\
Task 5
& $12.68 \pm 9.56$
& $55.18 \pm 2.04$
& $95.44 \pm 0.93$
& $94.65 \pm 0.97$
& $90.00 \pm 1.28$ \\
Task 6
& $11.45 \pm 10.22$
& $37.85 \pm 3.13$
& $93.51 \pm 1.74$
& $91.54 \pm 2.68$
& $81.14 \pm 1.21$ \\
Task 7
& $18.60 \pm 17.69$
& $66.10 \pm 4.36$
& $80.88 \pm 3.11$
& $91.89 \pm 0.31$
& $90.31 \pm 0.78$ \\
Task 8
& $18.25 \pm 16.71$
& $68.33 \pm 2.48$
& $80.44 \pm 2.69$
& $96.45 \pm 0.95$
& $93.60 \pm 0.97$ \\
Task 9
& $12.32 \pm 8.70$
& $39.61 \pm 2.42$
& $53.25 \pm 2.05$
& $91.54 \pm 1.36$
& $80.96 \pm 1.06$ \\
Task 10
& $19.78 \pm 17.86$
& $68.82 \pm 3.38$
& $70.79 \pm 2.00$
& $83.25 \pm 1.57$
& $89.78 \pm 1.40$ \\
Task 11
& $21.32 \pm 19.67$
& $64.61 \pm 5.02$
& $72.06 \pm 2.27$
& $76.18 \pm 1.36$
& $92.37 \pm 1.08$ \\
Task 12
& $14.87 \pm 11.93$
& $42.11 \pm 2.14$
& $41.58 \pm 2.92$
& $42.37 \pm 1.25$
& $94.25 \pm 2.00$ \\
\midrule
Average
& $13.12 \pm 5.29$
& $65.96 \pm 18.73$
& $79.62 \pm 16.68$
& $86.57 \pm 14.40$
& $89.48 \pm 4.10$ \\
\bottomrule
\end{tabular}

\vspace{0.15in}

\begin{tabular}{lccc}
\toprule
\multicolumn{4}{c}{\textbf{(b) Temporal distribution of updates}} \\
\midrule
Task
& Updates 0--24 (\%)
& Updates 25--73 (\%)
& Updates 74--99 (\%) \\
\midrule
Task 1
& $25.43 \pm 31.15$
& $40.19 \pm 29.00$
& $13.08 \pm 12.57$ \\
Task 2
& $41.00 \pm 20.55$
& $41.82 \pm 11.01$
& $15.66 \pm 9.49$ \\
Task 3
& $53.47 \pm 1.72$
& $33.85 \pm 1.05$
& $10.95 \pm 1.35$ \\
Task 4
& $58.65 \pm 1.75$
& $29.72 \pm 1.47$
& $9.81 \pm 1.37$ \\
Task 5
& $61.35 \pm 1.65$
& $29.07 \pm 1.63$
& $7.67 \pm 1.26$ \\
Task 6
& $56.65 \pm 3.22$
& $32.93 \pm 2.51$
& $8.26 \pm 1.02$ \\
Task 7
& $42.37 \pm 2.68$
& $40.33 \pm 2.30$
& $14.96 \pm 0.79$ \\
Task 8
& $41.28 \pm 4.12$
& $42.43 \pm 3.22$
& $14.60 \pm 1.55$ \\
Task 9
& $42.83 \pm 2.84$
& $40.16 \pm 1.31$
& $14.76 \pm 2.30$ \\
Task 10
& $40.73 \pm 1.84$
& $43.60 \pm 1.14$
& $13.65 \pm 2.15$ \\
Task 11
& $39.53 \pm 2.58$
& $42.64 \pm 1.02$
& $15.13 \pm 2.02$ \\
Task 12
& $46.93 \pm 2.25$
& $39.01 \pm 1.72$
& $12.19 \pm 1.45$ \\
\midrule
Average
& $45.85 \pm 9.70$
& $37.98 \pm 4.95$
& $12.56 \pm 2.66$ \\
\bottomrule
\end{tabular}

\end{sc}
\end{table*}
\FloatBarrier

\section{Budget-driven adaptive thresholding for variation ratio}~\label{app:adaptive-threshold}

In the main experiments, active continual learning uses a fixed one-pass threshold: an incoming example is queried if its uncertainty score exceeds $\tau$. While a fixed threshold is simple and hardware-friendly, deployment may require tighter control over the realized query rate as we do not know in advance the budget of updates that variation ratio will use.

In this appendix, we propose a fully online budget-driven controller that adapts $\tau$ from the discrepancy between the current query rate and a target budget, set in advance as a hyperparameter.

Let $B \in [0,1]$ denote the target fraction of samples to query, and let $Q_{\mathrm{\text{rate}}}(t)$ be the fraction of samples queried up to time $t$. The signed
budget error is
\begin{equation}
    \epsilon_t = Q_{\mathrm{\text{rate}}}(t) - B .
\end{equation}
A positive value indicates that the current stream has exceeded the target budget, whereas a negative value indicates that the controller is below budget. We compute a continuous threshold 
\begin{equation}
    \tau_{\mathrm{cont}}(t)
    =
    B
    +
    \mathrm{sign}(\epsilon_t)
    \left|\epsilon_t\right|^{\gamma},
    \label{eq:adaptive-threshold-cont}
\end{equation}
where $\gamma > 0$ controls the adaptation speed. When $Q_{\mathrm{\text{rate}}}(t) > B$, the threshold is increased, making the controller more selective. Conversely, when $Q_{\mathrm{\text{rate}}}(t) < B$, the threshold is decreased, making the controller more permissive.

For variation ratio querying, the uncertainty score is discrete due to being computed from a finite number of Monte Carlo predictors. With $K$ posterior samples, variation ratio can only take values in a finite set. We quantize the controller output onto the same set:
\begin{align}
    n_t &= \mathrm{round}\!\left(K \tau_{\mathrm{cont}}(t)\right),\\
    n_t &= \min\!\left(\max(n_t, 0), K\right),\\
    \tau_t &= \frac{n_t}{K}.
\end{align}
The controller is memory-free; it only requires the current cumulative query rate, the target budget, and the finite-$K$ quantization level.

We evaluate this controller on OpenLORIS-Object under the same active continual learning setting as in Sec.~\ref{sec:open-loris-al}: sequential factor analysis, frozen VGG19 features, an online linear head, $8{,}192$ randomly selected features, class imbalance, and one-pass training without replay. We use variation ratio querying with $K=10$ posterior samples. The queried fraction is reported as the percentage of samples that trigger both a label request and a parameter update.

Adaptive thresholding can recover the performance of the best fixed-threshold variation ratio while explicitly targeting an update budget. With a $3\%$ target budget and $\gamma=0.50$, the controller reaches $90.79 \pm 0.61\%$ accuracy while querying $3.96 \pm 0.03\%$ of the stream. This is close to the best fixed-threshold variation ratio result, which reaches $90.91\%$ accuracy at $3.97\%$ queried data. Budget-driven adaptive thresholding is feasible in the fully online setting.

However, the controller is sensitive to the operating regime. At very low budgets, performance degrades sharply for small $\gamma$. For example, at $B=0.5\%$ and $\gamma=0.01$, the controller queries exactly $0.50\%$ of the stream but reaches only $52.68 \pm 2.29\%$ accuracy. Here, the budget is too limited and variation ratio stays at its minimal threshold and avoids taking updates. Hence, matching the target budget does not allow to preserve enough queried samples to learn the task correctly.

To sum up, we proposed an algorithm enabling online adaptation of the querying threshold with a constraint parameter setting how tight the budget should be kept and demonstrated results on par to static thresholding.

\begin{table*}[htbp!]
\setlength{\tabcolsep}{5pt}
\vskip 0.15in
\centering
\small
\begin{sc}
\caption{Budget-driven adaptive variation ratio thresholding on OpenLORIS-Object (OCL; linear head) with 8{,}192-dimensional VGG19 features. The target budget $B$ is the desired queried fraction. The exponent $\gamma$ controls how aggressively the threshold reacts to the signed budget error. Queried samples are also the only samples used for backpropagation updates. Results are averaged over five runs.}
\label{tab:adaptive-threshold-openloris}
\begin{tabular}{llrrr}
\toprule
Method & Controller setting
& Data used (\%)
& Accuracy (\%) \\
\midrule
Random querying
& Full stream
& $100$
& $87.76 \pm 0.19$ \\
\midrule
Budgeted VR
& $B=0.5\%$, $\gamma=0.01$
& $0.50 \pm 0.00$
& $52.68 \pm 2.29$ \\
Budgeted VR
& $B=0.5\%$, $\gamma=0.10$
& $1.11 \pm 0.03$
& $79.84 \pm 0.65$ \\
Budgeted VR
& $B=0.5\%$, $\gamma=0.50$
& $3.34 \pm 0.05$
& $89.62 \pm 0.43$ \\
\midrule
Budgeted VR
& $B=1.0\%$, $\gamma=0.01$
& $1.00 \pm 0.00$
& $61.95 \pm 1.95$ \\
Budgeted VR
& $B=1.0\%$, $\gamma=0.10$
& $1.22 \pm 0.01$
& $79.53 \pm 1.40$ \\
Budgeted VR
& $B=1.0\%$, $\gamma=0.50$
& $3.34 \pm 0.05$
& $89.88 \pm 0.57$ \\
\midrule
Budgeted VR
& $B=2.0\%$, $\gamma=0.01$
& $2.00 \pm 0.01$
& $70.20 \pm 1.72$ \\
Budgeted VR
& $B=2.0\%$, $\gamma=0.10$
& $2.03 \pm 0.01$
& $85.64 \pm 1.21$ \\
Budgeted VR
& $B=2.0\%$, $\gamma=0.50$
& $3.70 \pm 0.02$
& $89.97 \pm 0.74$ \\
\midrule
Budgeted VR
& $B=3.0\%$, $\gamma=0.01$
& $3.00 \pm 0.01$
& $73.90 \pm 0.60$ \\
Budgeted VR
& $B=3.0\%$, $\gamma=0.10$
& $3.00 \pm 0.01$
& $86.79 \pm 0.71$ \\
Budgeted VR
& $B=3.0\%$, $\gamma=0.50$
& $3.96 \pm 0.03$
& $\mathbf{90.79 \pm 0.61}$ \\
\midrule
Budgeted VR
& $B=4.0\%$, $\gamma=0.01$
& $4.00 \pm 0.01$
& $76.97 \pm 0.76$ \\
Budgeted VR
& $B=4.0\%$, $\gamma=0.10$
& $4.00 \pm 0.01$
& $87.94 \pm 0.30$ \\
Budgeted VR
& $B=4.0\%$, $\gamma=0.50$
& $4.23 \pm 0.03$
& $90.56 \pm 0.30$ \\
\midrule
Budgeted VR
& $B=5.0\%$, $\gamma=0.01$
& $5.00 \pm 0.01$
& $78.34 \pm 0.60$ \\
Budgeted VR
& $B=5.0\%$, $\gamma=0.10$
& $5.00 \pm 0.01$
& $87.66 \pm 0.85$ \\
Budgeted VR
& $B=5.0\%$, $\gamma=0.50$
& $5.01 \pm 0.01$
& $90.87 \pm 0.51$ \\
\bottomrule
\end{tabular}
\end{sc}
\end{table*}
\FloatBarrier

\section{Algorithm: Binary Metaplasticity from Uncertainty }\label{appendix:alg-bimu}

\begin{algorithm*}[htbp!]
  \caption{Binary Metaplasticity from Uncertainty}
  \label{alg:metaplasticity}
  \begin{algorithmic}
    \STATE {\bfseries Input:} $M$ Batches $\{\mathcal{D}_t\}^M_{t=1}$, Monte Carlo samples $K$, number of synapses $s$, temperature $T$, prior parameter $\boldsymbol{\lambda}_{\text{prior}}$, memory window $N$, maximum learning rate $\alpha_{\text{max}}$
      \FOR{$t=1$ {\bfseries to} $M$}
        \FOR{$k=1$ {\bfseries to} $K$}
          \STATE Sample $\epsilon_k^{(1)},\ldots,\epsilon_k^{(s)} \overset{\text{\emph{i.i.d.}}}{\sim} \mathcal{U}(0,1)$
          \STATE $\boldsymbol{\delta}_k = \frac{1}{2} \left( \log(\boldsymbol{\epsilon}_k) - \log(1 - \boldsymbol{\epsilon}_k) \right)$
          \STATE Compute the Gumbel-softmax trick $\boldsymbol{\omega}_k = \tanh\left( \frac{1}{T} (\boldsymbol{\lambda}_{t-1} + \boldsymbol{\delta}_k) \right)$
          \STATE Compute loss $\mathcal{L}(\boldsymbol{\omega}_k, \mathcal{D}_t)$
          \STATE Compute gradient of the loss function $\frac{\partial \mathcal{L}(\boldsymbol{\omega}_k, \mathcal{D}_t)}{\partial \boldsymbol{\omega}_k}$
          \STATE Compute gradient over $\boldsymbol{\lambda_{t-1}}$ 
          \STATE $\displaystyle
          \frac{\partial \mathcal{L}(\boldsymbol{\omega}_k, \mathcal{D}_t)}{\partial \boldsymbol{\lambda}_{t-1}} \leftarrow\frac{\partial \mathcal{L}(\boldsymbol{\omega}_k, \mathcal{D}_t)}{\partial \boldsymbol{\omega}_k} \odot \frac{1-\boldsymbol{\omega}_k^2}{T}$
        \ENDFOR
        \STATE Estimate expected gradients:
        \STATE $\displaystyle 
          \frac{\partial \mathcal{L}(\bm{\lambda}, \mathcal{D}_{t})}{\partial \boldsymbol{\lambda}}\Big\rvert_{\bm{\lambda} = \bm{\lambda}_{t-1}}
          \leftarrow \frac{1}{K}\sum_{k=1}^{K}
          \left[ \frac{\partial \mathcal{L}(\boldsymbol{\omega}_k, \mathcal{D}_t)}{\partial \boldsymbol{\lambda}_{t-1}}  \right]$ 
        \STATE Compute adaptative learning rate:
        \STATE $\displaystyle
          \eta(\boldsymbol{\lambda}_{t-1}) = \frac{1}{\frac{1}{\cosh^{2}(\boldsymbol{\lambda}_{t-1})} + 2\tanh(\boldsymbol{\lambda_{t-1}}) 
    \frac{\partial \mathcal{L}(\bm{\lambda}, \mathcal{D}_{t})}{\partial \boldsymbol{\lambda}}\Big\rvert_{\bm{\lambda} = \bm{\lambda}_{t-1}} + \frac{1}{\alpha_{\mathrm{max}}} + 2 \left| 
    \frac{\partial \mathcal{L}(\bm{\lambda}, \mathcal{D}_{t})}{\partial \boldsymbol{\lambda}}\Big\rvert_{\bm{\lambda} = \bm{\lambda}_{t-1}}
    \right|}$
        \STATE Update parameters:
        \STATE $\displaystyle
          \boldsymbol{\lambda}_{t} =  \boldsymbol{\lambda}_{t-1} - \eta(\boldsymbol{\lambda}_{t-1}) \odot\left( \frac{\partial \mathcal{L}(\bm{\lambda}, \mathcal{D}_{t})}{\partial\boldsymbol{\lambda}}\Big\rvert_{\bm{\lambda} = \bm{\lambda}_{t-1}}+\frac{(\boldsymbol{\lambda}_{t-1}-\boldsymbol{\lambda}_{\text{prior}})}{N \cdot \cosh^{2}(\boldsymbol{\lambda}_{t-1})} \right)$
      \ENDFOR
  \end{algorithmic}
\end{algorithm*}
\FloatBarrier

\section{Computational Cost Evaluation}\label{appendix:compute-cost}

The proposed active learning strategy presented in this paper relies on the fact that in binary neural networks, inference is substantially cheaper than gradient-based updates. BiMU exploits this asymmetrical cost by using repeated low-cost Bayesian inference passes to estimate uncertainty, while triggering expensive backpropagation updates only for informative samples. In this appendix, we ask whether the overhead induced by Monte Carlo Bayesian sampling remains smaller than the compute saved by avoiding most training updates.

We evaluate this trade-off directly on an embedded device. We measure the inference and training costs of BiMU, BayesBiNN, STE, and Synaptic Metaplasticity on a microcontroller unit (MCU) implementation, and analyze the effective compute cost of active continual learning under realistic querying rates.

More specifically, we address two claims:
\textbf{(i)} binary inference is substantially cheaper than online training updates, even in Bayesian settings requiring multiple stochastic forward passes;
\textbf{(ii)} BiMU reduces overall compute despite the overhead of Monte Carlo samples to estimate uncertainty.

We implemented all methods on an STMicroelectronics STM32 NUCLEO-64 MCU board running at~216 MHz. Forward passes were implemented using 32-bit signed integers, while backward passes used 32-bit floating-point operations due to gradient estimation requiring non-integer values. The measurements in Table~\ref{tab:nucleo_benchmark} were obtained on a reference layer with $N_{in}=512$ and $N_{out}=16$. 

To estimate the runtime of the OpenLORIS experiments, we project these measurements to the architecture used in the main benchmark, with $N_{in}=8192$ and $N_{out}=19$. This corresponds to a $19\times$ increase in the number of layer operations relative to the measured reference layer. As all compared methods use the same  classifier structure and the same MCU implementation constraints, this projection preserves the relative compute ratios between inference and training while reflecting the substantially larger dimensionality of the OpenLORIS feature space.

We note that binary inference benefits from sub-byte operations and popcount instructions, whereas training remains dominated by floating-point computations and memory writes. Consequently, the gap measured here between inference and training cost in this appendix is likely conservative with respect to dedicated binary hardware accelerators.

\begin{table}[H]
\caption{NUCLEO-64 benchmark results (216 MHz). Latency and CPU cycles correspond to the mean $\pm$ standard deviation over 400 iterations for a layer with $N_{in}=512$ and $N_{out}=16$.}
\label{tab:nucleo_benchmark}
\vskip 0.15in
\begin{center}
\begin{small}
\begin{sc}
\begin{tabular}{lcccc}
\toprule
& \multicolumn{2}{c}{Inference} & \multicolumn{2}{c}{Training} \\
\cmidrule(lr){2-3} \cmidrule(lr){4-5}
Algorithm & Time ($\mu s$) & Cycles & Time ($\mu s$) & Cycles \\
\midrule
BiMU & $717.93 \pm 0.37$ & 155,170 & $13991.27 \pm 3.72$ & 3,022,211 \\
BayesBiNN & $718.01 \pm 0.37$ & 155,195 & $13117.61 \pm 5.27$ & 2,833,522 \\
STE & $310.85 \pm 0.83$ & 67,233 & $3643.02 \pm 2.69$ & 786,999 \\
Syn. Meta. & $310.91 \pm 0.81$ & 67,243 & $7723.78 \pm 3.82$ & 1,668,432 \\
\bottomrule
\end{tabular}
\end{sc}
\end{small}
\end{center}
\vskip -0.1in
\end{table}

Table~\ref{tab:nucleo_benchmark} supports claim~\textbf{(i)} by showing that, for BiMU specifically, the forward pass requires only $C_{\text{inference}} = 0.72$~ms, whereas the backward update requires $C_{\text{train}} = 14.0$~ms, making inference approximately $19.4\times$ cheaper than training. Similar trends are observed across all binary methods, confirming that the dominant computational cost arises from parameter updates rather than inference itself.

\begin{table}[H]
\caption{Projected compute cost on the OpenLORIS-8192 architecture ($N_{in}=8192$, $N_{out}=19$).}
\label{tab:perf_stm32_8192}
\vskip 0.15in
\begin{center}
\begin{small}
\begin{sc}
\begin{tabular}{lcccc}
\toprule
Algorithm & Inference ($ms$) & Training ($ms$) & Expected compute per data point ($ms$) & Accuracy ($\%$) \\
\midrule
BiMU (Active Learning) & $13.7$ & $266.0$ & $45.0$ & $89.30$\\
\midrule
BiMU & $13.7$ & $266.0$ & $559.4$ & $90.98$ \\
BayesBiNN & $13.6$ & $249.2$ & $525.6$ & $84.18$ \\
STE & $5.9$ & $62.2$ & $62.2$ & $77.69$ \\
Syn. Meta. & $5.9$ & $146.7$ & $146.7$ & $79.28$ \\
\bottomrule
\end{tabular}
\end{sc}
\end{small}
\end{center}
\vskip -0.1in
\end{table}

We next evaluate whether the reduction in update frequency enabled by active learning compensates for the additional Monte Carlo forward passes required for uncertainty estimation. We consider the OpenLORIS-8192 setting used in the main text. Table~\ref{tab:appendix-variation ratio} shows that BiMU reaches $89.30\%$ accuracy using only $K=2$ Monte Carlo samples while querying and updating on only $B_{\text{rate}} = 3.3\%$ of the stream. We also consider $K=2$ Monte Carlo samples for the backward pass of BiMU and BayesBiNN.

Under this setting, the expected compute cost per incoming data point for BiMU with active learning is

\begin{equation}
    C_{\text{AL}}
    =
    K \times
    \left(
    C_{\text{inference}}
    +
    B_{\text{rate}} C_{\text{train}}
    \right),
\end{equation}

where $K$ is the number of Monte Carlo predictors and $B_{\text{rate}}$ the fraction of queried samples triggering backpropagation. By contrast, BiMU and BayesBiNN full online training performs one inference and one update for every Monte Carlo predictor:

\begin{equation}
    C_{\text{B}}
    = K\times(
    C_{\text{inference}}
    +
    C_{\text{train}}).
\end{equation}

Finally, STE and Synaptic Metaplasticity only require their backward cost.

Table~\ref{tab:perf_stm32_8192} supports claim~\textbf{(ii)}. Despite performing multiple stochastic forward passes per data point, BiMU with active learning reduces the expected compute cost to $45.0$~ms, compared to $559.4$~ms for full BiMU training and $525.6$~ms for BayesBiNN. This corresponds to approximately a $12.4\times$ reduction relative to full BiMU training while maintaining comparable accuracy ($89.30\%$ versus $90.98\%$). STE remains computationally very cheap because it does not provide uncertainty estimates: this comes at a substantial accuracy degradation in accuracy ($77.69\%$). 

Overall, the results show that the low cost of binary inference is sufficient to amortize the overhead of Bayesian uncertainty estimation when active querying significantly reduces the number of gradient updates, supporting claims \textbf{(i)} and \textbf{(ii)}.

\end{document}